\documentclass[11pt]{article}
\usepackage[round]{natbib}

\usepackage{xr-hyper}
\usepackage{amsmath,amssymb,amsthm}
\usepackage{bm,bbm,commath,dsfont}
\usepackage{fullpage}
\usepackage{hyperref,cleveref}
\usepackage{mathrsfs}
\usepackage{xcolor}
\usepackage{enumitem}
\usepackage{graphicx}

\usepackage{booktabs}
\usepackage{authblk}

\usepackage{algorithm}
\usepackage{algpseudocode}
\algnewcommand\algorithmicinput{\textbf{INPUT:}}
\algnewcommand\INPUT{\item[\algorithmicinput]}
\algnewcommand\algorithmicoutput{\textbf{OUTPUT:}}
\algnewcommand\OUTPUT{\item[\algorithmicoutput]}

\usepackage{pifont}
\newcommand{\xmark}{\ding{55}}
\newcommand{\cmark}{\ding{51}} 

\definecolor{darkred}{RGB}{139,0,0}
\definecolor{darkgrey}{rgb}{0.4, 0.0, 0.2}

\usepackage{hyperref}[]
\hypersetup{
    colorlinks=true,
    linkcolor=blue,
    filecolor=blue,      
    urlcolor=cyan,
    citecolor=red,
    }

\newtheorem{lemma}{Lemma}
\newtheorem{proposition}{Proposition}
\newtheorem{corollary}{Corollary}

\newtheorem{definition}{Definition}
\newtheorem{remark}{Remark}
\newtheorem{theorem}{Theorem}
\newtheorem{assumption}{Assumption}
\newtheorem{example}{Example}

\newcommand{\op}{\text{op}}

\newcommand{\var}{\mathrm{Var}}

 \DeclareMathOperator*{\argmin}{arg\,min}

\newcommand{\R}{\mathbb{R}}

\newcommand{\K}{\mathcal{K}}
\newcommand{\lft}{\left}
\newcommand{\rgt}{\right}

\newcommand{\calL}{{\mathcal{L}}}

\newcommand{\bbN}{{\mathbb{N}}}

\newcommand{\what}[1]{\widehat{#1}}

\newcommand{\tbar}{{\underline{ \mathsf t }}}

\newcommand{\real}{{\mathbb{R}}}

\newcommand{\calN}{{\mathcal{N}}}

\newcommand{\calD}{{\mathcal{D}}}

\newcommand{\bbP}{{\mathbb{P}}}
\newcommand{\rmW}{{\mathrm{W}}}

\newcommand{\be}{\mathbf{e}}

\newcommand{\bu}{\mathbf{u}}
\newcommand{\bv}{\mathbf{v}}
\newcommand{\bw}{\mathbf{w}}
\newcommand{\bx}{\mathbf{x}}

\newcommand{\by}{\mathbf{y}}

\newcommand{\balpha}{\bm{\alpha}}

\newcommand{\bgamma}{\bm{\gamma}}

\newcommand{\wgnp}{{\what{g}^{(n)}({\bf{Z^\prime}})}}
\newcommand{\wgn}{{\what{g}^{(n)}({\bf{Z}})}}
\newcommand{\wgnd}{{\what{g}^{(n)}({\bf{Z}},{\bf{Z^\prime}})}}

\newcommand{\gnkp}{{g}^{(n)}_k({\bf{Z^\prime}})}
\newcommand{\gnkd}{{g}^{(n)}_k  ({\bf{Z}}, {\bf{Z^\prime}}) }

\def\one{\mathbbm 1}

\def\var{\mathsf{Var}}

\def\d{\textup{d}}

\def\R{\mathbb{R}}

\makeatletter
\def\greekvectors#1{%
 \@for\next:=#1\do{%
    \def\X##1;{\expandafter\def\csname b##1\endcsname{\bm{\csname##1\endcsname}}}
    \expandafter\X\next;}
 \@for\next:=#1\do{%
    \def\X##1;{\expandafter\def\csname h##1\endcsname{\widehat{\csname##1\endcsname}}}
    \expandafter\X\next;}
 \@for\next:=#1\do{%
    \def\X##1;{\expandafter\def\csname c##1\endcsname{\check{\csname##1\endcsname}}}
    \expandafter\X\next;}
 \@for\next:=#1\do{%
    \def\X##1;{\expandafter\def\csname hb##1\endcsname{\widehat{\bm{\csname##1\endcsname}}}}
    \expandafter\X\next;}
}
\greekvectors{alpha,beta,gamma,delta,epsilon,zeta,theta,kappa,lambda,
    mu,nu,xi,pi,rho,tau,phi,chi,psi,omega,
    Delta,Gamma,Theta,Lambda,Xi,Pi,Sigma,Phi,Psi,Omega
}
\@tfor\next:=abcfghijklmnopqrstuvwxyzABCDGHIJKLMOQSTUVWXYZ\do{%
    \def\command@factory#1{\expandafter\def\csname #1\endcsname{\mathbf{#1}} }
    \expandafter\command@factory\next
}
\@tfor\next:=abcdefghijklmnpqrstuvwxyzABCDEFGHIJKLMNOPQRSTUVWXYZ\do{%
    \def\command@factory#1{\expandafter\def\csname t#1\endcsname{\widetilde{#1}} }
    \expandafter\command@factory\next
}
\@tfor\next:=abcdefghijklmnopqrstuvwxyzABCDEFGHIJKLMNOPQRSTUVWXYZ\do{%
    \def\command@factory#1{\expandafter\def\csname tb#1\endcsname{\tilde{\mathbf{#1}}} }
    \expandafter\command@factory\next
}
\@tfor\next:=abcdefghijklmnopqrstuvwxyzABCDEFGHIJKLMNOPQRSTUVWXYZ\do{%
    \def\command@factory#1{\expandafter\def\csname hb#1\endcsname{\widehat{\mathbf{#1}}} }
    \expandafter\command@factory\next
}
\@tfor\next:=ABCDEFGHIJKLMNOPQRSTUVWXYZ\do{%
    \def\command@factory#1{\expandafter\def\csname b#1\endcsname{\mathbbm{#1}} }
    \expandafter\command@factory\next
}
\@tfor\next:=ABCDEFGHIJKLMNOPQRSTUVWXYZ\do{%
    \def\command@factory#1{\expandafter\def\csname c#1\endcsname{\mathcal{#1}} }
    \expandafter\command@factory\next
}
\@tfor\next:=abcdefghjklmnopqrstuvwxyzABCDEFGHIJKLMNOPQRSTUVWXYZ\do{
    \def\command@factory#1{\expandafter\def\csname f#1\endcsname{\mathfrak{#1}} }
    \expandafter\command@factory\next
}
\@tfor\next:=abcdefghjklmnopqrstuvwxyzABCDEFGHIJKLMNOPQRSTUVWXYZ\do{%
    \def\command@factory#1{\expandafter\def\csname s#1\endcsname{\mathsf{#1}} }
    \expandafter\command@factory\next
}

\newcommand{\lrs}[1]{\left( #1 \right)}
\newcommand{\lrm}[1]{\left\{ #1 \right\}}
\newcommand{\lrl}[1]{\left[ #1 \right]}
\newcommand{\lrvert}[1]{\left| #1 \right|}
\newcommand{\lrnorm}[1]{\left\| #1 \right\|}
\newcommand{\lrangle}[1]{\left\langle #1 \right\rangle}

\newcommand{\proj}{{\mathrm{Proj}}}

\newcommand{\Wslice}{\mathrm{W}_{1,\mathrm{slice}}}
\newcommand{\WsliceD}{\mathrm{W}_{1,\mathrm{slice}}^{784}}
\newcommand{\WsliceBL}{\mathrm{W}_{1,\mathrm{slice}}^{\mathrm{BL}}}
\newcommand{\Wstd}{\mathrm{W}_{1,\mathrm{slice}}^{\mathrm{std}}}
\newcommand{\WstdBL}{\mathrm{W}_{1,\mathrm{slice}}^{\mathrm{std,BL}}}
\DeclareMathOperator{\Cov}{Cov}
\DeclareMathOperator{\Vol}{Vol}
\DeclareMathOperator{\Var}{Var}

\date{\vspace{-5ex}}
\title{Flow Matching is Adaptive to Manifold Structures}

\begin{document}
\author[1]{Shivam Kumar}
\author[2]{Yixin Wang}
\author[3]{Lizhen Lin}
\affil[1]{Booth School of Business, University of Chicago}
\affil[2]{Department of Statistics, University of Michigan}
\affil[3]{Department of Mathematics, University of Maryland, College Park}
\maketitle
\begin{abstract}
    Flow matching has emerged as a simulation-free alternative to diffusion-based generative modeling, producing samples by solving an ODE whose time-dependent velocity field is learned along an interpolation between a simple source distribution (e.g., a standard normal) and a target data distribution. Flow-based methods often exhibit greater training stability and have achieved strong empirical performance in high-dimensional settings where data concentrate near a low-dimensional manifold, such as text-to-image synthesis, video generation, and molecular structure generation. Despite this success, existing theoretical analyses of flow matching assume target distributions with smooth, full-dimensional densities, leaving its effectiveness in manifold-supported settings largely unexplained. To this end, we theoretically analyze flow matching with linear interpolation when the target distribution is supported on a smooth manifold. We establish a non-asymptotic convergence guarantee for the learned velocity field, and then propagate this estimation error through the ODE to obtain statistical consistency of the implicit density estimator induced by the flow-matching objective. The resulting convergence rate is near minimax-optimal, depends only on the intrinsic dimension, and reflects the smoothness of both the manifold and the target distribution. Together, these results provide a principled explanation for how flow matching adapts to intrinsic data geometry and circumvents the curse of dimensionality. 
\end{abstract}

\section{Introduction} 







Flow matching~\citep{albergo2023stochastic, liu2022flow, albergo2022building, lipman2022flow} has recently emerged as a simulation-free alternative to diffusion-based generative modeling, producing samples by solving an ordinary differential equation (ODE) whose time-dependent velocity field transports probability mass between distributions. Unlike diffusion models, which rely on stochastic perturbations and reverse-time SDE simulation, flow matching learns a deterministic transport map along a prescribed interpolation between a simple source distribution (e.g., a standard normal) and a target data distribution. 

The deterministic formulation of flow matching yields favorable computational properties, including stable training, flexible discretization at sampling time, and compatibility with modern continuous normalizing flow (CNF) architectures~\citep{lipman2022flow,liu2022flow}. Empirically, flow matching has achieved strong performance in high-dimensional generative tasks such as text-to-image synthesis, video generation, and molecular structure modeling, where data are known to concentrate near low-dimensional manifolds~\citep{bose2023se, graham2024proceedings, esser2024scaling, ma2024sit}.



Despite this empirical success,  theoretical foundations of flow matching remain limited. Existing analyses typically assume that the target distribution admits a smooth, full-dimensional density with respect to Lebesgue measure. This assumption is misaligned with many modern applications, where the data distribution is intrinsically low-dimensional and supported on or near a smooth manifold embedded in a high-dimensional ambient space. As a result, current theory does not explain why flow matching avoids the curse of dimensionality in practice, nor how its performance depends on intrinsic geometric structure. 

To formalize this setting, we observe an i.i.d. dataset $\calD_1 = \lft\{X_{1,j} \rgt\}_{j=1}^n$, where $X_1 \sim \bpi_1$ is drawn from a target distribution supported on a $\sd$-dimensional manifold $\cM$ embedded in the ambient space $\bR^\sD$. Flow matching constructs a continuous probability path \((X_t)_{t\in[0,1]}\) connecting a simple reference distribution \(\bpi_0\), from which sampling is straightforward, to the target distribution \(\bpi_1\). This path is governed by a time-dependent vector field \(v^\star: \bR^\sD \times [0,1] \to \bR^\sD\), and the state evolves according to the transport ODE
\begin{equation}\label{eq: FM}
\frac{dX_t}{dt} = v^\star(X_t,t), \qquad X_0 \sim \bpi_0 = {\mathtt{N}}(\bm{0},\bI_\sD), \qquad X_1 \sim \bpi_1.
\end{equation}
The goal of flow matching is to estimate the velocity field \(v^\star\) from data. Once an estimate \(\what{v}\) is obtained,  approximate samples of \(\bpi_1\) are generated  by drawing \(X_0\sim \bpi_0\) and numerically integrating the ODE \eqref{eq: FM} forward in time from \(t=0\) to \(t\approx1\).

When \(\bpi_1\) is supported on a manifold, it is singular with respect to Lebesgue measure, so the appropriate statistical target is the pushforward distribution induced by the learned dynamics. We therefore treat \(\widehat{\bpi}_{1-\tbar}\) as an implicit estimator of \(\bpi_1\) (see \eqref{eq: neural ode}), and derive non-asymptotic convergence bounds that are intrinsically nonparametric and governed by the manifold dimension.

We provide a theoretical analysis of distribution estimation using flow matching with linear interpolation, in the manifold-supported setting. Our analysis yields non-asymptotic convergence guarantees for estimating the velocity field and propagate this estimation error through the transport ODE to obtain statistical consistency of the implicit density estimator. The resulting convergence rates are near minimax-optimal, depend only on the intrinsic dimension $\sd$, and capture the smoothness of both the manifold and the target distribution.

Together, these results provide a principled explanation for why flow matching can adapt to intrinsic geometry and mitigate the curse of dimensionality.

\subsection{List of contributions}
We briefly summarize the main contributions of this paper as follows.
\begin{itemize}[leftmargin=*,itemsep=0pt]
    \item We provide a non-asymptotic error analysis of flow matching with linear interpolation when the target distribution is supported on a low-dimensional manifold embedded in \(\bR^{\sD}\). The resulting rate is near-minimax optimal and depends only on structural properties of the target distribution.
    
    \item Our convergence guarantees show that flow matching adapts to the manifold structure of the data: the statistical complexity is governed by the intrinsic dimension rather than the ambient dimension. To the best of our knowledge, this is the first work to develop a finite-sample error analysis of flow matching in the manifold-supported setting.
    
    \item We establish consistency rates for estimating the velocity field \(v^\star(\x,t)\). In particular, the estimator attains fast convergence for times bounded away from \(t=1\), while the rate deteriorates as \(t\to 1\) due to the singular behavior of the linear-path velocity field.
\end{itemize}

\subsection{Other relevant literature}

In the context of manifold-based generative modeling, our work is most closely related to \citet{tang2024adaptivity, judith2024convergence}, which develop diffusion-model theory showing how diffusion adapts to data geometry. While conceptually aligned, our setting differs in a fundamental way: flow matching is a {simulation-free} alternative to diffusion, with a distinct training objective and proof strategy. Accordingly, our technical approach is closer in spirit to the tools used in \citet{gao2024gaussian} and \citet{Kunkel2025_1000188527} to derive non-asymptotic convergence guarantees. The work of \citet{chen2023flow} studies an empirical form of flow matching on manifolds in a different regime, where both the learned velocity field and the induced flow remain entirely supported on the manifold. In contrast, our analysis allows the dynamics to evolve in the ambient space, while still adapting to the intrinsic geometry through the target distribution.

A few recent works study error analysis and convergence rates for flow matching \citep{gao2024convergence, marzouk2024distribution, fukumizu2024flow, kunkel2025distribution, zhouerror}. However, these results focus on targets supported in the full ambient space and do not explicitly exploit manifold geometry. In particular, the rates in \citet{gao2024convergence} and \citet{zhouerror} are not near minimax-optimal. Concurrent work by \citet{roy2026low} establishes iteration complexity bounds for rectified flow that adapt to the intrinsic dimension of the target support.

Beyond statistical error analysis, flow matching has also been studied from several complementary perspectives, including deterministic straightening \citep{liu2022flow, bansal2024wasserstein,  kornilov2024optimal}, fast sampling \citep{hu2024flow, gui2025depthfm}, latent structures \citep{dao2023flow, hu2024latent}, and discrete analogues \citep{davis2024fisher, gat2024discrete, su2025theoretical, cheng2025alpha}, among others.



\subsection{Notations}
We write $\bbN$ for the positive integers and $\R^m$ for $m$-dimensional Euclidean space.
For $r>0$ and $\x\in\R^{\sD}$, $\bB_r(\x)$ denotes the (closed) Euclidean ball of radius $r$ centered at $\x$. We use $a\vee b:=\max\{a,b\}$ and $a\wedge b:=\min\{a,b\}$.  Scalars are denoted by lower-case letters, vectors by bold lower-case (e.g.\ $\bx$), and matrices by bold upper-case (e.g.\ $\A$). We write \(\bI_{\sD}\in\bR^{\sD\times\sD}\) for the identity matrix. For \(p\in[1,\infty]\), \(\|\cdot\|_p\) denotes the usual \(\ell_p\) norm (and the induced operator norm for matrices). For a function \(f\), \(\|f\|_\infty := \sup_{x}|f(x)|\). The indicator of an event \(A\) is denoted by \(\one_A\). For sequences $a_n,b_n\ge 0$, we write $a_n\lesssim b_n$ if there exists an absolute constant $C>0$ (independent of $n$) such that $a_n\le C b_n$; similarly $a_n\gtrsim b_n$ and $a_n\asymp b_n$. We use $\mathcal{O}(\cdot)$ and $o(\cdot)$ in the standard sense. We write \(\mathtt{N}(\bm{m},\bSigma)\) for a Gaussian distribution with mean \(\bm{m}\) and covariance \(\bSigma\). We denote probability and expectation by \(\bP\) and \(\bE\), and conditional expectation by \(\bE[\cdot\,|\,\cdot]\). For two probability densities $\mu,\nu$ on $\R^{\sD}$ with finite $p$-th moments, $\rmW_p(\mu,\nu)$ denotes the $p$-Wasserstein distance. For a multi-index $\balpha=(\alpha_1,\dots,\alpha_{\sd})\in\bbN^{\sd}$, let $|\balpha| := \sum_{j=1}^{\sd}\alpha_j$ and $\partial^{\balpha} := \partial_1^{\alpha_1}\cdots\partial_{\sd}^{\alpha_{\sd}}$. For $\beta>0$ and a domain $D\subset\R^{\sd}$, the $\beta$-H\"older class $\mathcal{H}_{\sd}^{\beta}(D,K)$ is
\begin{equation*}
    \begin{aligned}
        \mathcal{H}_{\sd}^{\beta}(D,K) &:= \Bigg\{f:D\to\R \;:\;\sum_{|\balpha|<\beta}\|\partial^{\balpha}f\|_\infty+
        \\
        &\sum_{|\balpha|=\lfloor \beta\rfloor}\sup_{\substack{\bu_1,\bu_2\in D\\ \bu_1\neq \bu_2}} \frac{|\partial^{\balpha}f(\bu_1)-\partial^{\balpha}f(\bu_2)|}{\|\bu_1-\bu_2\|_\infty^{\beta-\lfloor\beta\rfloor}} \le K \Bigg\}.
    \end{aligned}
\end{equation*}

A map $f:\R^{\sD}\to\R^{m}$ is $L$-Lipschitz if $\|f(\x)-f(\y)\|_\infty \le L\|\x-\y\|_\infty$ for all $\x,\y$. { For $\A\in\R^{D\times D}$, the \emph{logarithmic norm} with respect to the $\ell_2$-norm is
\begin{equation}\label{eq:lognorm}
    \mu_2(\A) \;:=\; \lambda_{\max}\!\!\left(\frac{\A+\A^\top}{2}\right).
\end{equation}
}
\section{Flow matching}  
The evolution of the probability density $\bpi_t(\x)$ associated with a flow $(X_t)_{t \in [0,1]}$ is governed by the continuity (or transport) equation:
\begin{equation}\label{eq:continuity}
    \begin{aligned}
        &\partial_t \bpi_t(\x) + \nabla\cdot\big(\bpi_t(\x)v^\star(\x,t)\big)= 0,
        \\
        &\bpi_0(\x) = (\sqrt{2\pi})^{-\sD/2} \exp (-|\x|_2^2/2), \quad \bpi_1(\x).
    \end{aligned}
\end{equation}
A popular strategy is to construct a coupling \((X_0,X_1)\) and define an interpolation
\(X_t=\sF(X_0,X_1,t)\) for \(t\in[0,1]\). The resulting curve \((X_t)_{t\in [0,1]}\) induces a time-dependent
velocity field. Under appropriate regularity assumptions on the interpolation path, it is known \citep[Theorem~6]{albergo2023stochastic} that the velocity field $v^\star$ is given by the conditional expectation
\[
{{v^\star(\x,t)}} = \bE\lrl{\dot X_t \,\middle|\, X_t=\x}.
\]
\paragraph{Linear interpolation.}
Throughout this paper we focus on flow matching with the \emph{linear} interpolation path
\begin{equation}\label{eq:linear_flow}
    X_t := tX_1 + (1-t)X_0,\qquad t\in[0,1],    
\end{equation}
where \(X_0\sim \bpi_0 = \mathtt{N}(\bm{0},\bI_\sD)\) and \(X_1\sim \bpi_1\) (with \(X_0\) independent of \(X_1\)). Since
\(\dot X_t = X_1 - X_0\), the induced velocity field admits the conditional-expectation
representation
\begin{equation}\label{eq: VF}
    \begin{aligned}
        v^\star(\x,t) &= \bE\lrl{X_1 - X_0|X_t=\x} 
        \\
        &= \frac{1}{1-t} \left[ \frac{\int_{\y \in \cM} \,\y\,\bpi_1(\y)\, e^{-\frac{|\x -t\y|_2^2}{2(1-t)^2}} \, d\y}{\int_{\y \in \cM} \, \bpi_1(\y) e^{-\frac{|\x -t\y|_2^2}{2(1-t)^2}} \, d\y} - \x\right],
    \end{aligned}
\end{equation}
with a short derivation deferred to \Cref{sec: vf expression}. The derivation uses the linearity of the flow \eqref{eq:linear_flow} so that the instantaneous change is independent of time apart from the interpolation weights. Linear-interpolation flow matching has demonstrated strong empirical performance in large-scale generative modeling \citep{liu2022flow, tong2023improving,esser2024scaling}.
\paragraph{Optimization.}
Learning the velocity field in this setting amounts to formulating an optimization problem whose solution recovers $v^\star$ as in \eqref{eq: VF}. Consider the population risk functional
\begin{equation}\label{eq: opt alt}
\min_{u}\ \mathcal{L}(u) \quad \text{where}\quad \mathcal{L}(u) := \int_{0}^{1}\mathbb{E}\!\left[\bigl\|u(X_t,t)-\dot X_t\bigr\|_2^2\right]dt .
\end{equation}
In \Cref{lemma: Optimizer}, we show that \(v^\star\) is a minimizer of \(\mathcal{L}\), i.e.,
\(v^\star\in\arg\min_{u}\mathcal{L}(u)\).

\subsection{Neural network class}\label{sec: NN class}
A neural network with $\sL\in\bN$ layers, $n_l\in\bN$ many nodes at the $l$-th hidden layer for $l=1,\dots, \sL$, input of dimension $n_0$, output of dimension $n_{L+1}$ and nonlinear activation function ReLU $\rho:\R\to\R$ is expressed as
    \begin{equation}
    \label{eq:nn}
        \sN_\rho(\x|\btheta):=\sA_{\sL+1}\circ\sigma_{\sL}\circ\sA_{\sL}\circ\cdots \circ\sigma_1\circ\sA_1(\x),
    \end{equation}
where $\sA_l:\R^{n_{l-1}}\to \R^{n_l}$ is an affine linear map defined by $\sA_l(\x)=\W_l\x+\b_l$ for given  $n_l\times n_{l-1}$ dimensional weight matrix $\W_l$ and $n_l$ dimensional bias vector $\b_l$ and $\sigma_l:\R^{n_l}\to\R^{n_l}$ is an element-wise nonlinear activation map defined by $\sigma_l(\z):=(\sigma(z_1),\dots, \sigma(z_{n_l}))^\top$. We use $\btheta$ to denote the set of all weight matrices and bias vectors  $\btheta:=\del[1]{(\W_1,\b_1),(\W_2,\b_2),\dots, (\W_{L+1}, \b_{L+1})}$. 

\
\\
Following a standard convention, we say that $\sL(\btheta)$ is the {depth} of the deep neural network and $n_{\max}(\btheta)$ is the {width}. We let $\abs{\btheta}_0$ be the number of nonzero elements of $\btheta$, i.e.,
      \begin{equation*}
         \abs{\btheta}_0:=\sum_{l=1}^{\sL+1}\left( \abs{\text{vec}(\W_l)}_0 +\abs{\b_l}_0\right),
      \end{equation*}
where $\text{vec}(\W_l)$ transforms the matrix $\W_l$ into the corresponding vector by concatenating the column vectors. We call $\abs{\btheta}_0$ {sparsity} of the deep neural network. Let $\abs{\btheta}_\infty$ be the largest absolute value of elements of $\btheta$, i.e., 
    \begin{equation*}
        \abs{\btheta}_\infty :=\max \left\{ \max_{1\le l\le \sL+1} \abs{\text{vec}(\W_l)}_\infty, 
\max_{1\le l\le L+1} \abs{\b_l}_\infty\right\}.
    \end{equation*}
We denote by $\Theta_{\sd,\so}(\sL,\sW, \sS, \sB)$ the set of network parameters with depth $\sL$, width $\sW$, sparsity $\sS$, absolute value $\sB$, input dimension $\sd$ and output dimension $\so$,  that is,
\begin{equation}\label{eq: NN class}
    \begin{aligned}
        \Theta_{\sd,\so}(\sL, \sW, \sS, \sB):= \Big\{\btheta:\sL(\btheta)\le \sL, n_{\max}(\btheta)\le \sW, 
        \\
        \abs{\btheta}_0\le \sS,\abs{\btheta}_\infty \le \sB, \textsf{in}(\btheta)=\sd, \textsf{out}(\btheta)=\so\Big\}.
    \end{aligned}
\end{equation}   
\subsection{Estimation and sampling}
Denote by \(\cD := \cD_1 \cup \cD_0\) the full collection of samples used for training, where
\(\cD_1=\{X_{1,j}\}_{j=1}^n\) consists of i.i.d.\ observations \(X_{1,j}\sim \bpi_1\), and
\(\cD_0=\{X_{0,j}\}_{j=1}^n\) consists of i.i.d.\ samples generated from $\bpi_0$ (since \(\bpi_0\) is known) and are independent of \(\mathcal{D}_1\). 

Let \(\{t_k\}_{k=0}^{\sK}\) be a strictly \emph{decreasing} time grid with \(t_0=1\) and
\(t_{\sK}=\tbar>0\). For each \(j\in[n]\) and \(t\in[0,1]\), denote the linear interpolation \(X_{t,j} = tX_{1,j} + (1-t)X_{0,j}\). We estimate the velocity field by empirical risk minimization:
\begin{equation}\label{eq: opt emp}
    \begin{aligned}
        &\widehat v \in \arg\min_{u\in\mathcal U}\ \widehat{\mathcal L}(u),
        \\
        &\what{\cL}(u) := \frac{1}{n}\sum_{j=1}^n \!  \int_{0}^{1-\tbar} \!\!\!\!\!\!\! \lrnorm{u(X_{t,j},t) - (X_{1,j}-X_{0,j})}_2^2 dt .
    \end{aligned}
\end{equation}
We take $\cU$ to be the class of deep neural networks
\begin{equation}\label{eq: search class}
    \begin{aligned}
         \cU =& \Bigg\{u \! = \! \sum_{k=1}^\sK \! u_k(\x,t) \! \cdot \! \one_{\lrm{1 - t_{k-1} \le t < 1 - t_k}}: 
         \\
         &u_k(\x,t) = \sN_\rho(\x,t|\btheta_k), \btheta_k  \in \Theta_{\sd,\sd}(\sL_k, \sW_k, \sS_k, \sB_k) \Bigg\}.
    \end{aligned}
\end{equation}
Each \(u \in \mathcal U\) is assumed to satisfy the following uniform constraints for all \(t\in[0,1-\bar t]\)
$$
\lrnorm{u(\cdot,t)}_\infty \lesssim \frac{\sqrt{\log(n)}}{1 - t}, \quad { \mu_2\!\left(\frac{\partial u}{\partial\,\cdot}(\cdot,t)\right) \le \frac{\sC_{\mathrm{Lip}}}{(1 - t)^{1-\xi}} },
$$
$t \mapsto u(\x,t) \textnormal{ is continuous}$, for some constant \(\sC_{\mathrm{Lip}}>0\). These constraints hold for the true velocity field \(v^\star\), as shown in the next section, and are therefore not merely artifacts of our analysis. They ensure that the candidate functions adhere to the desired regularity conditions. Once the velocity field is estimated, the flow-matching sampler is defined by the neural ODE
\begin{equation}\label{eq: neural ode}
\frac{d \widehat{X}_t}{dt} = \widehat{v}(\widehat{X}_t, t),
\qquad \hat{X}_0 \sim \bpi_0,
\qquad t \in [0,1- \tbar
].
\end{equation}
Since \(\bpi_0\) is easy to sample from, we generate samples by drawing \(\what{X}_0\sim\bpi_0\) and pushing them forward through \eqref{eq: neural ode} using a numerical ODE solver. In what follows, we study the statistical consistency of \(\what v\) and of the induced pushforward density \(\what{\bpi}_{1-\tbar}\) of \(\widehat{X}_{1-\tbar}\).


{\paragraph{Regularity.}
A standard sufficient condition for existence and uniqueness of 
solutions to the ODE \eqref{eq: FM} is given by the 
Picard-Lindel\"of theorem. In particular, suppose the velocity 
field \(v^\star:\mathbb{R}^{\sD} \times [0,1) \to\mathbb{R}^{\sD}\) 
satisfies:
\begin{itemize}[leftmargin=*,itemsep=0pt]
    \item \textbf{Lipschitz continuity in $\x$:} for each 
    $t \in [0,1-\tbar)$, there exists $L_t > 0$ such that for all 
    $\x, \y \in \mathbb{R}^{\sD}$,
    \[
    |v^\star(\x,t)-v^\star(\y,t)|_\infty \leq L_t\, 
    |\x-\y|_\infty;
    \]
    \item \textbf{Continuity in $t$:} The map $t\mapsto v^\star(\x,t)$ 
    is continuous for every fixed $\x$;
\end{itemize}
then there exists a unique solution $X_t$ to \eqref{eq: FM} on $[0,1)$ (see, e.g., \citet{coddington1955theory}). The Lipschitz constant $L_t$ is allowed to depend on $t$ and may diverge as $t \to 1$; this is handled in our framework through early stopping at $t = 1 - \bar{t}$.
}

Note that for a solution to exist, the minimizer must exhibit well-behaved properties-specifically, it should be Lipschitz in space and continuous in time. We enforce these properties is by restricting the search space \(\mathcal{U}\), ensuring that the candidate functions adhere to the desired regularity conditions.



\section{Theoretical results}
In this section, we state our main statistical consistency results for velocity-field estimation, which in turn yield error bounds for implicit density estimation via flow matching.

We work in an ambient space \(\bR^{\sD}\), while the data concentrate on a \(\sd\)-dimensional embedded manifold \(\cM\subset\bR^{\sD}\) with \(\sd\ll \sD\). For \(\y\in\cM\), let \(\sT_{\y}(\cM)\subset\bR^{\sD}\) denote the tangent space at \(\y\), and let \(\proj_{\sT_{\y}(\cM)}\) be the orthogonal projection onto \(\sT_{\y}(\cM)\). We write \(\mathrm{Vol}_{\cM}\) for the \(\sd\)-dimensional
volume measure on \(\cM\) induced by the embedding. Whenever we refer to a ``density'' \(\bpi_1\) on \(\cM\), it is understood as a Radon--Nikodym derivative with respect to \(\mathrm{Vol}_{\cM}\).

\paragraph{Smooth manifold.}
We quantify the regularity of \(\cM\) via local charts induced by tangent projections. Fix \(\beta>0\). We say that \(\cM\) is \(\beta\)-smooth if there exist constants \(r_0>0\) and \(L>0\) such that for every \(\y\in\cM\), the tangent-projection map
\[
\Phi_{\y}:\cM\to\sT_{\y}(\cM),\qquad \Phi_{\y}(\x):=\proj_{\sT_y(\cM)}(\x-\y),
\]
is a local diffeomorphism in a neighborhood of \(\y\), with inverse chart \(\Psi_{\y}\) defined on \(\bB_{r_0}(\bm{0}_\sD)\cap\sT_{\y}(\cM)\). Moreover, the inverse chart \(\Psi_{\y}\) is \(\beta\)-H\"older smooth with H\"older norm bounded by \(L\), uniformly over \(\y\in\cM\).

\begin{assumption}\label{assume: distribution}
The target distribution admits a density $\bpi_1$ (with respect to the \(\sd\)-dimensional volume measure on \(\cM\)) supported on a \(\sd\)-dimensional manifold
\(\cM \subset [-\sC_\cM,\sC_\cM]^{\sD}\) embedded in \(\mathbb{R}^{\sD}\).
The manifold \(\cM\) is compact and without boundary. Moreover, \(\cM\) is \(\beta\)-smooth for some \(\beta\ge 2\), and has reach bounded below by a positive constant.
\end{assumption}

\begin{assumption}\label{assume: Holder}
    The density $\bpi_1$ relative to the volume measure
    of $\cM$ is $\alpha$--H\"older smooth with $\alpha \in [0, \beta -1]$, and is uniformly bounded away from zero on $\cM$.
\end{assumption}


{

\begin{assumption}[One-sided Lipschitz regularity]\label{assume: Lipschitz}
There exist constants $\xi\in(0,1)$ and ${\mathtt{L}_\star}>0$ such that the true velocity field $v^\star(\x,t)$ satisfies
\begin{equation}\nonumber
    \mu_2\!\left(\frac{\partial v^\star}{\partial\bx}(\bx,t)\right)
    \le\frac{\mathtt{L}_\star}{(1-t)^{1-\xi}}, \quad\forall\bx\in\R^\sD,t\in[0,1-\tbar),
\end{equation}
where the \emph{logarithmic norm} $\mu_2(\cdot)$ is defined in \eqref{eq:lognorm} and studied in \Cref{sec:app_lognorm}.
\end{assumption}
}

Assumption~\ref{assume: distribution} formalizes the low intrinsic-dimensional structure of the target distribution.
The \(\beta\)-smoothness controls the regularity of \(\cM\) (e.g., via local chart/projection representations), while the positive reach ensures the associated local projection maps are well-defined in a tubular neighborhood of \(\cM\). 
Assumption~\ref{assume: Holder} enforces both smoothness and non-degeneracy of the target distribution along \(\cM\). The restriction \(\alpha\le \beta-1\) aligns the regularity of \(\bpi_1\) with the geometric smoothness of \(\cM\), ensuring that the density is well-defined and stable under local projection representations. Similar assumptions are standard in manifold-based analyses of generative modeling; see, e.g., \cite{tang2024adaptivity} and \cite{judith2024convergence}.
Assumption~\ref{assume: Lipschitz} is primarily technical: it provides the stability needed to utilize \Cref{thm: W under switching} and transfer velocity-field estimation rates to density error bounds efficiently. In the absence of such a condition, existing analyses can incur a worse dependence on the terminal time, scaling as \((1-t)^{-3}\) \citep{gao2024convergence, zhouerror}. Similar Lipschitz-in-space assumptions (with time-dependent constants) have also been adopted in the ambient-space setting without manifold structure \citep{fukumizu2024flow}.

%

{

Assumption~\ref{assume: Lipschitz} provides the stability needed to utilize the ODE error bounds and transfer velocity-field estimation rates to density error bounds efficiently.  We now show that it is satisfied by a broad and natural class of target measures on manifolds.

The key condition is \emph{semi-convexity} of the log-density.  Write $\bpi_1(\by)=e^{-V(\by)}/Z$ with $V:\cM\to\R$.  Let $\mathrm{Hess}_\cM V$ denote the Riemannian Hessian of $V$ on $(\cM,\mathfrak{g})$ (i.e., the intrinsic second-order derivative; in normal coordinates at $\by_0\in\cM$, it coincides with the matrix of second partial derivatives).  We say $V$ is $M$-\emph{semi-convex} for $M\ge 0$ if
\begin{equation}\label{eq:semiconvex_main}
    \mathrm{Hess}_\cM V(\by)
    \;\succeq\;
    -M\,\mathfrak{g}(\by),
    \qquad\forall\;\by\in\cM.
\end{equation}
In words, the potential $V$ may be non-convex, but its non-convexity is controlled: the most negative eigenvalue of $\mathrm{Hess}_\cM V$ is bounded below by $-M$.  The case $M=0$ corresponds to geodesic convexity of $V$, i.e., \emph{log-concavity} of $\bpi_1$.  A formal treatment, including the Riemannian definitions, is given in \Cref{sec:app_classes}.

Under semi-convexity, Assumption~\ref{assume: Lipschitz} holds with a \emph{uniformly bounded} logarithmic norm.  More precisely (see \Cref{cor:semiconvex_osl} for a formal statement): if $V$ is $M$-semi-convex, then
\begin{equation}\label{eq:mu2_main_informal}
    \mu_2\!\left(\frac{\partial v_*}{\partial\bx}(\bx,t)\right)
    \le
    C(M,\sC_\cM,\sD),
    \quad\forall\bx\in\R^\sD,\;t\in[0,1),
\end{equation}
where $C$ is a finite constant depending on the semi-convexity constant $M$, the manifold diameter $\sC_\cM$, and the ambient dimension~$\sD$.  The mechanism is as follows: the Gaussian tilting in the posterior $p_t(\by\mid\bx)\propto\bpi_1(\by)\,e^{-\|\bx-t\by\|^2/(2(1-t)^2)}$ contributes $+t^2/(1-t)^2$ to the Riemannian Hessian of the posterior potential, which overwhelms the $-M$ non-convexity of $V$ for $t$ sufficiently close to~$1$.  A Brascamp--Lieb argument then controls the tangential posterior covariance.

The semi-convex class encompasses a rich family of distributions on manifolds:
\begin{enumerate}[label=\textnormal{(\alph*)},leftmargin=*,itemsep=4pt]
    \item \emph{Log-concave densities on $\cM$} ($M=0$):  these are densities of the form $\bpi_1\propto e^{-V}$ where $V$ is geodesically convex, i.e., $V(\gamma(s))\le(1-s)V(\gamma(0))+sV(\gamma(1))$ along every minimizing geodesic.  Examples include the \emph{uniform density} ($V\equiv\mathrm{const}$), \emph{von~Mises--Fisher distributions} on $S^\sd$ (with $\bpi_1(\by)\propto e^{\kappa\langle\bm\mu,\by\rangle}$), and \emph{projected Gaussians} on $S^\sd$ (used in our numerical experiments).

    \item \emph{$C^2$ densities bounded away from zero on compact $\cM$}:  if $\bpi_1\in C^2(\cM)$ with $\bpi_1\ge c_0>0$, then $V=-\log\bpi_1\in C^2(\cM)$, and compactness of $\cM$ ensures $M<\infty$ automatically (\Cref{prop:semiconvex}).  No convexity of $V$ is required.  This covers Assumptions~3.1 and~3.2 when $\alpha\ge 2$.
\end{enumerate}
}

We now state the convergence rate for the estimated velocity field obtained in \eqref{eq: opt emp}.
\begin{theorem}[Velocity field estimation]\label{thm: vel field estimation}
    Let $\sd \ge 3$. Suppose $\{t_k\}$ is time grid as follows
    \begin{equation}\label{eq: time seq}
        \begin{aligned}
            1=  t_0 > t_1 > \cdots > \st_\sb = n^{-\frac{2}{2\alpha+\sd}} > \cdots > t_{\sK} = 
            \\\tbar = n^{-\frac{\beta}{2\alpha + \sd}}\log^{\beta+1}(n),\quad 1 < \frac{t_k}{t_{k+1}} \le  2
        \end{aligned}
    \end{equation}
    for $ k = 0, 1, \ldots, \sK$. Let $\what{v}(\x,t)$ be estimated velocity field obtained with the empirical optimization as in \eqref{eq: opt emp}. Under the Assumptions \ref{assume: distribution}, \ref{assume: Holder}, and \ref{assume: Lipschitz}, we have: 
    \begin{enumerate}[label=\Alph*.,leftmargin=*,itemsep=0pt]   
        \item \label{item: vf estim t zero} for $ n^{-\frac{\beta}{2\alpha + \sd}}\log^{\beta}(n) \le \st_k < n^{-\frac{2}{2\alpha + \sd}}$, 
        \begin{equation*}
            \begin{aligned}
                &\bE_{\mathcal{D}}\left[ \int_{\x}\int_{t=1 - t_{k}}^{1 - t_{k+1}} \norm{\widehat{v}(\x,t) - v^\star(\x,t)}^2 \bpi_t(x) \, dt \, d\x \right] 
                \\
                \le & \, C\lrs{ \frac{n^{-\frac{2\beta}{2\alpha + \sd}}}{t_k} + n^{-\frac{2\alpha}{2\alpha + \sd}}\cdot \log^{\alpha+1}(n) + \frac{\log^2(n)}{n}},
            \end{aligned}
        \end{equation*}
        where the neural network parameters satisfies
        \begin{equation*}
            \begin{aligned}
                &\sL_k = \cO\lrs{\log^4(n)}, \sW_k = \cO\lrs{n^{\frac{\sd}{2\alpha + \sd}} \log^{{\lrs{6\vee3+\sd}}}(n)},
                \\
                &\sS_k = \cO\lrs{n^{\frac{\sd}{2\alpha + \sd}} \log^{\lrs{{8\vee (5+\sd)}}}(n)}, \sB_k = e^{\cO\lrs{\log^4(n)}}. 
            \end{aligned}
        \end{equation*}

        \item\label{item: vf estim t away zero0} for $ n^{-\frac{2}{2\alpha + \sd}} \le t_k < n^{-\frac{1}{6(2\alpha + \sd)}} \log^{-3}(n)$,
        \begin{equation*}
            \begin{aligned}
                &\bE_{\mathcal{D}}\left[ \int_{\x}\int_{t=1 - t_{k}}^{1 - t_{k+1}} \norm{\widehat{v}(\x,t) - v^\star(\x,t)}^2 \bpi_t(x) \, dt \, d\x \right]   
                \\
                \le & \; C \lrs{ \frac{\log^4(n)}{n} + \frac{t_k^{-\sd/2}}{n} \cdot \log^{14+\sd/2}(n)},
            \end{aligned}
        \end{equation*}
        where the neural network parameters satisfies
        \begin{equation*}
            \begin{aligned}
                &\!\!\! \sL_k \!=\! \cO(\log^4(n)), \sW_k \!=\! \cO\lrs{t_k^{-\sd/2}\log^{(6\vee (\sd+3)-\sd/2)}(n)}\!\!,
                \\
                &\!\!\! \sS_k \!=\! \cO\lrs{t_k^{-\sd/2}\log^{(8\vee (\sd+5)-\sd/2)}(n)}\!, \sB_k \!=\! e^{\cO\lrs{\log^4(n)}}.
            \end{aligned}
        \end{equation*}        

        \item \label{item: vf estim t away zero} for $ n^{-\frac{1}{6(2\alpha + \sd)}} \log^{-3}(n) \le t_k < 1$,
        \begin{equation*}
            \begin{aligned}
                &\bE_{\mathcal{D}}\left[ \int_{\x}\int_{t=1 - t_{k}}^{1 - t_{k+1}} \norm{\widehat{v}(\x,t) - v^\star(\x,t)}^2 \bpi_t(x) \, dt \, d\x \right] 
                \\
                \le & \, C \lrs{ \frac{\log^5(n)}{n} + n^{-\frac{(2\alpha + 2)}{2\alpha+\sd}} \cdot \log^{2\sd + 9}(n)},
            \end{aligned}
        \end{equation*}        
        where the neural network parameters satisfies
        \begin{equation*}
            \begin{aligned}
                &\sL_k = \cO\lrs{\log^2(n)}\!, \sW_k = \cO\lrs{n^{\frac{\sd}{6(2\alpha + \sd)}} \log^{{{2\sd+3}}}(n)}\!,
                \\
                &\sS_k = \cO\lrs{n^{\frac{\sd}{6(2\alpha + \sd)}} \log^{{2\sd+4}}(n)}, \sB_k = e^{\cO\lrs{\log^4(n)}}.
            \end{aligned}
        \end{equation*}
    \end{enumerate}
    Here $C > 0$ is a constant depending on $\sD$, $\sC_\cM$ and $\beta$.
\end{theorem}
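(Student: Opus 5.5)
The proof follows the standard ERM route: an oracle inequality for each time block $[1-t_k,1-t_{k+1})$, a network approximation bound for $v^\star$ tailored to the noise level $t$, and then a balance of the two. Because $\cU$ is a sum of independent networks on disjoint time blocks, the empirical risk \eqref{eq: opt emp} decouples. So $\what v$ restricted to block $k$ is the ERM over $\Theta_{\sd,\sd}(\sL_k,\sW_k,\sS_k,\sB_k)$ for the localized loss $\int_{1-t_k}^{1-t_{k+1}}\|u(X_t,t)-(X_1-X_0)\|^2dt$.

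\textbf{Step 1: per-block oracle inequality.} By \Cref{lemma: Optimizer}, the excess population risk equals $\int\!\int_{1-t_k}^{1-t_{k+1}}\|u-v^\star\|^2\bpi_t\,dt\,d\x$. The integrand is unbounded because $X_1-X_0$ has a Gaussian component, so I will truncate on the event $\|X_0\|_\infty\lesssim\sqrt{\log n}$. On this event the loss is bounded by $\log(n)/t_{k+1}^2$, using the sup-norm constraint $\|u(\cdot,t)\|_\infty\lesssim\sqrt{\log n}/(1-t)$ evaluated at time $1-t$. The complement costs only $O(1/n)$. A standard covering-number bound for sparse ReLU networks, $\log\cN\lesssim \sS\sL\log(\sB\sW n)$, then gives
\[
\text{excess risk}\lesssim \inf_{u}\|u-v^\star\|^2_{L^2(\bpi_t\,dt)}+\frac{\sS_k\sL_k\log(n)\,\mathrm{polylog}(n)}{n}\cdot(t_k-t_{k+1})\,\frac{1}{t_{k+1}^2}.
\]
I will use a Bernstein-type (variance-to-mean) argument rather than crude boundedness. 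The conditional variance of the loss is of order $t$-independent times the excess risk, so the effective range factor is much milder than $1/t^2$. This is what yields the $\log^c(n)/n$ terms instead of $1/(n t_k^2)$.

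\textbf{Step 2: approximation of $v^\star$ at noise level $t\approx 1-s$, with $s\in[t_{k+1},t_k]$.} I rewrite $v^\star(\x,t)=\frac{1}{s}\big(\bE[X_1\mid X_t=\x]-\x\big)$ up to the $t$ factor, so it suffices to approximate the posterior mean, a ratio of Gaussian-smoothed integrals over $\cM$. I first localize: with high $\bpi_t$-probability, $\x$ lies within $s\sqrt{\sD\log n}$ of $t\cM$, and the posterior concentrates on a geodesic ball of radius $s\sqrt{\log n}$. Using charts $\Psi_\y$ and a partition of unity, together with a Taylor expansion of $\bpi_1\circ\Psi_\y$ (order $\alpha$) and of $\Psi_\y$ (order $\beta$), I express numerator and denominator as $\sd$-dimensional Gaussian-type integrals. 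These are approximated by local polynomials times exponentials, which ReLU networks realize with depth $\mathrm{polylog}$ and width of order (number of charts)$\times$(monomials).

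The three regimes of the theorem come from how these integrals are resolved:
\begin{itemize}
\item \textbf{Regime A} ($s$ small). A grid in the $\sd$ chart coordinates at scale $n^{-1/(2\alpha+\sd)}$ gives error $n^{-2\alpha/(2\alpha+\sd)}$ from the density. Manifold curvature contributes the $n^{-2\beta/(2\alpha+\sd)}/s$ term, because the $1/s$ prefactor amplifies the $s^\beta$-order chart error after squaring.
\item \textbf{Regime B} (intermediate $s$). Resolution at scale $\sqrt{s}$ suffices, so about $s^{-\sd/2}$ pieces are needed. This gives variance $s^{-\sd/2}/n$ and an approximation error absorbed into it.
\item \textbf{Regime C} ($s$ of order one). $v^\star$ is smooth in $\x$ with derivatives controlled by powers of $1/s$. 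I use a Taylor or diffused-polynomial approximation of the Gaussian kernel in $\x$ with $\mathrm{polylog}$ degree, and the smoothing gains one extra order, giving $n^{-(2\alpha+2)/(2\alpha+\sd)}$.
\end{itemize}
I will also check that the constructed networks can be clipped to satisfy the constraints in $\cU$. The sup-norm constraint is handled by clipping. The one-sided Lipschitz constraint is inherited from \Cref{assume: Lipschitz}, either by approximating in a $C^1$ sense or by enlarging $\sC_{\mathrm{Lip}}$.

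\textbf{Step 3: balancing, and the main obstacle.} Plugging the sizes $\sS_k,\sL_k$ into Step 1 and summing gives the three bounds stated in the theorem. I expect Regime A to be the main difficulty. There $v^\star$ has magnitude $1/s$ and is nearly singular, so approximation errors are amplified by $s^{-2}$. The approximation must therefore be carried out at the level of the posterior mean, with relative error control in the denominator, which is bounded below using the lower bound on $\bpi_1$ from \Cref{assume: Holder} and the positive reach. It must also be combined with the concentration of the posterior onto an $O(s\sqrt{\log n})$ neighbourhood. I will first try the approach of diffusion-on-manifold analyses: bound the denominator below by $c\,s^{\sd}$ uniformly on the high-probability region, then propagate the numerator error through the ratio.
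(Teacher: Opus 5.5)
\textbf{Step 1: the variance claim.} Your overall structure is the paper's: blockwise decoupling, truncation on $\cA=\{\|X_0\|_\infty\le 3\sqrt{\log n}\}$, covering numbers of sparse ReLU classes, and a three-regime approximation adapted to the manifold. Working with the excess loss $h_u=\ell_u-\ell_{v^\star}$ is a good choice. The paper instead applies \Cref{lemma: emp process bound2}, with $2\inf_g\bE g$ on the right, to the raw loss $\ell_{\btheta_k}$. That form carries along the irreducible term $\int_{1-t_k}^{1-t_{k+1}}\bE\|\dot X_t-v^\star(X_t,t)\|^2dt\asymp t_k-t_{k+1}$.

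The gap is the step that makes your Step 1 work, namely $\Var(h_u)\lesssim(\text{$t$-free})\cdot\bE h_u$. It fails under the envelope $\|u(\cdot,t)\|_\infty\lesssim\sqrt{\log n}/(1-t)$ that you invoke. Write, with everything evaluated at $(X_t,t)$,
\[
h_u=\int_{1-t_k}^{1-t_{k+1}}\|u-v^\star\|_2^2\,dt+2\int_{1-t_k}^{1-t_{k+1}}\bigl\langle u-v^\star,\,v^\star-\dot X_t\bigr\rangle\,dt .
\]
\begin{itemize}
\item The cross term is fine. On $\cA$, $\|v^\star(X_t,t)-\dot X_t\|_2=\|\bE[X_1\mid X_t]-X_1\|_2/(1-t)\lesssim\sqrt{\log n}$, so its second moment is $\lesssim t_k\log(n)\,\bE h_u$.
\item The quadratic term is not. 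Its variance-to-mean ratio can reach $\sup_{\cA}\int_{1-t_k}^{1-t_{k+1}}\|u-v^\star\|_2^2\,dt\asymp\log(n)/t_k$, for networks that hit the envelope on a set of small $\bpi_t$-mass.
\end{itemize}
So Bernstein returns your crude estimation term $\sS_k\sL_k\,\mathrm{polylog}(n)/(nt_k)$. After the $t_k$-weighting in \Cref{lemma: error accumulation}, the roughly $\log n$ blocks of regime A would contribute $n^{-2\alpha/(2\alpha+\sd)}$ (up to logs) instead of $n^{-(2\alpha+2)/(2\alpha+\sd)}$, which loses \Cref{thm: main}.

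The missing idea is a structural bound on $v^\star$. On $\cA$, $X_t$ lies in the tube $\{t\y+(1-t)\z:\ \y\in\cM,\ \|\z\|_\infty\le 3\sqrt{\log n}\}$. By positive reach and $\bpi_1\ge c>0$, the posterior concentrates within $O((1-t)\sqrt{\log n})$ of $\y$, up to polynomially small mass, so $\|v^\star(\x,t)\|_2\lesssim\sqrt{\log n}$ on this tube. Clipping the class at that level costs essentially nothing in approximation and makes both the range and the variance ratio of $h_u$ of order $t_k\log n$. The paper gets a $k$-free range differently: \Cref{lemma: loss cover} uses the envelope $\sqrt{\log(n)/(1-t)}$, whose square integrates to $O(\log^2 n)$.

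\textbf{Approximation step.} The paper does not rebuild the posterior-mean approximation. It writes $v^\star(\x,t)=\x/t+\frac{1-t}{t}\nabla\log\bpi_t(\x)$ via \Cref{lemma: tweedie}. It then imports Tang--Yang's score bound (\Cref{lemma: score approximation} with $m_\tau=1-\tau$, $\sigma_\tau=\tau$, and $\delta=1/(12(2\alpha+\sd))$ in \Cref{cor: score approximation no delta}). The score error is multiplied by $((1-t)/t)^2$, and product and reciprocal subnetworks are appended. Your chart and partition-of-unity plan re-derives that lemma: same content, but it is most of the work.

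Two of your regime heuristics also do not match the bounds:
\begin{itemize}
\item In regime A, an $s^\beta$ chart error amplified by $1/s$ gives $s^{2\beta-1}$ per block, not $n^{-2\beta/(2\alpha+\sd)}/t_k$. The latter is a posterior-mean error $n^{-\beta/(2\alpha+\sd)}$ divided by $1-t$, coming from an $n$-dependent resolution of $\cM$.
\item In regime C the approximation error is only $\log^4(n)/n$. The term $n^{-(2\alpha+2)/(2\alpha+\sd)}$ is just an upper bound for the estimation term $n^{\sd/(6(2\alpha+\sd))}\mathrm{polylog}(n)/n=n^{-(2\alpha+5\sd/6)/(2\alpha+\sd)}\mathrm{polylog}(n)$. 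That bound holds because $5\sd/6\ge 2$ for $\sd\ge3$, which is where $\sd\ge3$ enters the paper's proof; your plan never uses it.
\end{itemize}
Finally, ``enlarging $\sC_{\mathrm{Lip}}$'' is not a free way to place the approximant in $\cU$. An $L^2(\bpi_t)$ ReLU approximant has no a priori control of the symmetric part of its Jacobian, and \Cref{thm: main} needs $\sC_{\mathrm{Lip}}\le\xi\log\log n$. The paper does not verify this constraint either.
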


The proof of \Cref{thm: vel field estimation} is provided in \Cref{sec: vf estimation}.  At a high level, the argument decomposes the estimation error into a bias term and a variance term. The bias is controlled via the neural-network approximation result in \Cref{cor: score approximation no delta}, while the variance is bounded using a uniform bound based on the covering numbers of the loss function class in \Cref{lemma: loss cover}. These ingredients are then combined through the M-estimation result in \Cref{lemma: emp process bound2} to conclude the claim. As one can see, the rates are dependent on the intrinsic dimension $\sd$ instead of the ambient dimension $\sD$.

\begin{table*}[!t]
\centering
\caption{Comparison with existing theoretical results for flow matching.}
\label{tab: comparison}

\resizebox{\linewidth}{!}{
\begin{tabular}{c|c|c|c|c|c}
\toprule
& \shortstack{Key assumptions}
& \shortstack{Low-dimensional \\ structure}
& \shortstack{Velocity field\\estimation}
& \shortstack{Optimality}
& \shortstack{Metric}\\[2pt]
\hline
\citet{albergo2022building} & \shortstack{$\x \mapsto \what{v}(\x,t)$ is $\what{K}$-Lipschitz} & \xmark & \xmark & \xmark & $\rmW_2$ \\ [4pt]
\hline
\citet{fukumizu2024flow} & \shortstack{Bounded support \\ $\x \mapsto {v^\star}(\x,t)$ is differentiable \\ with $\|\nabla_\x v^\star(\x,t)\|_\op \lesssim \tfrac{1}{1-t}$ }                               & \xmark & \cmark & \cmark & $\rmW_2$ \\ [4pt]
\hline
\citet{gao2024convergence} & \shortstack{Log-concave and \\ Gaussian mixture targets \\ $\x \mapsto {v^\star}(\x,t)$ is $L_t$-Lipschitz }                               & \xmark & \cmark & \xmark & $\rmW_2$ \\ [4pt]
\hline
\citet{zhouerror} & \shortstack{Bounded support \\ Lipschitz score function }                               & \xmark & \cmark & \xmark & $\rmW_2$ \\ [4pt]

\hline
\citet{kunkel2025minimax} & \shortstack{Bounded support \\ $\x \mapsto {v^\star}(\x,t)$ is Lipschitz \\ continuous  }                               & \shortstack{\cmark \\ {single-chart manifold}\\{projected $\rmW_1$}} & \shortstack{\cmark \\ ({exponential size} \\ network)} & \cmark & $\rmW_1$ \\ [4pt]

\hline
Ours & \shortstack{Bounded support \\ $ {\mu_2\!\left({\partial_\x v^\star}(\bx,t)\right)} \lesssim \tfrac{1}{(1-t)^{1-\xi}}$ \\ $\xi \approx (\log\log(n))^{-1}$  }                               & \shortstack{\cmark \\ {(general manifold)}} & \cmark & \cmark & $\rmW_2$ \\ [4pt]
\bottomrule
\end{tabular}
}
\end{table*}

Our results rely on a carefully designed fixed time grid that reflects the non-uniform difficulty of learning the velocity field in \eqref{eq: VF}. In particular, the estimation problem becomes progressively harder as \(t\to 1\), mirroring the singular behavior of \(v^\star(\cdot,t)\) near the terminal time. We therefore refine the grid close to \(t=1\) and employ early stopping at \(t=1-\tbar \) to avoid the endpoint singularity. On each intermediate time slab, the appropriate network architecture, and the resulting estimation rate, depends on the local temporal resolution, quantified by the time grid width \( t_k - t_{k+1} = \cO(t_k) \). By contrast, at times away from from \(t=1\), the estimation error is essentially insensitive to \(t_k\), and the network parameters can be chosen as a function of \(n\) alone. Extending the analysis to random time-grid designs, which are commonly used in practice  \citep{lipman2022flow}, would substantially complicate the proof structure; we therefore leave a systematic treatment of such grids to future work.


\begin{theorem}[Main result] \label{thm: main}
    Let $\sd \ge 3$. Suppose $\what{\bpi}_{1-\tbar}$ denotes the density of $\what{X}_{1-\tbar}$ as in \eqref{eq: neural ode}. Under the Assumptions \ref{assume: distribution}, \ref{assume: Holder}, and \ref{assume: Lipschitz}, and the setup of \Cref{thm: vel field estimation}, assume $1 > \xi \ge \sC_{\mathrm{Lip}}/\log\log(n)$. Then 
    \begin{equation*}
        \begin{aligned}
            \bE_{\cD}\lrl{\rmW_2\lrs{\what{\pi}_{1-\tbar}, \bpi_1}} \le C \Big(  n^{-\frac{\beta}{2\alpha + \sd}} &\log^{\beta\vee2}(n) +
            \\
             n^{-\frac{\alpha + 1}{2\alpha + \sd}}\log^{\sd + 9}(n) \, + \, & n^{-1/2}  \log^4(n) \Big), 
        \end{aligned}
    \end{equation*}
    where $C > 0$ is a constant independent of \(n\) (depending only on \(\sD\), \(\sC_{\cM}\), and \(\beta\)).
\end{theorem}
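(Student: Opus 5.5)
Following the standard two-step decomposition, the plan is to split via the triangle inequality
\[
\rmW_2\lrs{\what{\bpi}_{1-\tbar},\bpi_1}\le \rmW_2\lrs{\what{\bpi}_{1-\tbar},\bpi_{1-\tbar}}+\rmW_2\lrs{\bpi_{1-\tbar},\bpi_1}.
\]
The second term is the early-stopping bias. We will use the coupling $X_{1-\tbar}=(1-\tbar)X_1+\tbar X_0$. It gives $\rmW_2^2\le \bE\|X_{1-\tbar}-X_1\|_2^2=\tbar^2\,\bE\|X_1-X_0\|_2^2\lesssim \tbar^2$, since $\cM\subset[-\sC_\cM,\sC_\cM]^\sD$. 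Hence this term is at most $C\,n^{-\frac{\beta}{2\alpha+\sd}}\log^{\beta+1}(n)$. We expect this to match the first term of the claimed bound up to the stated power of $\log n$. If a sharper log power is needed, we would choose $\tbar$ with one fewer log factor.

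For the first term, we will invoke the ODE-stability result \Cref{thm: W under switching}. Let $Y_t$ solve the true flow \eqref{eq: FM} and $\what{X}_t$ solve \eqref{eq: neural ode}, both started from the same $X_0\sim\bpi_0$, and set $e_t=\what{X}_t-Y_t$. Then
\[
\tfrac{d}{dt}\tfrac12\|e_t\|_2^2=\langle e_t,\what v(\what X_t,t)-\what v(Y_t,t)\rangle+\langle e_t,\what v(Y_t,t)-v^\star(Y_t,t)\rangle .
\]
The first inner product is bounded by $\mu_2(\partial_\x\what v)\|e_t\|^2\le \sC_{\mathrm{Lip}}(1-t)^{-(1-\xi)}\|e_t\|^2$, using the constraint built into $\cU$. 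A Gronwall argument on $\|e_t\|_2$ in $L^2(\bP)$ then gives
\[
\rmW_2\lrs{\what{\bpi}_{1-\tbar},\bpi_{1-\tbar}}\le \int_0^{1-\tbar}\exp\Big(\int_s^{1-\tbar}\tfrac{\sC_{\mathrm{Lip}}}{(1-r)^{1-\xi}}dr\Big)\,\big\|\what v(\cdot,s)-v^\star(\cdot,s)\big\|_{L^2(\bpi_s)}\,ds .
\]
The exponent is at most $\sC_{\mathrm{Lip}}/\xi\le \log\log n$ by the assumption $\xi\ge \sC_{\mathrm{Lip}}/\log\log(n)$. So the amplification factor is only $\log n$, and this is exactly why that condition on $\xi$ appears.

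It remains to control $\int_0^{1-\tbar}\|\what v-v^\star\|_{L^2(\bpi_s)}ds$ in expectation. We split this integral over the slabs $[1-t_k,1-t_{k+1})$ and apply Cauchy--Schwarz on each slab:
\[
\int_{\text{slab }k}\|\cdot\|\,ds\le (t_k-t_{k+1})^{1/2}\Big(\int_{\text{slab }k}\|\cdot\|^2\Big)^{1/2}\le t_k^{1/2}\,(\mathrm{err}_k)^{1/2}.
\]
Jensen's inequality moves $\bE_\cD$ inside the square root. We then insert the three regimes of \Cref{thm: vel field estimation}.
\begin{itemize}
\item In regime A, $t_k\cdot n^{-\frac{2\beta}{2\alpha+\sd}}/t_k$ gives $n^{-\beta/(2\alpha+\sd)}$ per slab, and $t_k n^{-\frac{2\alpha}{2\alpha+\sd}}\le n^{-\frac{2\alpha+2}{2\alpha+\sd}}$ since $t_k<n^{-2/(2\alpha+\sd)}$.
\item In regime B, $t_k\cdot t_k^{-\sd/2}/n$ is largest at the smallest $t_k=n^{-2/(2\alpha+\sd)}$. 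There it equals $n^{(\sd-2)/(2\alpha+\sd)-1}=n^{-\frac{2\alpha+2}{2\alpha+\sd}}$, and because $\sd\ge3$ the slab contributions decay geometrically as $t_k$ grows.
\item In regime C the bound is directly $n^{-(\alpha+1)/(2\alpha+\sd)}$ or $n^{-1/2}$ up to logs.
\end{itemize}
Because $t_k/t_{k+1}\le2$ and the grid runs from $1$ down to $\tbar$, the number of slabs is $\sK=\cO(\log n)$. Summing therefore costs at most one more log factor. This produces the three terms $n^{-\beta/(2\alpha+\sd)}$, $n^{-(\alpha+1)/(2\alpha+\sd)}$, and $n^{-1/2}$ with polylog factors.

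We expect the main obstacle to be the stability step. \Cref{thm: W under switching} has to be applied to a piecewise-in-time network field $\what v$ that is only continuous within slabs, with the one-sided Lipschitz bound holding only for $\what v$ and not uniformly in $L^\infty$. A naive Gronwall bound using $\|\partial_\x\what v\|_\op$ would give factors like $\tbar^{-1}$, which destroy the rate. The plan is to use only the logarithmic norm $\mu_2$, integrated against $(1-t)^{-(1-\xi)}$ as above, and to check that the switching times do not break the differential inequality; this holds because $\|e_t\|$ stays continuous across switches. A secondary bookkeeping issue is the log powers, where we will be generous with the polylog factors.
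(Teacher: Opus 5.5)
Your argument is correct and reaches the same intermediate bound as the paper, but by a more direct route.

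\textbf{The paper's route.} \Cref{lemma: error accumulation} introduces hybrid flows $\what{X}^{(k)}$ that follow $v^\star$ up to time $1-t_k$ and $\what{v}$ afterwards. It telescopes $\rmW_2(\bpi_{1-\tbar},\what{\bpi}_{1-\tbar})\le\sum_k\rmW_2(\what{\bpi}^{(k)}_{1-\tbar},\what{\bpi}^{(k+1)}_{1-\tbar})$ and bounds each term with \Cref{thm: W under switching main}. That bound is a squared (Jensen) Gr\"onwall estimate on the single slab $[1-t_k,1-t_{k+1}]$, whose length supplies the weight $t_k$, followed by one-sided-Lipschitz contraction under $\what{v}$ up to $1-\tbar$.

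\textbf{Your route.} You couple the two flows once and keep Gr\"onwall for $r_t=\|e_t\|_{L^2(\bP)}$ in its $L^1$-in-time form. You then apply Cauchy--Schwarz slab by slab. This gives exactly the per-slab terms $e^{\sC_{\mathrm{Lip}}/\xi}\sqrt{t_k\,\mathrm{err}_k}$, with no intermediate processes.

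Keeping the unsquared form is what preserves the weights $t_k$. Applying \Cref{thm: W under switching} once over $[0,1-\tbar]$ would replace $\sum_k t_k\,\mathrm{err}_k$ by $\sum_k \mathrm{err}_k$. That would degrade regime A to roughly $n^{-\beta/(2(2\alpha+\sd))}$, which is what the paper's telescoping avoids.

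Your summation is also more careful. You sum $\sum_k\sqrt{t_k\,\bE_\cD[\mathrm{err}_k]}$ directly: these are geometric sums, except for the $\cO(\log n)$ equal contributions $n^{-\beta/(2\alpha+\sd)}$ in regime A. The paper's last step passes from $\sum_k\sqrt{a_k}$ to $(\sum_k a_k)^{1/2}$, which actually needs a factor $\sqrt{\sK}$ that it does not display.

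You put the one-sided Lipschitz bound on $\what{v}$ (from the constraint in $\cU$) and evaluate the error along the true flow $Y_t\sim\bpi_t$. That is the assignment that matches the $\bpi_t$-weighted errors of \Cref{thm: vel field estimation}. The paper's text writes $b_1=v_*$, $b_2=\what{v}$, but its use of $\sC_{\mathrm{Lip}}$ and of $\bpi_t$ corresponds to your assignment.

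Two minor points:
\begin{itemize}
\item $\sK=\cO(\log n)$ and the geometric decay you use in regime B do not follow from $t_k/t_{k+1}\le 2$. That condition only gives the lower bound $\sK\ge\log_2(1/\tbar)$. You need the ratio bounded away from $1$ (e.g.\ a dyadic grid), which the paper also assumes tacitly.
\item With $\tbar$ fixed by \eqref{eq: time seq}, the early-stopping term is $n^{-\beta/(2\alpha+\sd)}\log^{\beta+1}(n)$. Re-choosing $\tbar$ is not available within the setup of \Cref{thm: vel field estimation}. However, the paper's own proof silently uses $\tbar=n^{-\beta/(2\alpha+\sd)}\log^{\beta}(n)$ at this point. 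So the $\log^{\beta\vee 2}$ in the statement reflects an inconsistency in the source, not a flaw in your argument.
\end{itemize}
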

The proof of \Cref{thm: main} is provided in \Cref{sec: thm main proof}. It is based on the error decomposition in \Cref{lemma: error accumulation}, which separates (i) the early-stopping error and (ii) an accumulated estimation error obtained by summing the velocity-field estimation error over the time-grid, weighted by the corresponding grid lengths. The early-stopping term is bounded in \Cref{lemma: Early stopping}, while the accumulated estimation term is controlled using \Cref{thm: vel field estimation}.

\Cref{thm: main} shows that flow matching with linear interpolation adapts to the (unknown) manifold structure underlying the data. The resulting convergence rate, up to $\log$ factors,  decomposes into three terms, \(n^{-\beta/(2\alpha+\sd)}\), \(n^{-(\alpha+1)/(2\alpha+\sd)}, \textnormal{ and } n^{-1/2}\). The second term matches the classical rate for density estimation on a \(\sd\)-dimensional manifold, whereas the first term captures an additional contribution that couples support (manifold) estimation with density estimation. In contrast, the minimax lower bound for this problem is \(n^{-\beta/\sd} + n^{-(\alpha+1)/(2\alpha+\sd)} + n^{-1/2}\) \citep[Theorem 1]{yunyangmanifold}. The first component corresponds to pure manifold recovery, while the second corresponds to density estimation given the manifold.

Our upper bound is therefore near-optimal: it recovers the density-estimation term exactly, and it is minimax optimal in regimes where this term dominates the overall error. The remaining gap lies in the support-estimation component: \(n^{-\beta/(2\alpha+\sd)}\) is slower than the optimal manifold-estimation rate \(n^{-\beta/\sd}\) \citep{amarimanifold, divol2022measure}. We conjecture that this discrepancy is driven by the interpolation-based training objective, which introduces additional statistical difficulty in the near-terminal (singular) time regime; related methods such as diffusion display similar rate degradations \citep{judith2024convergence,tang2024adaptivity}.

We compare our work  with prior results on flow matching in Table~\ref{tab: comparison}. A key distinction is the regularity imposed on the velocity field $v^\star$. In particular, the assumption that \(x \mapsto v^\star(x,t)\) is \(L\)-Lipschitz with \(L \lesssim 1\) is quite restrictive, as it effectively narrows the admissible class of target distributions. For instance, the analysis of \citet{gao2024convergence} applies primarily to log-concave $\bpi_1$ and closely related families, including certain near-Gaussian variants. Although \citet{kunkel2025minimax} remove the global Lipschitz requirement, their guarantees still rely on a vanilla KDE that adapts to the target ambient-space density. This assumption breaks down when \(\bpi_1\) is singular and is supported on an unknown low-dimensional manifold \citep{ozakin2009submanifold}.

\section{Numerical results}

We present numerical experiments across { two} synthetic data settings to validate the theoretical results on the manifold adaptivity of flow matching. In { both } cases the target law $\bpi_1$ is supported on a smooth, low-dimensional manifold $\cM\subset\R^{\sD}$ with intrinsic dimension $\sd\ll\sD$, while the source $\bpi_0$ is a standard Gaussian on $\R^{\sD}$. { Section~\ref{sec:add_num} of the appendix provides additional experiments, including a real data example (MNIST), ablation studies examining the dependence of the convergence rate on $n$, and an illustrative floral manifold example.}

\subsection{Example target distributions}

We present numerical results of flow matching on the following { two} example target distributions.

\begin{example}[Sphere embedded in high dimension]\label{ex: sphere}
Fix an intrinsic dimension $\sd\ge 2$ and define the manifold
\[
\cM \;=\; \mathbb{S}^{\sd} \times \{0\}^{\sD-(\sd+1)} \;\subset\;\R^{\sD},
\]
i.e., the unit $\sd$-sphere embedded in the first $\sd+1$ coordinates and padded with zeros in the remaining coordinates.

\textbf{Target distribution $\bpi_1$.} We use a smooth, non-uniform distribution on the sphere via a \emph{projected Gaussian}  Sample
\[
Z \sim \mathtt{N}\lrs{ \bgamma ,\bI_{\sd+1}},\qquad
Y := \frac{Z}{\|Z\|_2}\in\mathbb{S}^{\sd},
\]
and finally embed into $\R^{\sD}$ by padding
\(
X_1 := (Y,0,\dots,0)\in\R^{\sD}.
\)
\end{example}

\begin{example}[Rotated $\sd$-torus embedded in $\bR^\sD$]\label{ex: torus}
Define the \emph{axis-aligned} $d$-torus embedding in $\R^D$ by
\[
\cM_0 = \lrm{(\cos\theta_1,\sin\theta_1,\ldots,\cos\theta_d,\sin\theta_d,0,\ldots,0)\in\R^{\sD}},
\]
where $\btheta \in \bR^\sd$, and $\btheta_i = \phi + \gamma_1 \cdot i + \epsilon_i $. Here
$$
\phi \sim \mathrm{Unif}\lrm{-1,1} \qquad \textnormal{and} \qquad \epsilon_i = \mathtt{N}(-\gamma_1, \sigma_1^2).
$$
To remove axis alignment, let $\sO \in \mathbb{O}_{\sD}$ be an arbitrary orthogonal matrix. We define the rotated torus as
$$
\cM = \lrm{\x_0(\btheta) \cdot  \sO^\top : \x_0(\btheta) \in \cM_0}.
$$
\end{example}



\subsection{Implementation details}
\begin{itemize}[leftmargin=*,itemsep=0pt]
    \item \textbf{Sphere. } We set the parameter values $\bgamma = \bm{0}_{\sd+1}$ and consider intrinsic dimensions $ \sd \in \lrm{2, 3, 4, 5}$.  The ambient dimension chosen as $\sD \in \lrm{2\sd, 3\sd, 4\sd}$ for each $\sd$. The velocity field $v$ is parametrized by a multilayer perceptron network with width 256 and depth 4, ReLU activations, and a linear output layer of dimension $\sD$. Training is performed using AdamW with learning rate \(2 \times 10^{-4}\), batch size $2048$, and $1,000$ iterations. For generation, we solve the learned ODE with forward Euler using $N = 250$ steps on the nonuniform grid \(t_i = 1-(1-i/N)^2, \; i=0,\ldots,N\).
    
    \item \textbf{Torus. } In this experiment, we use the parameter values $\gamma_1 = 0.35$ and $\sigma_1^2 = 0.35^2 + 0.15^2$. The choice of $(\sd, \sD)$ is the same as in the previous case. All other training settings remain unchanged, except that the network depth is increased to $6$ instead of $4$.
    
\end{itemize}
\subsection{Evaluations}
We evaluate the quality of the generated samples in \Cref{ex: sphere,ex: torus} using two complementary metrics: (i) the sliced Wasserstein distance \citep{karras2018progressive,kolouri2019generalized}, which measures distributional discrepancy, and (ii) the distance to the manifold, which quantifies geometric fidelity. Specifically, we report the standardized empirical sliced Wasserstein distance (\(\rmW_{1,\mathrm{slice}}^{\mathrm{std}}\)) and an empirical estimate of the manifold distance (\(\mathrm{dist}_{\cM}\)).

For each $(\sd,\sD)$, we repeat evaluation over $R=5$ independent runs and report mean and standard deviation \Cref{table:sphere_agg,table:torus_agg}. Across both the sphere and torus families, $\rmW_{1,\mathrm{slice}}^{\mathrm{std}}$ remains of the same order across ambient dimensions, while $\mathrm{dist}_{\cM}$ stays small, indicating that the learned flow {  accurately recovers the manifold geometry}.


\begin{table}[!h]
\caption{Mean and standard deviation of $\rmW_{1,\mathrm{slice}}^{\mathrm{std}}$ and $\mathrm{dist}_{\cM}$ for estimated density in \Cref{ex: sphere} across $(\sd, \sD)$.}
\begin{center}
\begin{small}
\begin{tabular}{cccc}
\toprule
$\sd$ & $\sD$ & $\rmW_{1,\mathrm{slice}}^{\mathrm{std}}$ & $\mathrm{dist}_{\cM}$ \\
\midrule
    & 4  & 0.04177 $\pm$ 0.01935 & 0.05304 $\pm$ 0.00460 \\
2   & 6  & 0.03788 $\pm$ 0.00725 & 0.05920 $\pm$ 0.00339 \\
    & 8  & 0.04194 $\pm$ 0.01330 & 0.05861 $\pm$ 0.00589 \\
\midrule
    & 6  & 0.03277 $\pm$ 0.00573 & 0.07028 $\pm$ 0.00140 \\
3   & 9  & 0.03994 $\pm$ 0.00997 & 0.06962 $\pm$ 0.00622 \\
    & 12 & 0.04648 $\pm$ 0.01795 & 0.07906 $\pm$ 0.00353 \\
\midrule
    & 8  & 0.03084 $\pm$ 0.00732 & 0.07861 $\pm$ 0.00261 \\
4   & 12 & 0.04097 $\pm$ 0.00590 & 0.07979 $\pm$ 0.00180 \\
    & 16 & 0.05370 $\pm$ 0.01152 & 0.10544 $\pm$ 0.00294 \\
\midrule
    & 10 & 0.03768 $\pm$ 0.00901 & 0.08246 $\pm$ 0.00226 \\
5   & 15 & 0.04473 $\pm$ 0.00780 & 0.10290 $\pm$ 0.00450 \\
    & 20 & 0.05324 $\pm$ 0.00657 & 0.14034 $\pm$ 0.00131 \\
\bottomrule
\end{tabular}
\label{table:sphere_agg}
\end{small}
\end{center}
\end{table}

\begin{table}[!h]
\caption{Mean and standard deviation of $\rmW_{1,\mathrm{slice}}^{\mathrm{std}}$ and $\mathrm{dist}_{\cM}$ for estimated density in \Cref{ex: torus} across $(\sd, \sD)$.}
\begin{center}
\begin{small}
\begin{tabular}{cccc}
\toprule
$\sd$ & $\sD$ & $\rmW_{1,\mathrm{slice}}^{\mathrm{std}}$ & $\mathrm{dist}_{\cM}$ \\
\midrule
    & 4  & 0.04407 $\pm$ 0.01421 & 0.05022 $\pm$ 0.00291 \\
2   & 6  & 0.02430 $\pm$ 0.00407 & 0.06331 $\pm$ 0.00212 \\
    & 8  & 0.03548 $\pm$ 0.01123 & 0.07248 $\pm$ 0.00218 \\
\midrule
    & 6  & 0.02998 $\pm$ 0.01201 & 0.07268 $\pm$ 0.00218 \\
3   & 9  & 0.03790 $\pm$ 0.01672 & 0.09524 $\pm$ 0.00215 \\
    & 12 & 0.03792 $\pm$ 0.00951 & 0.11211 $\pm$ 0.00311 \\
\midrule
    & 8  & 0.02833 $\pm$ 0.01532 & 0.10091 $\pm$ 0.00330 \\
4   & 12 & 0.02788 $\pm$ 0.00577 & 0.13726 $\pm$ 0.00369 \\
    & 16 & 0.03859 $\pm$ 0.01159 & 0.17358 $\pm$ 0.00611 \\
\midrule
    & 10 & 0.03017 $\pm$ 0.01236 & 0.13094 $\pm$ 0.00347 \\
5   & 15 & 0.03492 $\pm$ 0.00572 & 0.18492 $\pm$ 0.00265 \\
    & 20 & 0.04205 $\pm$ 0.01155 & 0.23003 $\pm$ 0.00515 \\
\bottomrule
\end{tabular}
\label{table:torus_agg}
\end{small}
\end{center}
\end{table}

\section{Discussion}
We study the theoretical properties of flow matching with the linear interpolation path when the target distribution is supported on a low-dimensional manifold. We show that the convergence rate of the resulting implicit density estimator is governed by the manifold’s intrinsic dimension (rather than the ambient dimension). These results lay the statistical foundations of flow-matching based models by providing a principled explanation for why linear-path flow matching can mitigate the curse of dimensionality by adapting to the intrinsic geometry of the data.
\paragraph{Future work.} There are several interesting future directions to pursue: (i) Extend our theory to the more realistic setting where data are concentrated near a low-dimensional manifold. For instance, when observations are corrupted by small, decaying noise around a manifold-supported distribution. In this regime, we expect the early-stopping requirement may be removable and the regularity of the velocity field may improve, since the singular behavior near \(t=1\) should be smoothed out. (ii) Investigate stratified settings in which the target distribution lies on a union of disjoint manifolds, as suggested by the floral example. It would be interesting to characterize the resulting regularity properties and to derive estimation rates for both the velocity field and the induced implicit density estimator, and (iii) another interesting direction is to employ flow based models for conditional distribution estimation or distribution regression where one incorporates additional covariates or control information in modeling the underlying distribution. 



\bibliography{references.bib}

@article{suzuki023diffusion,
  title={Diffusion models are minimax optimal distribution estimators},
  author={Oko, Kazusato and Akiyama, Shunta and Suzuki, Taiji},
  journal={arXiv preprint arXiv:2303.01861},
  year={2023}
}

@book{coddington1955theory,
  title={Theory of Ordinary Differential Equations},
  author={Coddington, Earl A. and Levinson, Norman},
  year={1955},
  publisher={McGraw-Hill}
}

@article{albergo2023stochastic,
  title={Stochastic interpolants: A unifying framework for flows and diffusions},
  author={Albergo, Michael S and Boffi, Nicholas M and Vanden-Eijnden, Eric},
  journal={arXiv preprint arXiv:2303.08797},
  year={2023}
}

@article{schmidt2017nonparametric,
  author = {Schmidt-Hieber, J.},
  title = {Nonparametric regression using deep neural networks with relu activation function},
  journal = {arXiv preprint arXiv:1708.06633},
  year = {2017}
}

@article{suzuki2018adaptivity,
  author = {Suzuki, T.},
  title = {Adaptivity of deep relu network for learning in besov and mixed smooth besov spaces: optimal rate and curse of dimensionality},
  journal = {arXiv preprint arXiv:1810.08033},
  year = {2018}
}

@article{liu2022flow,
  title={Flow straight and fast: Learning to generate and transfer data with rectified flow},
  author={Liu, Xingchao and Gong, Chengyue and Liu, Qiang},
  journal={arXiv preprint arXiv:2209.03003},
  year={2022}
}

@article{albergo2022building,
  title={Building normalizing flows with stochastic interpolants},
  author={Albergo, Michael S and Vanden-Eijnden, Eric},
  journal={arXiv preprint arXiv:2209.15571},
  year={2022}
}

@article{lipman2022flow,
  title={Flow matching for generative modeling},
  author={Lipman, Yaron and Chen, Ricky TQ and Ben-Hamu, Heli and Nickel, Maximilian and Le, Matt},
  journal={arXiv preprint arXiv:2210.02747},
  year={2022}
}

@inproceedings{esser2024scaling,
  title={Scaling rectified flow transformers for high-resolution image synthesis},
  author={Esser, Patrick and Kulal, Sumith and Blattmann, Andreas and Entezari, Rahim and M{\"u}ller, Jonas and Saini, Harry and Levi, Yam and Lorenz, Dominik and Sauer, Axel and Boesel, Frederic and others},
  booktitle={Forty-first international conference on machine learning},
  year={2024}
}

@article{bose2023se,
  title={Se (3)-stochastic flow matching for protein backbone generation},
  author={Bose, Avishek Joey and Akhound-Sadegh, Tara and Huguet, Guillaume and Fatras, Kilian and Rector-Brooks, Jarrid and Liu, Cheng-Hao and Nica, Andrei Cristian and Korablyov, Maksym and Bronstein, Michael and Tong, Alexander},
  journal={arXiv preprint arXiv:2310.02391},
  year={2023}
}

@inproceedings{graham2024proceedings,
  title={Proceedings of the 18th Conference of the European Chapter of the Association for Computational Linguistics (Volume 1: Long Papers)},
  author={Graham, Yvette and Purver, Matthew},
  booktitle={Proceedings of the 18th Conference of the European Chapter of the Association for Computational Linguistics (Volume 1: Long Papers)},
  year={2024}
}

@inproceedings{ma2024sit,
  title={Sit: Exploring flow and diffusion-based generative models with scalable interpolant transformers},
  author={Ma, Nanye and Goldstein, Mark and Albergo, Michael S and Boffi, Nicholas M and Vanden-Eijnden, Eric and Xie, Saining},
  booktitle={European Conference on Computer Vision},
  pages={23--40},
  year={2024},
  organization={Springer}
}

@article{fukumizu2024flow,
  title={Flow matching achieves almost minimax optimal convergence},
  author={Fukumizu, Kenji and Suzuki, Taiji and Isobe, Noboru and Oko, Kazusato and Koyama, Masanori},
  journal={arXiv preprint arXiv:2405.20879},
  year={2024}
}

@article{linsparse,
  author  = {Minwoo Chae and Dongha Kim and Yongdai Kim and Lizhen Lin},
  title   = {A Likelihood Approach to Nonparametric Estimation of a Singular Distribution Using Deep Generative Models},
  journal = {Journal of Machine Learning Research},
  year    = {2023},
  volume  = {24},
  number  = {77},
  pages   = {1--42},
  url     = {http://jmlr.org/papers/v24/21-1099.html}
}

@article{ozakin2009submanifold,
  title={Submanifold density estimation},
  author={Ozakin, Arkadas and Gray, Alexander},
  journal={Advances in neural information processing systems},
  volume={22},
  year={2009}
}

@article{amarimanifold,
author = {Eddie Aamari and Cl{\'e}ment Levrard},
title = {{Nonasymptotic rates for manifold, tangent space and curvature estimation}},
volume = {47},
journal = {The Annals of Statistics},
number = {1},
publisher = {Institute of Mathematical Statistics},
pages = {177 -- 204},
year = {2019},
doi = {10.1214/18-AOS1685},
URL = {https://doi.org/10.1214/18-AOS1685}
}

@article{yunyangmanifold,
author = {Rong Tang and Yun Yang},
title = {{Minimax rate of distribution estimation on unknown submanifolds under adversarial losses}},
volume = {51},
journal = {The Annals of Statistics},
number = {3},
publisher = {Institute of Mathematical Statistics},
pages = {1282 -- 1308},
year = {2023},
doi = {10.1214/23-AOS2291},
URL = {https://doi.org/10.1214/23-AOS2291}
}

@inproceedings{tang2024adaptivity,
  title={Adaptivity of diffusion models to manifold structures},
  author={Tang, Rong and Yang, Yun},
  booktitle={International Conference on Artificial Intelligence and Statistics},
  pages={1648--1656},
  year={2024},
  organization={PMLR}
}

@article{tong2023improving,
  title={Improving and generalizing flow-based generative models with minibatch optimal transport},
  author={Tong, Alexander and Fatras, Kilian and Malkin, Nikolay and Huguet, Guillaume and Zhang, Yanlei and Rector-Brooks, Jarrid and Wolf, Guy and Bengio, Yoshua},
  journal={arXiv preprint arXiv:2302.00482},
  year={2023}
}

@article{judith2024convergence,
  title={Convergence of diffusion models under the manifold hypothesis in high-dimensions},
  author={Azangulov, Iskander and Deligiannidis, George and Rousseau, Judith},
  journal={arXiv preprint arXiv:2409.18804},
  year={2024}
}

@article{gao2024convergence,
  title={Convergence of continuous normalizing flows for learning probability distributions},
  author={Gao, Yuan and Huang, Jian and Jiao, Yuling and Zheng, Shurong},
  journal={arXiv preprint arXiv:2404.00551},
  year={2024}
}

@article{divol2022measure,
  title={Measure estimation on manifolds: an optimal transport approach},
  author={Divol, Vincent},
  journal={Probability Theory and Related Fields},
  volume={183},
  number={1},
  pages={581--647},
  year={2022},
  publisher={Springer}
}

@article{davis2024fisher,
  title={Fisher flow matching for generative modeling over discrete data},
  author={Davis, Oscar and Kessler, Samuel and Petrache, Mircea and Ceylan, {\.I}smail {\.I} and Bronstein, Michael and Bose, Avishek J},
  journal={Advances in Neural Information Processing Systems},
  volume={37},
  pages={139054--139084},
  year={2024}
}

@article{gat2024discrete,
  title={Discrete flow matching},
  author={Gat, Itai and Remez, Tal and Shaul, Neta and Kreuk, Felix and Chen, Ricky TQ and Synnaeve, Gabriel and Adi, Yossi and Lipman, Yaron},
  journal={Advances in Neural Information Processing Systems},
  volume={37},
  pages={133345--133385},
  year={2024}
}

@article{su2025theoretical,
  title={A theoretical analysis of discrete flow matching generative models},
  author={Su, Maojiang and Lu, Mingcheng and Hu, Jerry Yao-Chieh and Wu, Shang and Song, Zhao and Reneau, Alex and Liu, Han},
  journal={arXiv preprint arXiv:2509.22623},
  year={2025}
}

@article{cheng2025alpha,
  title={$\alpha$-Flow: A Unified Framework for Continuous-State Discrete Flow Matching Models},
  author={Cheng, Chaoran and Li, Jiahan and Fan, Jiajun and Liu, Ge},
  journal={arXiv preprint arXiv:2504.10283},
  year={2025}
}

@article{gao2024gaussian,
  title={Gaussian interpolation flows},
  author={Gao, Yuan and Huang, Jian and Jiao, Yuling},
  journal={Journal of Machine Learning Research},
  volume={25},
  number={253},
  pages={1--52},
  year={2024}
}

@phdthesis{Kunkel2025_1000188527,
    author       = {Kunkel, Lea Maria},
    year         = {2025},
    title        = {Statistical Guarantees for Generative Models as Distribution Estimators},
    doi          = {10.5445/IR/1000188527},
    publisher    = {{Karlsruher Institut für Technologie (KIT)}},
    pagetotal    = {187},
    school       = {Karlsruher Institut für Technologie (KIT)},
    language     = {english}
}

@InProceedings{zhouerror,
  title = 	 {An Error Analysis of Flow Matching for Deep Generative Modeling},
  author =       {Zhou, Zhengyu and Liu, Weiwei},
  booktitle = 	 {Proceedings of the 42nd International Conference on Machine Learning},
  pages = 	 {78903-78932},
  year = 	 {2025},
  volume = 	 {267},
  series = 	 {Proceedings of Machine Learning Research},
  month = 	 {13--19 Jul},
  publisher =    {PMLR},
  url = 	 {https://proceedings.mlr.press/v267/zhou25l.html}
}

@article{chen2023flow,
  title={Flow matching on general geometries},
  author={Chen, Ricky TQ and Lipman, Yaron},
  journal={arXiv preprint arXiv:2302.03660},
  year={2023}
}

@inproceedings{hu2024flow,
  title={Flow matching for conditional text generation in a few sampling steps},
  author={Hu, Vincent and Wu, Di and Asano, Yuki and Mettes, Pascal and Fernando, Basura and Ommer, Bj{\"o}rn and Snoek, Cees},
  booktitle={Proceedings of the 18th Conference of the European Chapter of the Association for Computational Linguistics (Volume 2: Short Papers)},
  pages={380--392},
  year={2024}
}

@article{gui2025depthfm, 
    title={DepthFM: Fast Generative Monocular Depth Estimation with Flow Matching}, 
    volume={39}, 
    url={https://ojs.aaai.org/index.php/AAAI/article/view/32330}, 
    DOI={10.1609/aaai.v39i3.32330}, number={3}, 
    journal={Proceedings of the AAAI Conference on Artificial Intelligence}, 
    author={Gui, Ming and Schusterbauer, Johannes and Prestel, Ulrich and Ma, Pingchuan and Kotovenko, Dmytro and Grebenkova, Olga and Baumann, Stefan Andreas and Hu, Vincent Tao and Ommer, Björn}, 
    year={2025}, 
    month={Apr.}, 
    pages={3203-3211} }

@article{bansal2024wasserstein,
  title={On the Wasserstein Convergence and Straightness of Rectified Flow},
  author={Bansal, Vansh and Roy, Saptarshi and Sarkar, Purnamrita and Rinaldo, Alessandro},
  journal={arXiv preprint arXiv:2410.14949},
  year={2024}
}

@article{kornilov2024optimal,
  title={Optimal flow matching: Learning straight trajectories in just one step},
  author={Kornilov, Nikita and Mokrov, Petr and Gasnikov, Alexander and Korotin, Aleksandr},
  journal={Advances in Neural Information Processing Systems},
  volume={37},
  pages={104180--104204},
  year={2024}
}

@article{dao2023flow,
  title={Flow matching in latent space},
  author={Dao, Quan and Phung, Hao and Nguyen, Binh and Tran, Anh},
  journal={arXiv preprint arXiv:2307.08698},
  year={2023}
}

@inproceedings{hu2024latent,
  title={Latent space editing in transformer-based flow matching},
  author={Hu, Vincent Tao and Zhang, Wei and Tang, Meng and Mettes, Pascal and Zhao, Deli and Snoek, Cees},
  booktitle={Proceedings of the AAAI conference on artificial intelligence},
  volume={38},
  pages={2247--2255},
  year={2024}
}

@article{marzouk2024distribution,
  title={Distribution learning via neural differential equations: a nonparametric statistical perspective},
  author={Marzouk, Youssef and Ren, Zhi Robert and Wang, Sven and Zech, Jakob},
  journal={Journal of Machine Learning Research},
  volume={25},
  number={232},
  pages={1--61},
  year={2024}
}

@inproceedings{karras2018progressive,
  title={Progressive Growing of {GAN}s for Improved Quality, Stability, and Variation},
  author={Karras, Tero and Aila, Timo and Laine, Samuli and Lehtinen, Jaakko},
  booktitle={International Conference on Learning Representations (ICLR)},
  year={2018}
}

@article{kolouri2019generalized,
  title={Generalized sliced wasserstein distances},
  author={Kolouri, Soheil and Nadjahi, Kimia and Simsekli, Umut and Badeau, Roland and Rohde, Gustavo},
  journal={Advances in neural information processing systems},
  volume={32},
  year={2019}
}

@article{kunkel2025minimax,
  title={On the minimax optimality of Flow Matching through the connection to kernel density estimation},
  author={Kunkel, Lea and Trabs, Mathias},
  journal={arXiv preprint arXiv:2504.13336},
  year={2025}
}

@article{kunkel2025distribution,
  title={Distribution estimation via Flow Matching with Lipschitz guarantees},
  author={Kunkel, Lea},
  journal={arXiv preprint arXiv:2509.02337},
  year={2025}
}

@article{brascamp1976extensions,
  title={On extensions of the Brunn-Minkowski and Pr{\'e}kopa-Leindler theorems, including inequalities for log concave functions, and with an application to the diffusion equation},
  author={Brascamp, Herm Jan and Lieb, Elliott H},
  journal={Journal of functional analysis},
  volume={22},
  number={4},
  pages={366--389},
  year={1976},
  publisher={Elsevier}
}

@book{ledoux2001concentration,
  author    = {Ledoux, Michel},
  title     = {The Concentration of Measure Phenomenon},
  series    = {Mathematical Surveys and Monographs},
  volume    = {89},
  year      = {2001},
  publisher = {American Mathematical Society},
  address   = {Providence, RI},
  doi       = {10.1090/surv/089}
}

@inproceedings{hein2005intrinsic,
  title={Intrinsic dimensionality estimation of submanifolds in Rd},
  author={Hein, Matthias and Audibert, Jean-Yves},
  booktitle={Proceedings of the 22nd international conference on Machine learning},
  pages={289--296},
  year={2005}
}

@article{lecun2002gradient,
  title={Gradient-based learning applied to document recognition},
  author={LeCun, Yann and Bottou, L{\'e}on and Bengio, Yoshua and Haffner, Patrick},
  journal={Proceedings of the IEEE},
  volume={86},
  number={11},
  pages={2278--2324},
  year={2002},
  publisher={Ieee}
}

@article{costa2004geodesic,
  title={Geodesic entropic graphs for dimension and entropy estimation in manifold learning},
  author={Costa, Jose A and Hero, Alfred O},
  journal={IEEE Transactions on Signal Processing},
  volume={52},
  number={8},
  pages={2210--2221},
  year={2004},
  publisher={IEEE}
}

@article{roy2026low,
  title={Low-Dimensional Adaptation of Rectified Flow: A New Perspective through the Lens of Diffusion and Stochastic Localization},
  author={Roy, Saptarshi and Rinaldo, Alessandro and Sarkar, Purnamrita},
  journal={arXiv preprint arXiv:2601.15500},
  year={2026}
}

@inproceedings{kumar2025likelihood,
  title={A Likelihood Based Approach to Distribution Regression Using Conditional Deep Generative Models},
  author={Kumar, Shivam and Yang, Yun and Lin, Lizhen},
  booktitle={International Conference on Machine Learning},
  pages={31964--31990},
  year={2025},
  organization={PMLR}
}
\bibliographystyle{apalike}

\newpage
\appendix
\onecolumn

\begin{center}
    {\bf \Large 
Supplementary Materials for ``Flow Matching is Adaptive to Manifold Structures''
 }
\end{center}

{ 
\section{Additional numerical experiments}\label{sec:add_num}
\subsection{Floral manifold}
The following example is designed to closely match our model assumptions while remaining visually interpretable.
\begin{example}[Floral segments embedded in $\bR^\sD$]\label{ex: floral}
Fix $\sd = 1$ and $\sD = 2$, and let $m \ge 2$ denote the number of petals.  For each $i \in \{0,1,\ldots,m-1\}$, define a spiral-segment curve
\[
    \psi_i(t)\;=\;\Big(r(t)\cos\theta_i(t),\; r(t)\sin\theta_i(t)\Big)\in \bR^2, \;\; t\in[0,1],
\]
where the radius increases linearly and the angle rotates slightly along the segment:
$$
    r(t) = r_{\mathrm{in}} + t\,(r_{\mathrm{out}} - r_{\mathrm{in}}),
    \qquad
    \theta_i(t) = \frac{2\pi i}{m} + 2\pi\,\tau\, t.
$$
Here $0 < r_{\mathrm{in}} < r_{\mathrm{out}}$ control the inner and outer radii, and
$\tau \in (0,1)$ determines the angular twist of each petal.

We define the manifold as the union of these spiral segments,
$$
    \mathcal{M}_0 =    \bigcup_{i=0}^{m-1} \lrm{ \psi_i(t) : t \in [0,1] }\subset \bR^2.
$$
\textbf{Target distribution $\bpi_1$.}
Draw $i \sim \mathrm{Unif}\{0,\dots,m-1\}$ and $t \sim \mathrm{Unif}[0,1]$, independently.
Let $Z_1, Z_2, Z_3 \sim \mathtt{N}(0,1)$ be independent noises. Define $\theta^\prime_i = \theta_i(t) + \sigma_\theta Z_1$, and generate the observed point in $\bR^2$ by
\[
X = \lrs{r(t) \cos(\btheta^\prime_i), r(t) \sin(\btheta^\prime_i) } + \sigma_r \cdot \lrs{Z_2, Z_3}.
\]
\end{example}
\paragraph{Implementation details}
We use the parameter values
\[
    (m, r_{\mathrm{in}}, r_{\mathrm{out}}, \tau, \sigma_r, \sigma_\theta) \;=\;  (5, 1, 4, 0.2, 0.05, 0.05).
\] 
The velocity field $v$ is parametrized by a multilayer perceptron conditioned on $t$ via a sinusoidal time embedding. We use a fully-connected network with width $256$ and depth $4$, ReLU activations, and a linear output layer in $\bR^2$. Training is performed using Adam with learning rate \(10^{-3}\), batch size $512$, and $5,000$ iterations. A cosine annealing learning-rate schedule is applied with \(T_{\max}=5000\) steps. For generation, we solve the ODE using the fourth-order Runge-Kutta with $N = 500$ time steps, using the discretization $t_i = [1 - (1- i/N)^2], \; i = 0, \ldots, N$.

\paragraph{Evaluations}
\begin{figure}[!h]
    \centering
    \includegraphics[width=0.8\linewidth]{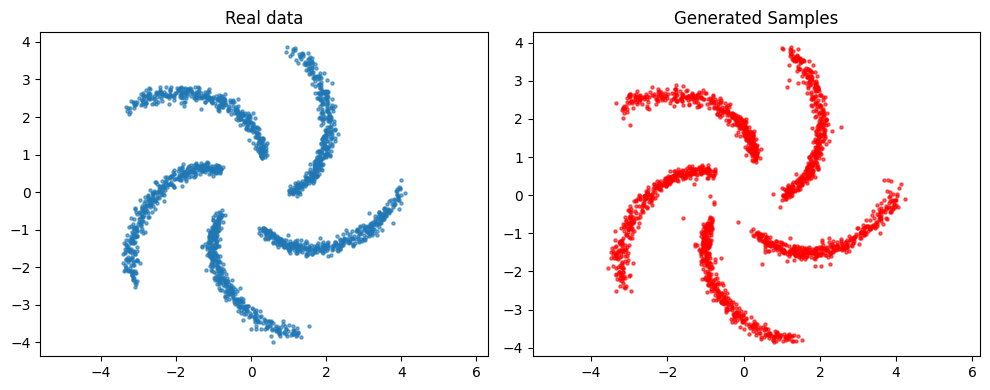}
    \caption{Comparison of generated samples and training data  for \Cref{ex: floral}. The learned flow generates samples that recover the petal geometry and place negligible mass in the regions between segments.}
    \label{fig:floral_generation}
\end{figure}
\Cref{ex: floral} provides an illustrative example that closely aligns with our model assumptions. Each spiral segment is a smooth one-dimensional curve ($\sd=1$), and the target distribution is supported on a union of such low-dimensional manifolds. This makes it a useful visual stress-test of the learned flow's ability to concentrate mass on $\cM$. We therefore provide samples in \Cref{fig:floral_generation}, which show that the learned sampler reproduces the multi-petal structure and generates points that lie on the spiral segments.

\subsection{Real data}
We validate manifold-adaptive convergence on MNIST handwritten digits \citep{lecun2002gradient}, a setting where the gap between ambient and intrinsic dimension is substantial and the ambient dimension is large. Each $28 \times 28$ grayscale image lies in $\bR^{784}$ ($\sD = 784$), yet prior work estimates the intrinsic dimension at $\sd \approx 10$--$15$ \citep{costa2004geodesic,hein2005intrinsic}. MNIST has also served as a standard testbed for studying generative modeling under the manifold hypothesis in  \citet{linsparse,kumar2025likelihood}. Our theory predicts that convergence rates should scale with $\sd$ rather than $\sD$; we test this by examining both generative quality and sample complexity.

\paragraph{Implementation details}
The velocity field $v$ is parametrized by a multilayer perceptron with width $1024$, depth $4$, LayerNorm, and ReLU activations; time conditioning uses a sinusoidal embedding of dimension $256$. Training uses Adam with learning rate $2 \times 10^{-4}$, batch size $512$, and $10{,}000$ iterations, with exponential moving average (decay $0.999$) applied to the weights. To handle the bounded pixel range $[0,1]$, we apply a logit transformation $\x \mapsto \log\bigl((\x + \alpha)/(1 - \x + \alpha)\bigr)$ with $\alpha = 0.05$ for dequantization, mapping images to $\bR^{784}$ where the Gaussian source is well-matched. We train separate models for each digit class $k \in \{0, \ldots, 9\}$, using the full training set per class ($\approx 5{,}000$--$6{,}000$ samples); this isolates each digit manifold and avoids confounding effects from multi-modal structure. For generation, we solve the learned ODE using forward Euler with $N = 500$ steps on the nonuniform grid $t_i = 1 - (1 - i/N)^2$, which clusters integration steps near $t = 1$ where the velocity field concentrates mass onto the target manifold.

\paragraph{Evaluation}
We evaluate distributional quality using the sliced 1-Wasserstein distance $\Wslice$ \citep{kolouri2019generalized}, which remains computationally tractable in high ambient dimension via random one-dimensional projections. For each digit, we generate $n_{\mathrm{eval}} = 1000$ samples from the learned flow and compare against $n_{\mathrm{eval}} = 1000$ held-out test samples, estimating $\Wslice$ with $K = 1000$ Monte Carlo directions. We report two quantities: $\WsliceD$ (generated vs.\ test) and $\WsliceBL$ (test vs.\ test), where the latter represents the irreducible finite-sample estimation error.

\begin{figure}[H]
\centering
\makebox[\textwidth]{%
\hfill
\includegraphics[width=0.4\textwidth]{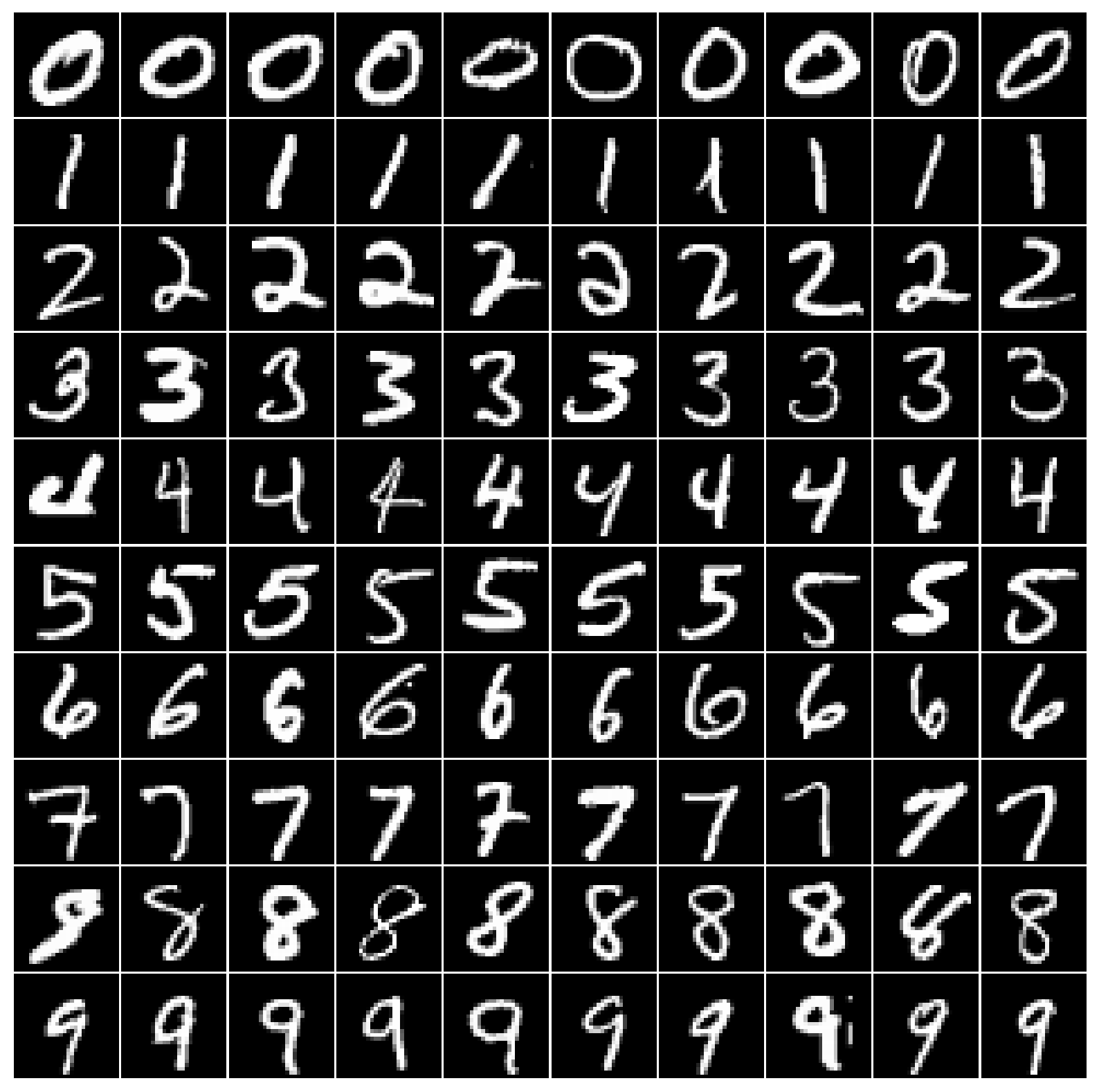}%
\hfill
\includegraphics[width=0.4\textwidth]{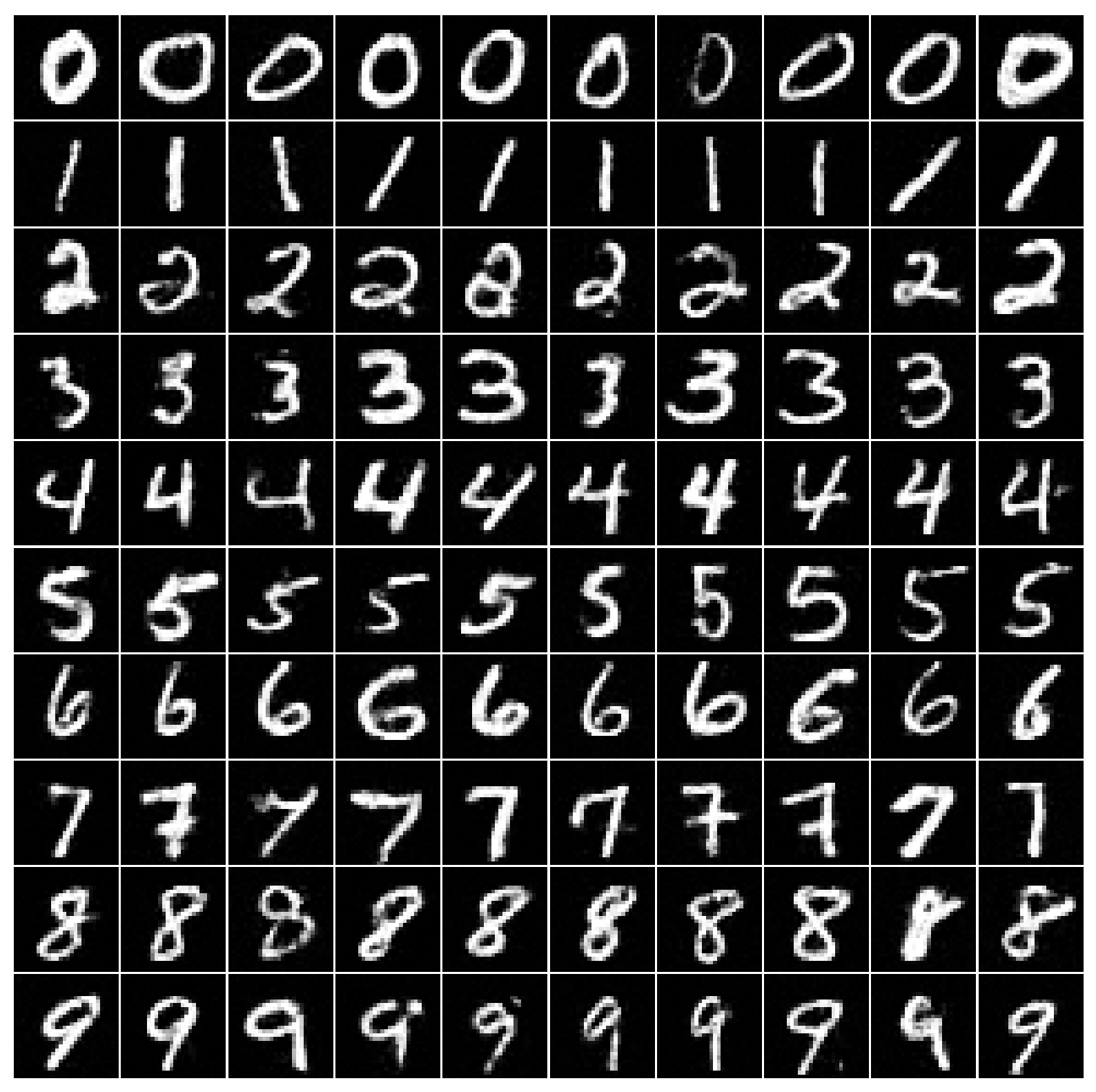}%
\hfill
}
\caption{Real (left) vs.\ generated (right) MNIST samples.}
\label{fig:mnist_samples}
\end{figure}

Table~\ref{tab:mnist_results} shows that across all digits, $\WsliceD$ lies within $1.1$--$2.0\times$ of $\WsliceBL$. This indicates that the learned flow produces samples whose distributional discrepancy from the true digit manifold is comparable to finite-sample noise, confirming that flow matching successfully learns the low-dimensional structure despite the high ambient dimension.

\begin{table}[H]
\centering
{
\caption{Per-digit evaluation on MNIST ($n_{\mathrm{eval}} = 1000$).}
\label{tab:mnist_results}
\vspace{0.5em}
\begin{tabular}{c|ccc}
\toprule
Digit & $\WsliceD$ & $\WsliceBL$ \\
\midrule
0 & $0.0229$ & $0.0212$ \\
1 & $0.0158$ & $0.0113$ \\
2 & $0.0273$ & $0.0195$ \\
3 & $0.0265$ & $0.0183$ \\
4 & $0.0246$ & $0.0177$ \\
5 & $0.0283$ & $0.0218$ \\
6 & $0.0333$ & $0.0172$ \\
7 & $0.0235$ & $0.0164$ \\
8 & $0.0304$ & $0.0188$ \\
9 & $0.0243$ & $0.0156$ \\
\bottomrule
\end{tabular}
}
\end{table}

\paragraph{Sample complexity ablation}

To directly probe the $n$-dependence predicted by our theory, we conduct an ablation study on digit~3. For each $n \in \{100, 250, 500, 1000, 2000, 5000\}$, we pre-generate a fixed training set of size $n$ (ensuring the same samples are used across all training runs at that $n$), train for $10{,}000$ iterations, and evaluate $\WsliceD$ against held-out test data.

\paragraph{Rate estimation}
We model the convergence as a power law $\WsliceD(n) = a \cdot n^{-\beta}$ and estimate $\beta$ via ordinary least squares on the log-transformed data. Table~\ref{tab:digit3_ablation} reports the results. Log-log regression yields $\hat{\beta} = 0.152$ with $R^2 = 0.867$ and 95\% confidence interval $[0.069, 0.234]$.

Under the theoretical rate $\beta = (\alpha + 1)/(2\alpha + \sd)$ with Lipschitz regularity $\alpha = 1$, inverting yields $\sd = 2/\beta - 2$. The observed $\hat{\beta} = 0.152$ implies
\[
\sd_{\mathrm{implied}} \;=\; \frac{2}{0.152} - 2 \;\approx\; 11.2,
\]
which falls squarely within the range $\sd \approx 10$--$15$ reported in prior intrinsic dimension studies \citep{costa2004geodesic,hein2005intrinsic}. This provides empirical support for the manifold-adaptive convergence predicted by \Cref{thm: main}.

Figure~\ref{fig:digit3_loglog} displays the log-log fit. The learned flow approaches the baseline $\WsliceBL = 0.0180$ as $n$ increases.

\begin{table}[H]
\centering
\caption{Sample complexity ablation for digit~3.}
\label{tab:digit3_ablation}
\vspace{0.5em}
{
\begin{tabular}{c|c}
\toprule
$n$ & $\WsliceD$ \\
\midrule
100 & $0.0416$ \\
250 & $0.0438$ \\
500 & $0.0406$ \\
1000 & $0.0325$ \\
2000 & $0.0275$ \\
5000 & $0.0254$ \\
\bottomrule
\end{tabular}
}
\end{table}

\begin{figure}[H]
\centering
\includegraphics[width=0.7\textwidth]{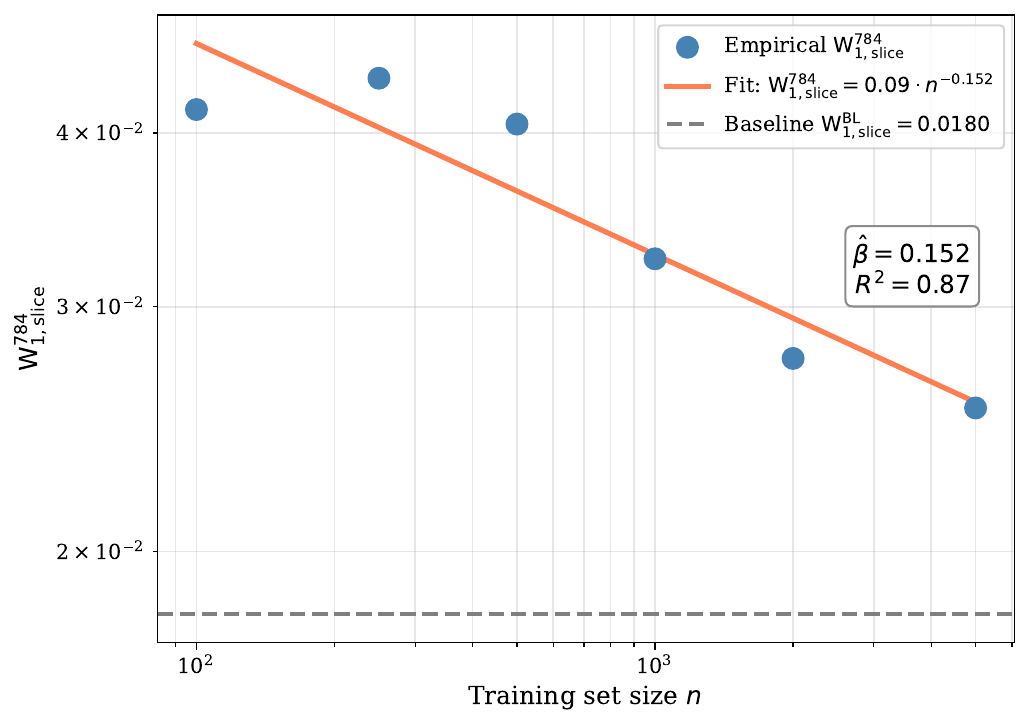}
\caption{Log-log regression for digit~3. Points show empirical $\WsliceD$; solid line shows the power-law fit $\WsliceD \propto n^{-0.152}$; dashed horizontal line indicates baseline $\WsliceBL = 0.0180$.}
\label{fig:digit3_loglog}
\end{figure}


\subsection{Sample complexity ablation on the sphere}

The projected sphere manifold (Example~\ref{ex: sphere}) provides a controlled setting to test two predictions of Theorem~\ref{thm: main}: (i) the convergence rate $\Wstd \propto n^{-\gamma}$ depends on the intrinsic dimension $\sd$, and (ii) fixing $\sd$, the rate is independent of the ambient dimension $\sD$. We conduct an $n$-ablation across multiple $(\sd, \sD)$ pairs to probe both predictions directly.

\paragraph{Experimental design}
For each $(\sd, \sD) \in \{(2,6), (2,9), (2,12), (3,8), (3,12), (4,10)\}$, we pre-generate a fixed training set of size $n \in \{256, 512, 1024, 2048, 4096, 8192, 16384\}$ from the target $\bpi_1$, train the velocity field, and evaluate $\Wstd$ against $N_{\mathrm{eval}} = 4096$ fresh samples. The baseline $\WstdBL$ is computed between two independent test batches, representing the irreducible finite-sample floor. All other settings follow Example~\ref{ex: sphere}.

\paragraph{Results}
Table~\ref{tab:sphere_n_ablation} reports $\Wstd$ as a function of $n$ for six $(\sd, \sD)$ configurations. Across all settings, $\Wstd$ decreases monotonically with $n$, approaching baselines $\WstdBL \approx 0.011$--$0.014$. The key observation is that, at fixed $\sd$, the values of $\Wstd$ are nearly identical across different $\sD$. For instance, at $\sd = 2$ and $n = 4096$, we obtain $\Wstd \approx 0.018$ for $\sD \in \{6, 9, 12\}$. This confirms that convergence is governed by the intrinsic dimension $\sd$, not the ambient dimension $\sD$, providing direct empirical support for manifold-adaptive convergence.

\begin{table}[H]
\centering
\caption{Sample complexity ablation on the sphere $\bS^{\sd} \subset \bR^{\sD}$.}
\label{tab:sphere_n_ablation}
\vspace{0.5em}
\resizebox{\columnwidth}{!}{%
{
\begin{tabular}{r|ccc|cc|c}
\toprule
& \multicolumn{3}{c|}{$\sd = 2$} & \multicolumn{2}{c|}{$\sd = 3$} & $\sd = 4$ \\
$n$ & $\sD = 6$ & $\sD = 9$ & $\sD = 12$ & $\sD = 8$ & $\sD = 12$ & $\sD = 10$ \\
\midrule
256   & $0.043 \pm 0.018$ & $0.052 \pm 0.018$ & $0.045 \pm 0.014$ & $0.047 \pm 0.013$ & $0.057 \pm 0.011$ & $0.042 \pm 0.006$ \\
512   & $0.029 \pm 0.008$ & $0.043 \pm 0.017$ & $0.035 \pm 0.009$ & $0.034 \pm 0.009$ & $0.036 \pm 0.005$ & $0.028 \pm 0.007$ \\
1024  & $0.033 \pm 0.006$ & $0.027 \pm 0.008$ & $0.034 \pm 0.005$ & $0.029 \pm 0.005$ & $0.033 \pm 0.005$ & $0.025 \pm 0.004$ \\
2048  & $0.022 \pm 0.005$ & $0.024 \pm 0.004$ & $0.026 \pm 0.005$ & $0.020 \pm 0.005$ & $0.022 \pm 0.006$ & $0.023 \pm 0.003$ \\
4096  & $0.019 \pm 0.004$ & $0.019 \pm 0.005$ & $0.017 \pm 0.005$ & $0.017 \pm 0.003$ & $0.020 \pm 0.005$ & $0.019 \pm 0.005$ \\
8192  & $0.018 \pm 0.001$ & $0.025 \pm 0.005$ & $0.020 \pm 0.002$ & $0.017 \pm 0.004$ & $0.020 \pm 0.004$ & $0.016 \pm 0.005$ \\
16384 & $0.020 \pm 0.006$ & $0.024 \pm 0.007$ & $0.017 \pm 0.006$ & $0.018 \pm 0.007$ & $0.017 \pm 0.004$ & $0.016 \pm 0.003$ \\
\midrule
$\WstdBL$ & $0.013 \pm 0.002$ & $0.014 \pm 0.002$ & $0.011 \pm 0.002$ & $0.012 \pm 0.002$ & $0.011 \pm 0.001$ & $0.012 \pm 0.002$ \\
\bottomrule
\end{tabular}%
}
}
\end{table}

\paragraph{Rate estimation}
We model convergence as $\Wstd(n) \propto n^{-\hat\gamma}$ and estimate $\hat\gamma$ via OLS on log-transformed data for $n \ge 512$ (excluding $n = 256$ due to high variance at small sample sizes). Since $\sD$-independence holds empirically, we pool data across ambient dimensions for each $\sd$, obtaining $\hat\gamma = 0.14$ for $\sd = 2$, $\hat\gamma = 0.19$ for $\sd = 3$, and $\hat\gamma = 0.17$ for $\sd = 4$.


\begin{figure}[H]
\centering
\includegraphics[width=\textwidth]{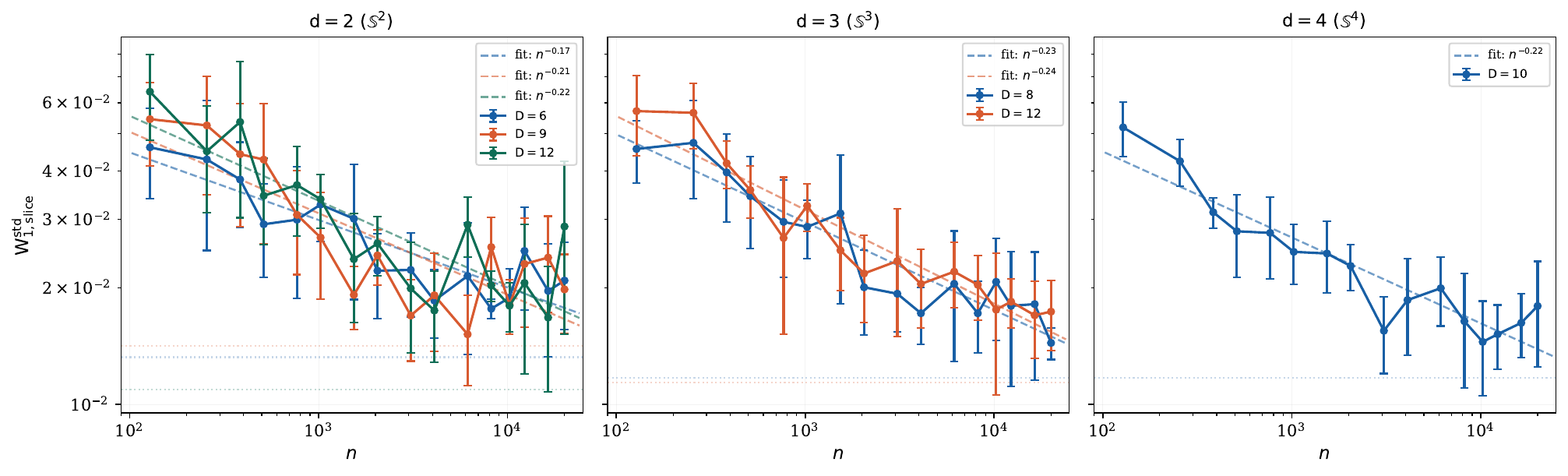}
\caption{Sample complexity on the sphere (log-log). Solid: empirical $\Wstd$; dashed: power-law fit; dotted: baseline $\WstdBL$.}
\label{fig:sphere_rate}
\end{figure}



}

{ 
\section{Logarithmic norm and one-sided Lipschitz condition}\label{sec:app_lognorm}

\begin{definition}[Logarithmic norm]
For $\bA\in\R^{D\times D}$, the \emph{logarithmic norm} with respect to the $\ell_2$-norm is
\begin{equation}\nonumber
    \mu_2(\bA) \;:=\; \lambda_{\max}\!\!\left(\frac{\bA+\bA^\top}{2}\right).
\end{equation}
\end{definition}

\begin{lemma}[Properties of $\mu_2$]\label{lem:mu2_props}
Let $\bA\in\R^{D\times D}$.
\begin{enumerate}[label=\textnormal{(\alph*)},leftmargin=*,itemsep=6pt]
    \item \label{item:mu_leq_op} $\mu_2(\bA) \le \|\bA\|_{\op}$.
    \item \label{item:mu_neg} $\mu_2(\bA)$ can be negative: $\mu_2(-c\,\bI) = -c$ for $c>0$.
    \item \label{item:mu_sym} If $\bA$ is symmetric, then $\mu_2(\bA) = \lambda_{\max}(\bA)$ and $\|\bA\|_{\op} = \max_j|\lambda_j(\bA)|$.
\end{enumerate}
\end{lemma}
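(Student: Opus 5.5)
The three items are elementary linear algebra, and I will prove them in the order (c), (a), (b), since (a) uses the spectral description from (c). The starting point is to write $\bS := (\bA+\bA^\top)/2$, which is symmetric, so that $\mu_2(\bA)=\lambda_{\max}(\bS)$. The Rayleigh-quotient characterization then gives
\begin{equation*}
    \mu_2(\bA) \;=\; \max_{\|\bx\|_2=1}\bx^\top\bS\bx \;=\; \max_{\|\bx\|_2=1}\bx^\top\bA\bx ,
\end{equation*}
using $\bx^\top\bA^\top\bx=\bx^\top\bA\bx$. This identity is the key tool for all three parts.

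For (c), if $\bA$ is symmetric then $\bS=\bA$, so $\mu_2(\bA)=\lambda_{\max}(\bA)$ directly from the definition. For the operator-norm claim, I will diagonalize $\bA=\bQ\bLambda\bQ^\top$ with $\bQ$ orthogonal. Orthogonal invariance of the $\ell_2$ norm then gives $\|\bA\bx\|_2=\|\bLambda\bQ^\top\bx\|_2$, and maximizing over unit vectors yields $\max_j|\lambda_j(\bA)|$.

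For (a), I will apply Cauchy--Schwarz to the displayed identity: for any unit $\bx$, $\bx^\top\bA\bx\le\|\bx\|_2\|\bA\bx\|_2\le\|\bA\|_{\op}$. Taking the maximum over unit $\bx$ gives $\mu_2(\bA)\le\|\bA\|_{\op}$.

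For (b), take $\bA=-c\bI$. Then $\bS=-c\bI$, all of whose eigenvalues equal $-c$, so $\mu_2(\bA)=-c<0$; this is also an instance of (c). By contrast $\|\bA\|_{\op}=c$, which shows that the inequality in (a) can be strict and that $\mu_2$ is not a norm.

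There is no genuine obstacle here. The only point needing care is the reduction $\bx^\top\bS\bx=\bx^\top\bA\bx$, which is what lets the non-symmetric $\bA$ be handled through its symmetric part.
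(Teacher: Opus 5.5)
Your proposal is correct and follows essentially the same route as the paper: both write $\mu_2(A)=\max_{\|u\|_2=1}u^\top A u$ via the Rayleigh quotient of the symmetric part, bound this by Cauchy--Schwarz for (a), and handle (b) and (c) by direct computation. The only difference is that you also prove the operator-norm identity in (c) from the spectral decomposition, which the paper simply asserts.
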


\begin{proof}
\ref{item:mu_leq_op}:\ \ For any unit vector $\bu$,
\begin{align*}
    \bu^\top\frac{\bA+\bA^\top}{2}\bu
    = \mathrm{Re}(\bu^\top\bA\bu)
    \le |\bu^\top\bA\bu|
    \le \|\bA\bu\|\,\|\bu\|
    \le \|\bA\|_{\op}.
\end{align*}
Taking the supremum over unit $\bu$ gives $\mu_2(\bA)\le\|\bA\|_{\op}$.

\medskip
\ref{item:mu_neg}:\ \ $\frac{(-c\bI)+(-c\bI)^\top}{2} = -c\bI$, whose largest eigenvalue is $-c$.

\medskip
\ref{item:mu_sym}:\ \ When $\bA=\bA^\top$, $\frac{\bA+\bA^\top}{2}=\bA$, so $\mu_2(\bA)=\lambda_{\max}(\bA)$.  Meanwhile, $\|\bA\|_{\op}=\max_j|\lambda_j(\bA)|$.
\end{proof}

\begin{definition}[One-sided Lipschitz condition]\label{def:osl}
A vector field $b:\R^D\to\R^D$ is $\mu$-\emph{one-sided Lipschitz} ($\mu$-OSL) if
\begin{equation}\label{eq:osl_def}
    \langle b(\bx)-b(\by),\;\bx-\by\rangle
    \;\le\;\mu\,\|\bx-\by\|^2
    \qquad\forall\;\bx,\by\in\R^D.
\end{equation}
\end{definition}

\begin{lemma}\label{lem:osl_equiv}
If $b:\R^D\to\R^D$ is continuously differentiable, then $b$ is $\mu$-OSL if and only if $\mu_2\!\bigl(\frac{\partial b}{\partial\bx}(\bx)\bigr)\le\mu$ for all~$\bx$.
\end{lemma}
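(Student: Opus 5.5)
The plan is to prove the two implications separately. The bridge between the integral condition \eqref{eq:osl_def} and the pointwise Jacobian condition is the fundamental theorem of calculus along the segment joining $\bx$ and $\by$. Throughout, write $J(\bz):=\frac{\partial b}{\partial\bx}(\bz)$. The key elementary identity is that for every vector $\bu$,
\[
\bu^\top J(\bz)\bu=\bu^\top\tfrac{J(\bz)+J(\bz)^\top}{2}\bu .
\]
Consequently, $\sup_{\|\bu\|=1}\bu^\top J(\bz)\bu=\mu_2(J(\bz))$ by the Rayleigh quotient characterization of $\lambda_{\max}$ of a symmetric matrix.

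\emph{($\Leftarrow$)} Assume $\mu_2(J(\bz))\le\mu$ for all $\bz$. Fix $\bx,\by$, set $\bh=\bx-\by$, and note that $s\mapsto b(\by+s\bh)$ is $C^1$. Then
\[
b(\bx)-b(\by)=\int_0^1 J(\by+s\bh)\bh\,ds .
\]
Taking the inner product with $\bh$ and using the identity above gives
\[
\langle b(\bx)-b(\by),\bh\rangle=\int_0^1\bh^\top J(\by+s\bh)\bh\,ds\le\int_0^1\mu\|\bh\|^2ds=\mu\|\bh\|^2,
\]
which is exactly \eqref{eq:osl_def}.

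\emph{($\Rightarrow$)} Assume $b$ is $\mu$-OSL. Fix $\bz$ and a unit vector $\bu$, and apply \eqref{eq:osl_def} with $\bx=\bz+\varepsilon\bu$, $\by=\bz$ for $\varepsilon>0$. Dividing by $\varepsilon^2$ gives
\[
\Big\langle \tfrac{b(\bz+\varepsilon\bu)-b(\bz)}{\varepsilon},\bu\Big\rangle\le\mu .
\]
Letting $\varepsilon\downarrow0$ and using differentiability of $b$ at $\bz$, the left side converges to $\bu^\top J(\bz)\bu$. Hence $\bu^\top J(\bz)\bu\le\mu$ for all unit $\bu$. Taking the supremum over $\bu$ and invoking the Rayleigh characterization yields $\mu_2(J(\bz))\le\mu$.

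No step is a real obstacle. The one point requiring care is the reduction of the quadratic form of the non-symmetric $J$ to that of its symmetric part, since this is what makes $\mu_2$, rather than the operator norm, the correct quantity. This is consistent with Lemma~\ref{lem:mu2_props}\ref{item:mu_leq_op}. The direction ($\Rightarrow$) uses only differentiability; continuity of $J$ is needed only to justify the integral representation in ($\Leftarrow$).
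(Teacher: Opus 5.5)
Your proof is correct and follows the same route as the paper. For the sufficiency direction, both use the integral mean-value representation along the segment together with $\bw^\top J\bw=\bw^\top\tfrac{J+J^\top}{2}\bw$; for the converse, both take $\bx=\by+\epsilon\bu$ and send $\epsilon\to 0$, which you spell out in more detail than the paper by dividing by $\epsilon^2$ and invoking the Rayleigh characterization of $\lambda_{\max}$.
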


\begin{proof}
Set $\bw:=\bx-\by$.  By the mean-value theorem in integral form,
\begin{align*}
    b(\bx)-b(\by) &= \int_0^1 \frac{\partial b}{\partial\bx}\bigl(\by+s\bw\bigr)\,\bw\;ds.
\end{align*}
Taking the inner product with $\bw$ and using $\bw^\top\J\bw = \bw^\top\frac{\J+\J^\top}{2}\bw$:
\begin{align*}
    \langle b(\bx)-b(\by),\;\bw\rangle
    &= \int_0^1 \bw^\top \frac{\J(s)+\J(s)^\top}{2}\,\bw\;ds
    \;\le\; \sup_{s\in[0,1]}\mu_2\bigl(\J(s)\bigr)\,\|\bw\|^2,
\end{align*}
where $\J(s):=\frac{\partial b}{\partial\bx}(\by+s\bw)$.  Hence $\mu_2(\J)\le\mu$ everywhere implies $\mu$-OSL.  The converse follows by taking $\bx=\by+\epsilon\bu$ and sending $\epsilon\to 0$.
\end{proof}
}

{

\section{Semi-convex densities on manifolds}\label{sec:app_classes}
\noindent\textbf{Notation.}  We write $\sigma_t:=1-t$ for the noise scale.  The eigenvalues of the posterior covariance $\Sigma_{\mathrm{post}}(\bx,t):=\Cov(X_1\mid X_t=\bx)$ are denoted $\kappa_1^2\ge\kappa_2^2\ge\cdots\ge\kappa_\sD^2\ge 0$.  When necessary, we distinguish tangential eigenvalues $\kappa_{\mathrm{tan},j}^2$ ($j=1,\dots,\sd$) from normal eigenvalues $\kappa_{\mathrm{norm},j}^2$ ($j=1,\dots,\sD-\sd$).

\subsection{Posterior mean and its Jacobian}\label{subsec:Dg_app}

Recall from the paper that $X_t = tX_1+\sigma_tX_0$, with $X_0\sim\mathtt{N}(\bm{0},\bI_\sD)$ and $X_1\sim\bpi_1$.  The velocity field is
\begin{equation}\label{eq:vstar_app}
    v^\star(\bx,t) = \frac{g(\bx,t)-\bx}{\sigma_t}\,,
    \qquad
    g(\bx,t) := \bE[X_1\mid X_t=\bx],
\end{equation}
with spatial Jacobian
\begin{equation}\label{eq:Jac_vstar_app}
    \J(\bx,t)
    \;:=\;
    \frac{\partial v^\star}{\partial\bx}(\bx,t)
    \;=\;
    \frac{1}{\sigma_t}\!\left(\frac{\partial g}{\partial\bx}-\bI_\sD\right).
\end{equation}

\begin{proposition}\label{prop:Dg}
For every $\bx\in\R^\sD$ and $t\in(0,1)$,
\begin{equation}\label{eq:Dg}
    \frac{\partial g}{\partial\bx}(\bx,t)
    \;=\;
    \frac{t}{\sigma_t^2}\;\Sigma_{\mathrm{post}}(\bx,t).
\end{equation}
In particular, $\partial g/\partial\bx$ is symmetric positive semi-definite.
\end{proposition}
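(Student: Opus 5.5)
Write the posterior mean explicitly as a ratio of integrals and differentiate under the integral sign; this is the standard Tweedie-type covariance identity. From \eqref{eq: VF}, for fixed $t\in(0,1)$ set
\[
w(\bx,\by) := \exp\!\Bigl(-\tfrac{\|\bx-t\by\|_2^2}{2\sigma_t^2}\Bigr),\qquad
Z(\bx) := \int_{\cM} \bpi_1(\by)\, w(\bx,\by)\, d\mathrm{Vol}_\cM(\by),
\]
so that
\[
g(\bx,t) = \frac{1}{Z(\bx)}\int_\cM \by\,\bpi_1(\by)\,w(\bx,\by)\,d\mathrm{Vol}_\cM(\by).
\]
The posterior law of $X_1$ given $X_t=\bx$ is then $p_t(\by\mid\bx)\propto \bpi_1(\by)w(\bx,\by)$ on $\cM$.

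The first step is to justify differentiating under the integral. By Assumption~\ref{assume: distribution}, $\cM\subset[-\sC_\cM,\sC_\cM]^\sD$ is compact, and $\bpi_1$ is integrable with respect to $\mathrm{Vol}_\cM$. The integrands $\by\mapsto \bpi_1(\by)w(\bx,\by)$, $\by\,\bpi_1(\by) w$, and their $\bx$-gradients are therefore bounded uniformly for $\bx$ in any compact set. Moreover $Z(\bx)>0$, since $w>0$ and $\bpi_1$ is a probability density. Dominated convergence then gives smoothness of $Z$ and of the numerator, and allows the derivatives to pass inside the integrals.

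Next compute $\nabla_\bx w(\bx,\by) = -\frac{\bx-t\by}{\sigma_t^2}\,w(\bx,\by)$. The $\bx/\sigma_t^2$ part is independent of $\by$, so it contributes the same factor to numerator and denominator and cancels in the quotient rule. The remaining part gives $\nabla_\bx \log w = \frac{t}{\sigma_t^2}\by + (\text{term free of }\by)$. Applying the quotient rule (equivalently, differentiating $\log$ of the normalized posterior) yields
\[
\frac{\partial g}{\partial \bx} = \frac{t}{\sigma_t^2}\Bigl(\bE[X_1X_1^\top\mid X_t=\bx] - g\,g^\top\Bigr) = \frac{t}{\sigma_t^2}\,\Sigma_{\mathrm{post}}(\bx,t).
\]
Concretely, the $(i,j)$ entry is $\frac{t}{\sigma_t^2}\bigl(\bE[y_iy_j] - \bE[y_i]\bE[y_j]\bigr)$ under $p_t(\cdot\mid\bx)$.

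Finally, symmetry and positive semi-definiteness hold because $\Sigma_{\mathrm{post}}$ is a covariance matrix, so $\bu^\top\Sigma_{\mathrm{post}}\bu=\Var(\bu^\top X_1\mid X_t=\bx)\ge0$, and $t/\sigma_t^2>0$. The only step needing care is the interchange of differentiation and integration, and compactness of $\cM$ makes it routine. The bookkeeping of the $\bx$-only terms cancelling is the remaining point to check carefully.
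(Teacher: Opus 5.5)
Your proposal is correct and follows the same route as the paper. Both write $g$ as a ratio of integrals against the Gaussian kernel, differentiate the numerator and the normalizer under the integral sign, and apply the quotient rule; the $x_j/\sigma_t^2$ contributions cancel, leaving $\frac{t}{\sigma_t^2}$ times the posterior covariance. You also justify the interchange of differentiation and integration by dominated convergence, using compactness of $\mathcal{M}$, which the paper leaves implicit.
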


\begin{proof}
The conditional density is $p_t(\by\mid\bx) = \bpi_1(\by)\,\varphi_{\sigma_t}(\bx-t\by)\big/Z(\bx)$, where $\varphi_{\sigma_t}$ is the $\sD$-dimensional Gaussian density with variance $\sigma_t^2$ and $Z(\bx):=\int_\cM \bpi_1(\by)\,\varphi_{\sigma_t}(\bx-t\by)\,d\Vol_\cM(\by)$.  Define $N_i(\bx):=\int_\cM y_i\,\bpi_1(\by)\,\varphi_{\sigma_t}(\bx-t\by)\,d\Vol_\cM(\by)$, so $g_i=N_i/Z$.

\medskip
\noindent\textbf{Step~1.}  Differentiating the Gaussian kernel:
\begin{align}
    \frac{\partial}{\partial x_j}\varphi_{\sigma_t}(\bx-t\by)
    &= \varphi_{\sigma_t}(\bx-t\by)\cdot\frac{ty_j-x_j}{\sigma_t^2}\,. \label{eq:dphi_dxj}
\end{align}

\medskip
\noindent\textbf{Step~2.}  Differentiating $Z$ and $N_i$ under the integral:
\begin{align}
    \frac{\partial Z}{\partial x_j}
    &= \frac{Z}{\sigma_t^2}\bigl(t\,\bE_{p_t}[Y_j]-x_j\bigr), \label{eq:dZ_app}
    \\[8pt]
    \frac{\partial N_i}{\partial x_j}
    &= \frac{Z}{\sigma_t^2}\bigl(t\,\bE_{p_t}[Y_iY_j]-x_j\,\bE_{p_t}[Y_i]\bigr), \label{eq:dNi_app}
\end{align}
where $\bE_{p_t}[\cdot]$ denotes expectation under $p_t(\cdot\mid\bx)$.

\medskip
\noindent\textbf{Step~3.}  Applying the quotient rule $\partial_j g_i = Z^{-1}\partial_j N_i - N_i Z^{-2}\partial_j Z$ and substituting~\eqref{eq:dZ_app}--\eqref{eq:dNi_app}:
\begin{align}
    \frac{\partial g_i}{\partial x_j}
    &= \frac{1}{\sigma_t^2}\bigl(t\,\bE_{p_t}[Y_iY_j]-x_j\,\bE_{p_t}[Y_i]\bigr)
    \;-\;
    \frac{g_i}{\sigma_t^2}\bigl(t\,\bE_{p_t}[Y_j]-x_j\bigr)
    \nonumber\\[10pt]
    &= \frac{t}{\sigma_t^2}\Bigl(\bE_{p_t}[Y_iY_j]-\bE_{p_t}[Y_i]\,\bE_{p_t}[Y_j]\Bigr)
    \;-\;
    \frac{x_j}{\sigma_t^2}\bigl(\underbrace{\bE_{p_t}[Y_i]-g_i}_{=\,0}\bigr)
    \nonumber\\[10pt]
    &= \frac{t}{\sigma_t^2}\,\bigl[\Sigma_{\mathrm{post}}\bigr]_{ij}\,. \label{eq:dgij_app} \qedhere
\end{align}
\end{proof}

\subsection{Eigenvalue structure of $\J$}\label{subsec:eigval_app}

\begin{corollary}\label{cor:eigenvalues}
The Jacobian $\J(\bx,t)$ is symmetric.  Its eigenvalues are
\begin{equation}\label{eq:eigvals}
    \lambda_j
    \;=\;
    \frac{1}{\sigma_t}\!\left(\frac{t\,\kappa_j^2}{\sigma_t^2}-1\right),
    \qquad j=1,\dots,\sD.
\end{equation}
Moreover:
\begin{enumerate}[label=\textnormal{(\alph*)},leftmargin=*,itemsep=6pt]
    \item $\lambda_j\ge 0$ if and only if $\kappa_j^2\ge\sigma_t^2/t$.

    \item $\lambda_j = -1/\sigma_t$ when $\kappa_j^2=0$ (normal contraction towards~$\cM$).

    \item Since $\J$ is symmetric (\Cref{lem:mu2_props}\ref{item:mu_sym}):
    \begin{equation}\label{eq:mu_vs_op_app}
        \mu_2(\J) = \lambda_{\max}(\J) = \lambda_1,
        \qquad
        \|\J\|_{\op} = \max_{1\le j\le\sD}|\lambda_j|.
    \end{equation}
\end{enumerate}
\end{corollary}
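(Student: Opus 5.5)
The corollary follows from the Jacobian representation \eqref{eq:Jac_vstar_app} combined with \Cref{prop:Dg}. Substituting \eqref{eq:Dg} into \eqref{eq:Jac_vstar_app} gives
\begin{equation*}
    \J(\bx,t) \;=\; \frac{1}{\sigma_t}\left(\frac{t}{\sigma_t^2}\,\Sigma_{\mathrm{post}}(\bx,t) - \bI_\sD\right).
\end{equation*}
Symmetry of $\J$ is then immediate, because $\Sigma_{\mathrm{post}}$ is a covariance matrix and hence symmetric positive semi-definite. The differentiation under the integral that justifies this step was already carried out in \Cref{prop:Dg}: $\cM$ is compact (Assumption~\ref{assume: distribution}) and the Gaussian kernel is smooth, so no new work is needed.

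For the eigenvalues, I would diagonalize $\Sigma_{\mathrm{post}} = \mathbf{U}\,\mathrm{diag}(\kappa_1^2,\dots,\kappa_\sD^2)\,\mathbf{U}^\top$ with $\mathbf{U}$ orthogonal. Since $\J$ is an affine function of $\Sigma_{\mathrm{post}}$ with scalar coefficients, it is diagonalized by the same $\mathbf{U}$. Its eigenvalues are therefore $\sigma_t^{-1}(t\kappa_j^2/\sigma_t^2 - 1)$, which is \eqref{eq:eigvals}. The ordering $\kappa_1^2\ge\cdots\ge\kappa_\sD^2$ is preserved, because for $t\in(0,1)$ the map $\kappa^2\mapsto\sigma_t^{-1}(t\kappa^2/\sigma_t^2-1)$ is strictly increasing (its slope $t/\sigma_t^3$ is positive). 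Hence $\lambda_1$ is the largest eigenvalue.

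Items (a) and (b) are direct algebra. Since $\sigma_t>0$, we have $\lambda_j\ge0$ if and only if $t\kappa_j^2/\sigma_t^2\ge1$, that is, $\kappa_j^2\ge\sigma_t^2/t$. Setting $\kappa_j^2=0$ gives $\lambda_j=-1/\sigma_t$. This case arises, for instance, when $\sD-\sd$ is large: the posterior is supported on the $\sd$-dimensional $\cM$, so its affine span has dimension at most $\sd$ locally, which forces null directions of $\Sigma_{\mathrm{post}}$. The "normal contraction" phrase in (b) is interpretive and needs no proof. For (c), I apply \Cref{lem:mu2_props}\ref{item:mu_sym} to the symmetric matrix $\J$. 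This gives $\mu_2(\J)=\lambda_{\max}(\J)=\lambda_1$ by the monotonicity above, and $\|\J\|_{\op}=\max_j|\lambda_j|$.

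No step here is a genuine obstacle. The only point requiring care is the monotonicity argument that identifies $\lambda_1$ with the top eigenvalue of $\J$, and this requires $t>0$. At $t=0$ all $\lambda_j=-1$, so the statement holds trivially there as well.
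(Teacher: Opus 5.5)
Your argument is correct and is essentially the paper's proof: you substitute the posterior-covariance formula for $\partial g/\partial\bx$ into the Jacobian, observe that $\J$ is a scalar-affine function of the symmetric matrix $\Sigma_{\mathrm{post}}$ and so shares its eigenvectors, and read off (a)--(c), while your monotonicity remark (slope $t/\sigma_t^3>0$) makes explicit the identification $\lambda_{\max}(\J)=\lambda_1$ that the paper leaves implicit. You should drop one aside, because it is false: large codimension does not force $\kappa_j^2=0$, since the posterior has full support on $\cM$ and so $\Sigma_{\mathrm{post}}$ is singular only when $\cM$ lies in a proper affine subspace (as for the zero-padded sphere); item (b) is conditional on $\kappa_j^2=0$, so this does not affect the proof.
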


\begin{proof}
From~\eqref{eq:Jac_vstar_app} and~\eqref{eq:Dg}:
\begin{align}
    \J
    &= \frac{1}{\sigma_t}\!\left(\frac{t}{\sigma_t^2}\,\Sigma_{\mathrm{post}}-\bI_\sD\right). \label{eq:J_formula}
\end{align}
Both $\Sigma_{\mathrm{post}}$ and $\bI_\sD$ are real symmetric, hence $\J$ is symmetric.  Let $\bu_j$ be a unit eigenvector of $\Sigma_{\mathrm{post}}$ with eigenvalue $\kappa_j^2\ge 0$.  Then:
\begin{align}
    \J\,\bu_j
    &= \frac{1}{\sigma_t}\!\left(\frac{t\,\kappa_j^2}{\sigma_t^2}\,\bu_j-\bu_j\right)
    \nonumber\\[8pt]
    &= \frac{1}{\sigma_t}\!\left(\frac{t\,\kappa_j^2}{\sigma_t^2}-1\right)\bu_j
    \;=:\;\lambda_j\,\bu_j\,. \label{eq:eigvec_comp}
\end{align}
Parts~(a)--(c) follow directly from~\eqref{eq:eigvec_comp}.
\end{proof}

\subsection{Riemannian preliminaries}\label{subsec:riem_prelim}

Let $(\cM,\mathfrak{g})$ be a $\sd$-dimensional complete Riemannian manifold.  The \emph{Riemannian Hessian} of $f:\cM\to\R$ is the symmetric $(0,2)$-tensor $(\mathrm{Hess}_\cM f)(\bv,\bw):=\mathfrak{g}(\nabla_\bv\nabla_\cM f,\;\bw)$.  In normal coordinates at~$\by_0$: $[\mathrm{Hess}_\cM f]_{ij}(\by_0)=\partial^2 f/\partial u_i\partial u_j(\by_0)$.

\begin{definition}[Semi-convexity]\label{def:semiconvex}
$V\in C^2(\cM)$ is $M$-\emph{semi-convex} ($M\ge 0$) if
\begin{equation}\label{eq:semiconvex_def}
    \mathrm{Hess}_\cM V(\by) \;\succeq\; -M\,\mathfrak{g}(\by)
    \qquad\forall\;\by\in\cM.
\end{equation}
The case $M=0$ (geodesic convexity) corresponds to log-concavity of $\bpi_1=e^{-V}/Z$.
\end{definition}

\subsection{Semi-convexity on compact manifolds}\label{subsec:auto_semiconvex}

\begin{proposition}\label{prop:semiconvex}
Let $\cM$ be compact without boundary, $\bpi_1\in C^2(\cM)$, $\bpi_1\ge c_0>0$.  Then $V:=-\log\bpi_1$ is $M_V$-semi-convex with
\begin{equation}\label{eq:M_bound}
    M_V
    \;:=\;
    \sup_{(\by,\bv)\in S^*\!\cM}
    \max\!\bigl\{0,\;-[\mathrm{Hess}_\cM V(\by)](\bv,\bv)\bigr\}
    \;<\;\infty,
\end{equation}
where $S^*\!\cM$ is the unit tangent bundle.
\end{proposition}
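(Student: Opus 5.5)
Since $\bpi_1\in C^2(\cM)$ and $\bpi_1\ge c_0>0$, the potential $V=-\log\bpi_1$ is well defined and of class $C^2$ on $\cM$. The composition $s\mapsto-\log s$ is smooth on $[c_0,\infty)$, so its derivatives up to order two are bounded on the range of $\bpi_1$. The plan is to show that $(\by,\bv)\mapsto[\mathrm{Hess}_\cM V(\by)](\bv,\bv)$ is a continuous function on the unit tangent bundle $S^*\cM$, and that $S^*\cM$ is compact. The supremum in \eqref{eq:M_bound} is then a maximum of a continuous function over a compact set, hence finite. Semi-convexity with constant $M_V$ follows directly from the definition of $M_V$.

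\textbf{Step 1: continuity of the Hessian.} In any local chart, the Riemannian Hessian has components
\[
[\mathrm{Hess}_\cM V]_{ij}=\partial_i\partial_j V-\Gamma_{ij}^k\partial_k V .
\]
Since $\cM$ is $\beta$-smooth with $\beta\ge2$, the metric induced by the embedding is at least $C^1$ in the charts, so the Christoffel symbols $\Gamma_{ij}^k$ are continuous. The derivatives $\partial V$ and $\partial^2V$ are continuous because $V\in C^2$. Hence the map $(\by,\bv)\mapsto \mathrm{Hess}_\cM V(\by)(\bv,\bv)$ is continuous on the tangent bundle. Explicitly,
\[
\mathrm{Hess}_\cM V=-\frac{\mathrm{Hess}_\cM\bpi_1}{\bpi_1}+\frac{\nabla\bpi_1\otimes\nabla\bpi_1}{\bpi_1^2},
\]
which gives the quantitative bound
\[
\bigl|\mathrm{Hess}_\cM V(\by)(\bv,\bv)\bigr|\le \frac{\|\bpi_1\|_{C^2}}{c_0}+\frac{\|\bpi_1\|_{C^1}^2}{c_0^2}
\]
for unit $\bv$.

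\textbf{Step 2: compactness of $S^*\cM$.} The unit tangent bundle is a closed subset of the tangent bundle $T\cM\subset\R^\sD\times\R^\sD$. It is bounded, because $\cM\subset[-\sC_\cM,\sC_\cM]^\sD$ and $\|\bv\|=1$. It is closed because $\cM$ is compact and the tangent spaces vary continuously. By Heine–Borel, $S^*\cM$ is compact.

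\textbf{Step 3: conclusion.} The function $(\by,\bv)\mapsto\max\{0,-\mathrm{Hess}_\cM V(\by)(\bv,\bv)\}$ is continuous on the compact set $S^*\cM$, so $M_V<\infty$. By the bound in Step 1, $M_V$ is also at most
\[
\frac{\|\bpi_1\|_{C^2}}{c_0}+\frac{\|\bpi_1\|_{C^1}^2}{c_0^2}.
\]
For any $\by\in\cM$ and any tangent vector $\bv\neq0$, normalise to $\bv/\|\bv\|_{\mathfrak g}$. By homogeneity,
\[
\mathrm{Hess}_\cM V(\by)(\bv,\bv)\ge -M_V\,\mathfrak g(\by)(\bv,\bv),
\]
which is exactly \eqref{eq:semiconvex_def} with $M=M_V$.

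The main obstacle is Step 1, specifically the regularity of the Christoffel symbols: it has to be checked that $\beta\ge2$ gives a $C^1$ induced metric, so that the Hessian is continuous. If only a sup bound rather than continuity is available, the uniform estimate from Step 1 still yields finiteness of $M_V$ directly.
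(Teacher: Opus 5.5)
Your proposal is correct and follows the paper's argument. The paper likewise writes $\mathrm{Hess}_\cM V$ via the quotient formula, notes that $(\by,\bv)\mapsto[\mathrm{Hess}_\cM V(\by)](\bv,\bv)$ is continuous on the compact unit tangent bundle, and concludes $M_V<\infty$ by the extreme value theorem. Your additional details fill in steps the paper leaves implicit: compactness of $S^*\cM$ via Heine--Borel, continuity of the Christoffel symbols from $\beta\ge 2$ together with the sup-bound fallback, and the homogeneity step that yields $\mathrm{Hess}_\cM V\succeq -M_V\,\mathfrak g$.
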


\begin{proof}
Since $\bpi_1\ge c_0>0$, $V=-\log\bpi_1\in C^2(\cM)$ with Hessian
\begin{align}
    \mathrm{Hess}_\cM V
    &= -\frac{\mathrm{Hess}_\cM\bpi_1}{\bpi_1}
    + \frac{\nabla_\cM\bpi_1\otimes\nabla_\cM\bpi_1}{\bpi_1^2}\,. \label{eq:hessV}
\end{align}
The map $(\by,\bv)\mapsto[\mathrm{Hess}_\cM V(\by)](\bv,\bv)$ is continuous on the compact set $S^*\!\cM$.  By the extreme value theorem, $M_V<\infty$.
\end{proof}

\begin{remark}[Examples of semi-convexity constants]\label{rem:M_examples}~

\begin{enumerate}[label=\textnormal{(\alph*)},leftmargin=*,itemsep=6pt]
    \item \emph{Uniform density} ($V\equiv\mathrm{const}$):\; $M_V=0$.  The density is log-concave.

    \item \emph{Von~Mises--Fisher on $S^\sd$}:\;
    $\bpi_1(\by)\propto e^{\kappa\langle\bm\mu,\by\rangle}$, so $V(\by)=-\kappa\langle\bm\mu,\by\rangle$.  The Riemannian Hessian on $S^\sd$ evaluates to $[\mathrm{Hess}_{S^\sd}V](\bv,\bv)=\kappa\langle\bm\mu,\by\rangle\|\bv\|^2$.  The minimum is $-\kappa$ (at $\by=-\bm\mu$), giving $M_V=\kappa$.

    \item \emph{Projected Gaussian on $S^\sd$}:\;
    finite $M_V$ by \Cref{prop:semiconvex} for moderate $\|\bm\gamma\|$.

    \item \emph{Any $C^2$ density bounded below on compact $\cM$}:\;
    finite $M_V$ by \Cref{prop:semiconvex}, with no convexity assumption.
\end{enumerate}
\end{remark}

\subsection{Posterior covariance bound}\label{subsec:BL}

The posterior of $X_1$ given $X_t=\bx$ has density $p_t(\by\mid\bx)\propto e^{-\Phi(\by)}$ on $\cM$, where
\begin{equation}\label{eq:Phi_def}
    \Phi(\by)
    \;:=\;
    V(\by) + \frac{\|\bx-t\by\|^2}{2\sigma_t^2}\,.
\end{equation}
The key tool is the following classical variance bound.

\begin{lemma}[Brascamp--Lieb inequality {\citep{brascamp1976extensions}}; see also {\citet{ledoux2001concentration}}]\label{lem:BL}
Let $\mu\propto e^{-\Phi}\,d\Vol_\cM$ with $\mathrm{Hess}_\cM\Phi\succeq\rho\,\mathfrak{g}$ for some $\rho>0$.  Then for every smooth $f:\cM\to\R$,
\begin{equation}\label{eq:BL_ineq}
    \Var_\mu(f)
    \;\le\;
    \frac{1}{\rho}\int_\cM \|\nabla_\cM f\|_\mathfrak{g}^2\;d\mu.
\end{equation}
\end{lemma}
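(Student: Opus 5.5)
The plan is to split according to the geometry, since the hypothesis $\mathrm{Hess}_\cM\Phi\succeq\rho\,\mathfrak{g}$ by itself interacts with curvature only through the Bochner formula.

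\textbf{Step 1: the compact case is vacuous.} This is the case used in \Cref{subsec:BL}, where $\cM$ is compact without boundary by Assumption~\ref{assume: distribution}. A smooth $\Phi$ on such an $\cM$ attains its maximum at some $\by^\ast\in\cM$. At an interior critical maximum, in normal coordinates the second-order condition gives $\mathrm{Hess}_\cM\Phi(\by^\ast)\preceq 0$. This contradicts $\mathrm{Hess}_\cM\Phi(\by^\ast)\succeq\rho\,\mathfrak{g}(\by^\ast)$ with $\rho>0$. Hence no such $\Phi$ exists, and \eqref{eq:BL_ineq} holds vacuously. I would state this explicitly, because it shows that the application to the posterior potential \eqref{eq:Phi_def} should really be read locally, on a geodesically convex chart region.

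\textbf{Step 2: the flat case, by Hörmander's $L^2$ method.} This covers $\R^{\sd}$, or a chart in which $\mathfrak{g}$ is Euclidean, so that $\mathrm{Ric}=0$. Set $L:=\Delta-\langle\nabla\Phi,\nabla\cdot\rangle$, which is symmetric in $L^2(\mu)$ with $\int (Lu)\,v\,d\mu=-\int\langle\nabla u,\nabla v\rangle d\mu$.
\begin{enumerate}[label=(\roman*),leftmargin=*,itemsep=0pt]
\item Reduce to $f$ smooth and compactly supported, then approximate.
\item Write $h:=f-\bE_\mu f$ and solve $Lu=h$ using $\mathrm{Hess}\,\Phi\succeq\rho\bI$. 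This gives a spectral gap, or alternatively I would use the Hörmander a priori estimate on a dense subspace.
\item Integrate the Bochner identity $\tfrac12 L|\nabla u|^2=\|\nabla^2u\|^2+\langle\nabla u,\nabla Lu\rangle+\mathrm{Hess}\,\Phi(\nabla u,\nabla u)$ against $\mu$. This yields $\int(Lu)^2d\mu\ge\rho\int|\nabla u|^2d\mu$.
\item Then compute
\[
\Var_\mu(f)=\int h\,Lu\,d\mu=-\int\langle\nabla f,\nabla u\rangle d\mu\le\Big(\int|\nabla f|^2d\mu\Big)^{1/2}\Big(\int|\nabla u|^2d\mu\Big)^{1/2}.
\]
\item Bound the last factor by $\int|\nabla u|^2d\mu=-\int h\,u\,d\mu\ldots$. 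Combining with (iii) in the standard way, namely $\rho\int|\nabla u|^2\le\int(Lu)^2=\Var_\mu(f)$, gives $\Var_\mu(f)\le\rho^{-1}\int|\nabla f|^2d\mu$.
\end{enumerate}
An equivalent route is the Prékopa--Leindler proof of \citet{brascamp1976extensions}, and I would cite it as the alternative.

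\textbf{Main obstacle.} On a general curved, noncompact complete manifold, step (iii) produces the extra term $\mathrm{Ric}(\nabla u,\nabla u)$. The stated hypothesis alone does not control this term, so I would not claim the inequality in that generality. Instead I would restrict to the two cases above: the vacuous compact case and the flat or local-chart case. The latter is the one actually needed when controlling the tangential posterior covariance. There, the chart metric is Euclidean up to $O(r_0)$ errors, which are absorbed into the constant.
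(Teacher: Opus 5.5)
The paper gives no argument for this lemma. It only cites the Euclidean result of \citet{brascamp1976extensions} and \citet{ledoux2001concentration}, so your route is genuinely different, and more careful.

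Your Step~1 is correct. Under Assumption~\ref{assume: distribution} ($\cM$ compact without boundary) it is already a complete proof: a $C^2$ function $\Phi$ on $\cM$ attains a maximum, and at that critical point $\mathrm{Hess}_\cM\Phi\preceq 0$. So $\mathrm{Hess}_\cM\Phi\succeq\rho\,\mathfrak{g}$ with $\rho>0$ can never hold, and \eqref{eq:BL_ineq} is vacuous.

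Your ``main obstacle'' is a genuine failure of the statement, not merely of the Bochner method. On the hyperbolic plane take $\Phi=\tfrac{\rho}{2}d(o,\cdot)^2$. Then $\mathrm{Hess}\,\Phi\succeq\rho\,\mathfrak{g}$, since $r\coth r\ge 1$. But $\mu$ concentrates near the circle $r\approx 1/\rho$, whose length is of order $e^{1/\rho}$. Take $f=\psi(r)\cos\theta$ with $\psi$ a cutoff vanishing near $o$. Its variance tends to $1/2$, while $\int\|\nabla f\|_{\mathfrak{g}}^2\,d\mu=O(\rho^{1/2}e^{-1/(2\rho)})$. This contradicts \eqref{eq:BL_ineq} for small $\rho$. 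The correct Riemannian statement is the Bakry--\'Emery version, with hypothesis $\mathrm{Ric}+\mathrm{Hess}_\cM\Phi\succeq\rho\,\mathfrak{g}$; citing the Euclidean result does not cover curved $\cM$.

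Your Step~2 is the standard H\"ormander--Bochner proof of the flat case, with one fix needed in (ii). You should not first ``solve $Lu=h$'', because that solvability on mean-zero functions is equivalent to the spectral gap you are proving. Instead:
\begin{enumerate}[label=(\alph*),leftmargin=*,itemsep=0pt]
\item prove $\int(Lu)^2d\mu\ge\rho\int|\nabla u|^2d\mu$ for $u\in C_c^\infty$ via the integrated Bochner identity;
\item pass to the spectral gap through essential self-adjointness of $L$, or through H\"ormander's density/duality argument;
\item then (iv)--(v) go through as you wrote them.
\end{enumerate}

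Your closing sentence should be dropped or reformulated, because the ``local-chart case'' is not an instance of this lemma. Curvature is intrinsic, so no chart makes a curved $\mathfrak{g}$ Euclidean. In coordinates the potential becomes $\Phi-\tfrac12\log\det\mathfrak{g}$, and the energy becomes $\int\mathfrak{g}^{ij}\partial_if\,\partial_jf\,d\mu$. Restricting $\mu$ to a chart ball changes the measure and introduces a boundary. Absorbing errors into a constant replaces $1/\rho$ by $C/\rho$, which is not the stated inequality.

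What your Step~1 really exposes is that \Cref{prop:semiconvex_posterior} cannot obtain its hypothesis globally on a compact $\cM$. On the unit sphere, $\mathrm{Hess}_\cM Q=\tfrac{t}{\sigma_t^2}\langle\bx,\by\rangle\,\mathfrak{g}$ is negative on a hemisphere whenever $\bx\neq 0$. So the downstream use needs a genuine localization argument, not this lemma as stated.
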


\noindent Applied to the posterior $\mu=p_t(\cdot|\bx)$ with test functions $f(\by)=\langle\by,\bu_j\rangle$ (unit tangent vectors, so $\|\nabla_\cM f\|_\mathfrak{g}\le 1$), this gives $\kappa_{\mathrm{tan},j}^2\le 1/\rho$.

\
\\
It remains to establish a lower bound on $\rho$.  The Gaussian term $Q(\by):=\|\bx-t\by\|^2/(2\sigma_t^2)$ contributes a leading $t^2/\sigma_t^2$ to $\mathrm{Hess}_\cM\Phi$, plus a curvature correction from the second fundamental form $\mathrm{I\!I}$ of $\cM\hookrightarrow\R^\sD$.  In normal coordinates at $\by_0\in\cM$:
\begin{align}
    [\mathrm{Hess}_\cM Q]_{ij}(\by_0)
    &= \frac{t^2}{\sigma_t^2}\,\delta_{ij}
    - \frac{t}{\sigma_t^2}\,\bigl\langle\bx-t\by_0,\;\mathrm{I\!I}(\be_i,\be_j)\bigr\rangle. \label{eq:d2Q_du}
\end{align}
Since $\mathrm{I\!I}$ maps into the normal space, only the normal component of $\bx-t\by_0$ contributes.  We denote the resulting curvature correction by $C_{\mathrm{curv}}$, a constant depending on $\sD$, $\sC_\cM$, and the reach $\tau$ of~$\cM$ (cf.~Assumption~3.1).  Combining with $\mathrm{Hess}_\cM V\succeq -M_V\,\mathfrak{g}$, we obtain the following.

\begin{proposition}[Posterior covariance under semi-convexity]\label{prop:semiconvex_posterior}
Let $\bpi_1=e^{-V}/Z$ with $V$ being $M_V$-semi-convex on~$\cM$.  Define the effective constant
\begin{equation}\label{eq:M_eff}
    M \;:=\; M_V + C_{\mathrm{curv}}\,.
\end{equation}
If $\,t^2/\sigma_t^2>M$\, (equivalently, $t>t_M:=\sqrt{M}/(1+\sqrt{M})$), then
\begin{equation}\label{eq:semiconvex_bound}
    \kappa_{\mathrm{tan},j}^2
    \;\le\;
    \frac{\sigma_t^2}{t^2-M\sigma_t^2}\,,
    \qquad
    j=1,\dots,\sd.
\end{equation}
\end{proposition}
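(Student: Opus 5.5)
The plan is to verify the strong-convexity hypothesis of the Brascamp--Lieb inequality (\Cref{lem:BL}) for the posterior potential $\Phi$ in \eqref{eq:Phi_def}, with explicit constant $\rho = (t^2 - M\sigma_t^2)/\sigma_t^2$. Then we apply \Cref{lem:BL} to linear test functions in tangential directions.

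\textbf{Step 1: lower bound on the Hessian of $\Phi$.} Fix $\by_0\in\cM$ and work in normal coordinates. By \eqref{eq:d2Q_du},
\[
\mathrm{Hess}_\cM Q(\by_0) = \frac{t^2}{\sigma_t^2}\mathfrak{g} - \frac{t}{\sigma_t^2}\bigl\langle (\bx-t\by_0)^{\perp}, \mathrm{I\!I}(\cdot,\cdot)\bigr\rangle .
\]
The second fundamental form satisfies $\|\mathrm{I\!I}\|\le 1/\tau$, where $\tau$ is the reach. So the correction term is bounded in operator norm by $t\|(\bx-t\by_0)^\perp\|/(\sigma_t^2\tau)$.

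To express this as $C_{\mathrm{curv}}$, i.e.\ as at most $C_{\mathrm{curv}}$ times $\mathfrak{g}$ after the normalization used in \eqref{eq:M_eff}, I would absorb it into the definition of $C_{\mathrm{curv}}$. The paper treats $C_{\mathrm{curv}}$ as a constant depending on $\sD$, $\sC_\cM$ and $\tau$. Accordingly, I take as given (following the paragraph before the proposition) that
\[
\mathrm{Hess}_\cM Q \succeq \Bigl(\frac{t^2}{\sigma_t^2} - C_{\mathrm{curv}}\Bigr)\mathfrak{g} .
\]
Adding $\mathrm{Hess}_\cM V\succeq -M_V\mathfrak{g}$ gives
\[
\mathrm{Hess}_\cM\Phi \succeq \Bigl(\frac{t^2}{\sigma_t^2}-M\Bigr)\mathfrak{g} = \frac{t^2-M\sigma_t^2}{\sigma_t^2}\,\mathfrak{g} =: \rho\,\mathfrak{g}.
\]
This $\rho$ is positive exactly when $t^2/\sigma_t^2>M$. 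Solving $t/(1-t)>\sqrt M$ gives the equivalent condition $t>t_M$.

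\textbf{Step 2: Brascamp--Lieb on linear functionals.} Let $\bu_j$ be a unit eigenvector of $\Sigma_{\mathrm{post}}$ associated with a tangential eigenvalue, and set $f(\by)=\langle \by,\bu_j\rangle$. Then $\nabla_\cM f(\by) = \proj_{\sT_\by\cM}\bu_j$, so $\|\nabla_\cM f\|_{\mathfrak{g}}\le 1$. \Cref{lem:BL} yields
\[
\kappa_{\mathrm{tan},j}^2 = \Var_{p_t}(f)\le 1/\rho = \frac{\sigma_t^2}{t^2-M\sigma_t^2},
\]
which is \eqref{eq:semiconvex_bound}. Since $\cM$ is compact without boundary and $\Phi$ is $C^2$, the lemma applies on $\cM$ directly.

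\textbf{Main obstacle.} The delicate point is Step 1's curvature term: a uniform constant $C_{\mathrm{curv}}$ is not obvious. The normal component $\|(\bx-t\by_0)^\perp\|$ is unbounded in $\bx$, and it carries a prefactor $t/\sigma_t^2$.

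My first attempt would be to use the reach to restrict to $\bx$ whose rescaling $\bx/t$ lies in the tubular neighborhood. There the normal offset is at most $\tau$ times a factor below $1$, so the correction is at most a fraction of $t^2/\sigma_t^2$ and can be absorbed multiplicatively. For far-away $\bx$ I would instead fall back on the definition of $C_{\mathrm{curv}}$ as stated in the text, in which the dependence on $\sC_\cM$ and $\sD$ is what controls the bound.

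If needed, I would reinterpret $M$ so that the bound reads $\mathrm{Hess}\,\Phi\succeq (t^2-M\sigma_t^2)\sigma_t^{-2}\mathfrak{g}$ directly. This reinterpretation is exactly what the stated conclusion uses.
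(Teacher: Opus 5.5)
Your Step~2 and the equivalence $t^2/\sigma_t^2>M\iff t>t_M$ are correct, and your overall route is exactly the paper's: a uniform lower bound on $\mathrm{Hess}_\cM\Phi$, followed by \Cref{lem:BL} with linear test functions. The gap is Step~1, which you take as given. It cannot be proved. $\Phi=V+Q$ is $C^2$ on the compact boundaryless $\cM$, so it attains a maximum at some $\by^\ast$, where $\mathrm{Hess}_\cM\Phi(\by^\ast)\preceq 0$. Hence $\mathrm{Hess}_\cM\Phi\succeq\rho\,\mathfrak{g}$ with $\rho>0$ on all of $\cM$ is impossible for every $\bx$ and $t$. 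Compactness is the obstruction, not the reason the lemma ``applies directly.''

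Concretely, take the unit sphere $\cM=S^{\sd}$. There \eqref{eq:d2Q_du} reduces to $\mathrm{Hess}_\cM Q(\by_0)=t\langle\bx,\by_0\rangle\sigma_t^{-2}\,\mathfrak{g}$. For $\be\in\cM$, $\bx=t\be$ and $\by_0=-\be$, this equals $-t^2\sigma_t^{-2}\mathfrak{g}$. So the curvature correction is of order $\sigma_t^{-2}$ even for $\bx\in t\cM$, and no constant $C_{\mathrm{curv}}$ exists in your normalization (an $O(1)$ correction). That normalization is precisely what makes your algebra land on $\rho=(t^2-M\sigma_t^2)/\sigma_t^2$.

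The paper keeps the $\sigma_t^{-2}$ scaling, $\mathrm{Hess}_\cM Q\succeq(t^2-C_{\mathrm{curv}})\sigma_t^{-2}\mathfrak{g}$. But its step \eqref{eq:rho_lb} then requires $C_{\mathrm{curv}}\le C_{\mathrm{curv}}\sigma_t^2$, which is false whenever $C_{\mathrm{curv}}>0$ and $\sigma_t<1$. In the sphere example it even needs $C_{\mathrm{curv}}\ge 2t^2$, which makes the true convexity parameter negative. So neither version closes, and your fallback of reinterpreting $M$ so the bound ``reads directly'' assumes the conclusion.

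Your tube idea points the right way, but it cannot rescue a global Hessian bound. Points $\by_0$ far from the nearest point $\by^\ast$ of $\cM$ to $\bx/t$ have $O(1)$ normal offset, as the antipodal point above shows. A correct argument must localize. Set $r:=\mathrm{dist}(\bx/t,\cM)<\tau$. The reach inequality gives $\|\bx-t\by\|^2\ge t^2r^2+t^2(1-r/\tau)\|\by-\by^\ast\|^2$. Hence the posterior puts all but exponentially small mass in a ball of radius of order $\sigma_t\sqrt{\log(1/\sigma_t)}$ around $\by^\ast$. On that ball the correction is about $t^2r/(\tau\sigma_t^2)$, a fraction of $t^2/\sigma_t^2$. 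A local Poincar\'e/Brascamp--Lieb bound plus a tail estimate then gives roughly $\kappa_{\mathrm{tan},j}^2\lesssim\sigma_t^2/(t^2(1-r/\tau))$ for small $\sigma_t$. This constant depends on $\bx$ and degenerates as $\bx/t$ approaches the medial axis.

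Outside such a tube the proposition is false, so no proof can succeed without restricting $\bx$. Take $\bpi_1$ uniform on $S^{\sd}$ (so $M_V=0$) and $\bx=\bm{0}$. Then $\|\bx-t\by\|$ is constant on $\cM$, so the posterior equals $\bpi_1$. Thus $\Sigma_{\mathrm{post}}(\bm{0},t)$ has $\sd+1$ eigenvalues equal to $1/(\sd+1)$ for every $t$, while $\sigma_t^2/(t^2-M\sigma_t^2)\to 0$ as $t\to 1$.
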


\begin{proof}
From~\eqref{eq:d2Q_du} and the bound on the curvature correction, $\mathrm{Hess}_\cM Q\succeq(t^2/\sigma_t^2 - C_{\mathrm{curv}}/\sigma_t^2)\,\mathfrak{g}$.  Combined with $\mathrm{Hess}_\cM V\succeq -M_V\,\mathfrak{g}$:
\begin{align}
    \mathrm{Hess}_\cM\Phi
    &\succeq \left(\frac{t^2-C_{\mathrm{curv}}}{\sigma_t^2} - M_V\right)\mathfrak{g}
    \;=\;
    \frac{t^2 - C_{\mathrm{curv}} - M_V\sigma_t^2}{\sigma_t^2}\,\mathfrak{g}. \label{eq:hess_Phi}
\end{align}
Since $\sigma_t^2\le 1$, we have $C_{\mathrm{curv}}+M_V\sigma_t^2\le M_V+C_{\mathrm{curv}}=M$, and therefore
\begin{align}
    t^2 - C_{\mathrm{curv}} - M_V\sigma_t^2
    &\;\ge\;
    t^2 - M\sigma_t^2\,. \label{eq:rho_lb}
\end{align}
Under the hypothesis $t^2/\sigma_t^2>M$, the right side is positive and we set $\rho:=(t^2-M\sigma_t^2)/\sigma_t^2>0$.  \Cref{lem:BL} then gives
\begin{align}
    \kappa_{\mathrm{tan},j}^2
    &\;\le\;
    \frac{1}{\rho}
    \;=\;
    \frac{\sigma_t^2}{t^2-M\sigma_t^2}\,. \label{eq:BL_applied} \qedhere
\end{align}
\end{proof}

\begin{remark}\label{rem:rho_inequality}
The inequality~\eqref{eq:rho_lb} deserves emphasis.  The actual convexity parameter from~\eqref{eq:hess_Phi} is $\rho_{\mathrm{exact}} = (t^2-C_{\mathrm{curv}}-M_V\sigma_t^2)/\sigma_t^2$, which is at least as large as $\rho=(t^2-M\sigma_t^2)/\sigma_t^2$.  As $t\to 1$ ($\sigma_t\to 0$):
\begin{align*}
    \rho_{\mathrm{exact}}
    \;=\;
    \frac{t^2-C_{\mathrm{curv}}}{\sigma_t^2} - M_V
    \;\longrightarrow\;
    +\infty,
\end{align*}
so the posterior becomes more strongly log-concave as $t\to 1$, regardless of the curvature constant.  The lower bound $\rho\ge(t^2-M\sigma_t^2)/\sigma_t^2\to 1/\sigma_t^2$ captures this.
\end{remark}

\subsection{From posterior covariance to logarithmic norm}\label{subsec:cov_to_mu2}

\begin{corollary}[Semi-convex densities satisfy Assumption~\ref{assume: Lipschitz}; formal version of~\eqref{eq:mu2_main_informal}]\label{cor:semiconvex_osl}
Let $\cM\subset[-\sC_\cM,\sC_\cM]^\sD$ be compact, $\bpi_1=e^{-V}/Z$ with effective constant $M$ as in~\eqref{eq:M_eff}.  {Define the crossover time
\begin{equation}\label{eq:t_dagger}
    t_\dagger
    \;:=\;
    \max\!\left(\frac{\sqrt{M}}{1+\sqrt{M}},\;\frac{1}{1+\sC_\cM}\right).
\end{equation}
Note that $1>t_\dagger>0$ always (since $\sC_\cM<\infty$).}
\begin{enumerate}[label=\textnormal{(\alph*)},leftmargin=*,itemsep=6pt]
    \item For $t\in({t_\dagger},1)$\; :
    \begin{equation}\label{eq:mu2_semiconvex}
        \mu_2(\J)
        \;\le\;
        \frac{t+M\sigma_t}{t^2-M\sigma_t^2}\,.
    \end{equation}
    \item For $t\in[0,{t_\dagger}]$\; :
    \begin{equation}\label{eq:mu2_small_t_statement}
        \mu_2(\J)
        \;\le\;
        {\frac{t_\dagger\,\sC_\cM^2}{(1-t_\dagger)^3}}
        \;=:\;C_0
        \;<\;\infty.
    \end{equation}
\end{enumerate}
In particular, $\mu_2(\J)$ is uniformly bounded over $t\in[0,1)$ and Assumption~\ref{assume: Lipschitz} holds with any $\xi\in(0,1)$.
\end{corollary}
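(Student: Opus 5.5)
Everything goes through the spectral identity of \Cref{cor:eigenvalues}: $\J$ is symmetric, so $\mu_2(\J)=\lambda_1=\sigma_t^{-1}\bigl(t\kappa_1^2/\sigma_t^2-1\bigr)$, where $\kappa_1^2=\lambda_{\max}(\Sigma_{\mathrm{post}}(\bx,t))$. Because this expression is increasing in $\kappa_1^2$, any upper bound on the top posterior-covariance eigenvalue gives an upper bound on $\mu_2(\J)$. I would therefore obtain two covariance bounds, one valid near $t=1$ and one valid on a compact time range, and then substitute each into the identity.

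\emph{Case (a), $t\in(t_\dagger,1)$.} Here $t>\sqrt{M}/(1+\sqrt{M})$, so \Cref{prop:semiconvex_posterior} applies. It bounds every tangential variance by $\sigma_t^2/(t^2-M\sigma_t^2)$. The difficulty is that $\kappa_1^2$ is the largest eigenvalue over all of $\R^{\sD}$, not only over tangent directions. I would handle this by applying \Cref{lem:BL} directly to $f(\by)=\langle\by,\bu\rangle$ for an arbitrary unit $\bu\in\R^{\sD}$. The Riemannian gradient of this $f$ is $\proj_{\sT_\by\cM}\bu$, whose norm is at most $1$, so the Brascamp--Lieb bound holds for every direction. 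This gives
\[
\kappa_1^2\le \frac{\sigma_t^2}{t^2-M\sigma_t^2}.
\]
Substituting into the identity yields
\[
\mu_2(\J)\le\frac{1}{\sigma_t}\Bigl(\frac{t}{t^2-M\sigma_t^2}-1\Bigr)=\frac{t-t^2+M\sigma_t^2}{\sigma_t(t^2-M\sigma_t^2)}=\frac{t+M\sigma_t}{t^2-M\sigma_t^2},
\]
where the last equality uses $t-t^2=t\sigma_t$. This is \eqref{eq:mu2_semiconvex}.

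\emph{Case (b), $t\in[0,t_\dagger]$.} This case needs no convexity. Since $\cM\subset[-\sC_\cM,\sC_\cM]^{\sD}$, the posterior is supported in a bounded set, and the variance in any direction is at most the squared radius of that set. With the paper's bookkeeping this is $\kappa_1^2\lesssim\sC_\cM^2$, absorbing any factor of $\sD$ into the constant. Then $\mu_2(\J)\le t\kappa_1^2/\sigma_t^3\le t_\dagger\sC_\cM^2/(1-t_\dagger)^3$, using that $t/(1-t)^3$ is increasing on $[0,1)$ and dropping the term $-1/\sigma_t\le0$. This is \eqref{eq:mu2_small_t_statement}.

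\emph{Uniform bound.} As $t\to1$, the bound in (a) tends to $1/1=1$. Moreover $t^2-M\sigma_t^2$ is increasing in $t$ and stays positive on $(t_\dagger,1)$. Near $t_\dagger$ the denominator can approach zero only when the first term of the max in \eqref{eq:t_dagger} is the one attained; in that subcase I would instead use the crude bound from case (b), which holds for all $t$ with constant $t/\sigma_t^3$, on a neighborhood of $t_\dagger$. This shows $\sup_{t<1}\mu_2(\J)<\infty$. Since $(1-t)^{-(1-\xi)}\ge1$, Assumption~\ref{assume: Lipschitz} then follows for every $\xi\in(0,1)$.

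\emph{Main obstacle.} The hardest step is justifying the hypothesis of Brascamp--Lieb on a curved manifold. One needs the curvature constant $C_{\mathrm{curv}}$ of \Cref{prop:semiconvex_posterior} to be uniform in $\bx$, but the normal component of $\bx-t\by_0$ can be arbitrarily large for $\bx$ far from $\cM$. I would take $C_{\mathrm{curv}}$ as given from that proposition, noting that its dependence on $\bx$ is precisely the delicate point. I would also have to check that the Riemannian form of Brascamp--Lieb holds under only a Hessian lower bound, which on a compact manifold may require Bakry--Émery-type Ricci control.
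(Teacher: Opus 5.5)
There is a genuine gap, and it is exactly the step you defer. For $t>t_\dagger$ you need $\mathrm{Hess}_\cM\Phi\succeq\rho\,\mathfrak g$ with $\rho=(t^2-M\sigma_t^2)/\sigma_t^2>0$ on all of $\cM$, for every $\bx\in\R^\sD$. This is not merely delicate: it is false, and so is the corollary.

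First, $\Phi$ is $C^2$ on a compact manifold without boundary, so it attains a maximum, and there $\mathrm{Hess}_\cM\Phi\preceq 0$. So no $\rho>0$ works for any $\bx$, and the hypothesis of \Cref{lem:BL} is never met. The proof of \Cref{prop:semiconvex_posterior} also has an algebra slip at \eqref{eq:rho_lb}: $C_{\mathrm{curv}}+M_V\sigma_t^2\le M$ gives $t^2-C_{\mathrm{curv}}-M_V\sigma_t^2\ge t^2-M$, not $\ge t^2-M\sigma_t^2$.

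Second, the conclusion fails on the paper's own log-concave example. Take $\bpi_1$ uniform on the unit sphere $\mathbb{S}^{\sd}$, so $M_V=0$. By \eqref{eq:d2Q_du}, $\mathrm{Hess}_\cM\Phi(\by)=t\langle\bx,\by\rangle\sigma_t^{-2}\,\mathfrak g$. The ``curvature correction'' therefore has the same order $\sigma_t^{-2}$ as the leading term even for bounded $\bx$; the problem is not confined to $\bx$ far from $\cM$. At $\bx=\bm 0$ the correction cancels the leading term exactly. The posterior is then uniform on the sphere, $\kappa_1^2=1/(\sd+1)$, and
\[
\mu_2\bigl(\J(\bm 0,t)\bigr)=\frac{1}{\sigma_t}\Bigl(\frac{t}{(\sd+1)\sigma_t^2}-1\Bigr)\asymp(1-t)^{-3},
\]
whereas the right side of \eqref{eq:mu2_semiconvex} tends to $1$. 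At $\bx=r\be_1$ with $0<r<1$, the posterior is von Mises--Fisher with concentration $tr/\sigma_t^2$, and $\mu_2(\J)\approx(1/r-1)/\sigma_t$.

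Since the true posterior variance at $\bx=\bm 0$ is of order one, no functional inequality, Bakry--\'Emery included, can give $\kappa_1^2=O(\sigma_t^2)$ there. An argument of your type can at best give a localized bound, for $\bx$ within $O(\sigma_t)$ of $t\cM$. That falls short of Assumption~\ref{assume: Lipschitz}, which requires all $\bx\in\R^\sD$. The statement as posed cannot be proved by any route.

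Outside this step, your proposal follows the paper's argument, with three genuine repairs.
\begin{itemize}
\item You test \Cref{lem:BL} on $\langle\by,\bu\rangle$ for arbitrary unit $\bu$, whose gradient is $\proj_{\sT_\by(\cM)}\bu$. This bounds $\kappa_1^2$ in every direction at once. The paper's tangential/normal split is not well defined, since eigenvectors of $\Sigma_{\mathrm{post}}$ need not be tangent or normal. Its claim that $\kappa_{\mathrm{norm}}^2\to 0$ is only asymptotic and gives nothing at a fixed $t$.
\item When $\sqrt M\ge 1/\sC_\cM$, we have $t_\dagger=\sqrt M/(1+\sqrt M)$ and $t^2-M\sigma_t^2=0$ at $t_\dagger$. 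The paper's ``$\le 1/t_\dagger+O(1)$'' then fails, and your switch to the crude bound near $t_\dagger$ is the right fix.
\item The bound $\kappa_j^2\le\sC_\cM^2$ holds only along coordinate directions. In general the bound is $\sD\sC_\cM^2$, so $C_0$ needs a factor $\sD$, as you noted.
\end{itemize}
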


\begin{proof}
By \Cref{cor:eigenvalues}, $\mu_2(\J)=\lambda_1$ where $\lambda_j = \sigma_t^{-1}(t\kappa_j^2/\sigma_t^2-1)$.

\medskip
\noindent\textbf{Case $t>{t_\dagger}$ (Brascamp--Lieb regime).} {Since $t_\dagger\ge\sqrt{M}/(1+\sqrt{M})$, we have $t^2/\sigma_t^2>M$, so \Cref{prop:semiconvex_posterior} applies.}  Substituting~\eqref{eq:semiconvex_bound} into the eigenvalue formula:
\begin{align}
    \lambda_{\mathrm{tan},j}
    &= \frac{1}{\sigma_t}\!\left(\frac{t\,\kappa_{\mathrm{tan},j}^2}{\sigma_t^2}-1\right)
    \;\le\;
    \frac{1}{\sigma_t}\!\left(\frac{t}{t^2-M\sigma_t^2}-1\right). \label{eq:lam_sub}
\end{align}
Simplifying the parenthesized expression:
\begin{align}
    \frac{t}{t^2-M\sigma_t^2}-1
    &= \frac{t - t^2 + M\sigma_t^2}{t^2-M\sigma_t^2}
    \;=\;
    \frac{t\sigma_t + M\sigma_t^2}{t^2-M\sigma_t^2}\,,  \label{eq:parens_simplify}
\end{align}
where we used $t-t^2=t(1-t)=t\sigma_t$.  Dividing by~$\sigma_t$:
\begin{align}
    \lambda_{\mathrm{tan},j}
    &\;\le\;
    \frac{t+M\sigma_t}{t^2-M\sigma_t^2}\,. \label{eq:lam_tan_final}
\end{align}
As $\sigma_t\to 0$: $(t+M\sigma_t)/(t^2-M\sigma_t^2)\to 1/t$.

For the normal eigenvalues, $\kappa_{\mathrm{norm},j}^2\to 0$ as $\sigma_t\to 0$ (since $X_1\in\cM$ a.s.), giving $\lambda_{\mathrm{norm},j}\to -1/\sigma_t<0$.  These do not contribute to $\mu_2(\J)=\lambda_{\max}$, so~\eqref{eq:mu2_semiconvex} follows.

\medskip
\noindent\textbf{Case $t\le {t_\dagger}$ (compact-support regime).} Since $X_1\in[-\sC_\cM,\sC_\cM]^\sD$ a.s., every eigenvalue of $\Sigma_{\mathrm{post}}$ satisfies
\begin{align}
    \kappa_j^2
    &\;\le\;
    \sC_\cM^2
    \qquad\forall\;j,\;\forall\;\bx,\;t. \label{eq:crude_var_bound}
\end{align}
Substituting into~\eqref{eq:eigvals} and using that $t\mapsto t\,\sC_\cM^2/\sigma_t^3$ is increasing on $[0,1)$:
\begin{align}
    \lambda_j
    &= \frac{t\,\kappa_j^2}{\sigma_t^3} - \frac{1}{\sigma_t}
    \;\le\;
    {\frac{t_\dagger\,\sC_\cM^2}{(1-t_\dagger)^3}}\,. \label{eq:crude_eigval}
\end{align}
This gives~\eqref{eq:mu2_small_t_statement}.

\medskip
\noindent\textbf{Conclusion.} Combining both cases: for all $t\in[0,1)$,
\begin{align}
    \mu_2(\J)
    &\;\le\;
    \max\!\left\{C_0,\;\;\sup_{t>{t_\dagger}}\frac{t+M\sigma_t}{t^2-M\sigma_t^2}\right\}
    \;=:\;
    C_1
    \;<\;\infty. \label{eq:mu2_global}
\end{align}
{Both terms are finite: $C_0<\infty$ since $t_\dagger<1$, and the supremum is finite since $t_\dagger>0$ ensures $(t+M\sigma_t)/(t^2-M\sigma_t^2)\le 1/t_\dagger+O(1)$ at the left endpoint.}  Since $C_1$ is independent of~$t$, we have $\mu_2(\J)\le C_1\le C_1/(1-t)^{1-\xi}$ for any $\xi\in(0,1)$, establishing Assumption~\ref{assume: Lipschitz}.
\end{proof}

{\begin{corollary}[Log-concave special case]\label{cor:logconcave_osl}
If $V$ is geodesically convex ($M_V=0$) and $\cM$ is flat ($C_{\mathrm{curv}}=0$), then $M=0$, $t_\dagger=1/(1+\sC_\cM)$, and
\begin{equation}\label{eq:mu2_logconcave}
    \mu_2(\J)
    \;\le\;
    1+\sC_\cM\,,
    \qquad\forall\;t\in[0,1).
\end{equation}
More generally, if $M_V=0$ but $C_{\mathrm{curv}}>0$ (curved manifold), then $M=C_{\mathrm{curv}}$ and $\mu_2(\J)$ remains uniformly bounded by a constant depending on $C_{\mathrm{curv}}$ and~$\sC_\cM$.
\end{corollary}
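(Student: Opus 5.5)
The statement to prove is \Cref{cor:logconcave_osl}. I would obtain it by specializing \Cref{cor:semiconvex_osl}; no new analysis should be needed.

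\textbf{Step 1: the flat log-concave case.} With $M_V=0$ and $C_{\mathrm{curv}}=0$, \eqref{eq:M_eff} gives $M=0$. Then \eqref{eq:t_dagger} reduces to $t_\dagger=\max\{0,1/(1+\sC_\cM)\}=1/(1+\sC_\cM)$.

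\textbf{Step 2: the regime $t\in(t_\dagger,1)$.} Bound \eqref{eq:mu2_semiconvex} with $M=0$ becomes $\mu_2(\J)\le t/t^2=1/t$. Since $t>t_\dagger$, this gives $1/t<1/t_\dagger=1+\sC_\cM$.

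\textbf{Step 3: the regime $t\in[0,t_\dagger]$.} Here I would not use the constant $C_0$ of \eqref{eq:mu2_small_t_statement}, since it need not be at most $1+\sC_\cM$. Instead I would argue directly from \Cref{cor:eigenvalues}, using the cruder but sufficient bound
\[
\lambda_j=\frac{t\kappa_j^2}{\sigma_t^3}-\frac1{\sigma_t}.
\]
When $M=0$ the Brascamp--Lieb bound of \Cref{prop:semiconvex_posterior} holds for every $t>0$, since $t^2/\sigma_t^2>0$. So for tangential directions $\kappa_{\mathrm{tan}}^2\le\sigma_t^2/t^2$, which gives $\lambda_{\mathrm{tan}}\le 1/t$ for all $t\in(0,1)$.

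This bound blows up as $t\to0$, so for small $t$ I would use $\kappa_j^2\le \sC_\cM^2$ instead (or the sharper $\sD\sC_\cM^2$, taking the variance of a coordinate-bounded vector in a unit direction). This gives $\lambda_j\le t\sC_\cM^2/\sigma_t^3-1/\sigma_t$, which is negative for small $t$.

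The minimum of the two bounds, $\min\{1/t,\ t\sC_\cM^2/\sigma_t^3-1/\sigma_t\}$, is what has to be checked to be at most $1+\sC_\cM$. At $t=t_\dagger$ we have $\sigma_t=\sC_\cM/(1+\sC_\cM)$, so $t/\sigma_t=1/\sC_\cM$. The second bound then equals $(1/\sigma_t)(\sC_\cM^2\cdot t/\sigma_t^2-1)=(1/\sigma_t)\big((1+\sC_\cM)^2/(1+\sC_\cM)-1\big)=\sC_\cM/\sigma_t=1+\sC_\cM$. So both bounds meet exactly at $1+\sC_\cM$ at the crossover, and $t_\dagger$ is the natural switching point.

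Since the second bound is increasing in $t$, it is at most $1+\sC_\cM$ on $[0,t_\dagger]$. Since $1/t$ is decreasing, it is at most $1+\sC_\cM$ on $[t_\dagger,1)$.

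\textbf{Step 4: normal eigenvalues.} \Cref{cor:semiconvex_osl} only asserts that these do not contribute asymptotically. For the uniform claim I would handle them as follows. In the flat case $\cM$ lies in an affine $\sd$-plane, so the posterior is supported there and $\kappa_{\mathrm{norm}}^2=0$ exactly. Hence $\lambda_{\mathrm{norm}}=-1/\sigma_t<0$.

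\textbf{Step 5: the curved case.} For $M_V=0$ with $C_{\mathrm{curv}}>0$, \Cref{cor:semiconvex_osl} directly gives the uniform bound $C_1$ of \eqref{eq:mu2_global}, which depends only on $M=C_{\mathrm{curv}}$ and $\sC_\cM$.

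\textbf{Main obstacle.} The main point needing care is Step 3. The $C_0$ constant in \Cref{cor:semiconvex_osl} is too loose to give exactly $1+\sC_\cM$, so the constant has to come from the crossover computation above. I would first try the coordinate bound $\kappa^2\le\sC_\cM^2$; if that variance bound is only valid with $\sD\sC_\cM^2$, I would state the constant accordingly or use the tangential Brascamp--Lieb bound, which already yields $1/t$ uniformly.
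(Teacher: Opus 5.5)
Your route is the paper's: Steps~1--3 reproduce its proof, including the crossover evaluation $t_\dagger\sC_\cM^2/(1-t_\dagger)^3-1/(1-t_\dagger)=1+\sC_\cM$. You are also right that $C_0$ by itself is too loose.

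\textbf{The gap is the step you flagged.} The paper asserts $\kappa_j^2\le\sC_\cM^2$ from coordinatewise boundedness, but that only bounds \emph{coordinate} variances. The eigenvalue $\kappa_1^2=\lambda_{\max}(\Sigma_{\mathrm{post}})$ is a variance along an arbitrary unit direction. Along $\bu=\mathbf{1}/\sqrt{\sD}$, the range of $\langle X_1,\bu\rangle$ over $[-\sC_\cM,\sC_\cM]^\sD$ has length $2\sqrt{\sD}\,\sC_\cM$, so the valid bound is $\kappa_j^2\le\sD\sC_\cM^2$.

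Of your two fallbacks, the first is the right fix. Your crossover computation goes through verbatim with $\sC_\cM$ replaced by $a:=\sqrt{\sD}\,\sC_\cM$, with crossover at $t=1/(1+a)$. It yields the uniform bound $\mu_2(\J)\le 1+\sqrt{\sD}\,\sC_\cM$, which is enough for Assumption~\ref{assume: Lipschitz} but is not the displayed constant. The second fallback fails: the Brascamp--Lieb bound $1/t$ diverges as $t\to 0$, which is exactly why a second bound is needed there.

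\textbf{The constant $1+\sC_\cM$ cannot be rescued.} By your own Step~4, a flat $\cM$ lies in an affine $\sd$-plane. For $\sd\ge 1$ it then cannot be compact without boundary, since it would be open and closed in that plane. So under Assumption~\ref{assume: distribution} the flat case is empty. Once boundary is allowed, take $\bpi_1$ uniform on the segment $\{r\bu:|r|\le a\}$, which lies in the cube. At $\bx=\bm{0}$ and $t=1/(1+a)$, the posterior of $r$ is a centered Gaussian with standard deviation $\sigma_t/t=a$, truncated to $[-a,a]$. Its variance is $c\,a^2$ with $c\approx 0.29$, so
\[
\mu_2(\J)=\tfrac{1+a}{a}\bigl(c(1+a)-1\bigr).
\]
For $\sD=100$ and $\sC_\cM=10$ this is about $28.7$, well above $1+\sC_\cM=11$.

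\textbf{Two smaller points.}

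In Step~3, $f(t):=t\sC_\cM^2/\sigma_t^3-1/\sigma_t$ is not increasing when $\sC_\cM<1$, since $f'(0)=\sC_\cM^2-1<0$. Instead, note that $(1-t)^4 f'(t)=\sC_\cM^2(1+2t)-(1-t)^2$ is increasing in $t$. So $f'$ changes sign at most once, from negative to positive, and $\max_{[0,t_\dagger]}f=\max\{f(0),f(t_\dagger)\}=\max\{-1,\,1+\sC_\cM\}$. The paper's justification is weaker still: it bounds the two terms separately, but $-1/\sigma_t\le -1/(1-t_\dagger)$ fails for $t<t_\dagger$.

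In Step~5 you cite $C_1$ after noting in Step~4 that \Cref{cor:semiconvex_osl} controls normal eigenvalues only asymptotically. For a curved $\cM$ the posterior is not confined to one affine plane, so that objection applies in full. The curved-case claim therefore rests entirely on that corollary as stated.
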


\begin{proof}
When $M=0$: the Brascamp--Lieb bound~\eqref{eq:mu2_semiconvex} gives $\mu_2(\J)\le 1/t$ for $t>t_\dagger$.  Since $t_\dagger=1/(1+\sC_\cM)$, this yields $\mu_2(\J)\le 1+\sC_\cM$.  For $t\le t_\dagger$, the compact-support bound $\kappa_j^2\le\sC_\cM^2$ and the full eigenvalue formula $\lambda_j = t\kappa_j^2/\sigma_t^3 - 1/\sigma_t$ give $\lambda_j\le t_\dagger\sC_\cM^2/(1-t_\dagger)^3 - 1/(1-t_\dagger) = 1+\sC_\cM$, where the last step uses $t_\dagger=1/(1+\sC_\cM)$.
\end{proof}}

}

\section{Proof of \Cref{thm: main}}\label{sec: thm main proof}
\begin{proof}[Proof of \Cref{thm: main}]
    Observe when $\xi \ge \sC_{\mathrm{Lip}}/\log\log(n)$, we have 
    $$
    e^{3\sC_{\mathrm{Lip}}/\xi} \le \log^3(n).
    $$
    Using \Cref{thm: vel field estimation}, we write
    \begin{align}
        &\sum_{k=0}^{\sK-1} t_{k}\, \bE_{\cD}\lrl{\int_{\x}\int_{t=1 - t_{k}}^{1 - t_{k+1}} \norm{\widehat{v}(\x,t) - v^\star(\x,t)}^2 \bpi_t(x) \, dt \, d\x}
        \\\nonumber
        = &  \sum_{k=0}^{\sK-1} \one_{\lrm{n^{-\frac{\beta}{2\alpha + \sd}}\log^{\beta}(n) \le \st_k < n^{-\frac{2}{2\alpha + \sd}}}} t_{k}\, \bE_{\cD}\lrl{\int_{\x}\int_{t=1 - t_{k}}^{1 - t_{k+1}} \norm{\widehat{v}(\x,t) - v^\star(\x,t)}^2 \bpi_t(x) \, dt \, d\x}
        \\\nonumber
        & + \sum_{k=0}^{\sK-1} \one_{\lrm{n^{-\frac{2}{2\alpha + \sd}} \le t_k < n^{-\frac{1}{6(2\alpha + \sd)}} \log^{-3}(n)}} t_{k}\, \bE_{\cD}\lrl{\int_{\x}\int_{t=1 - t_{k}}^{1 - t_{k+1}} \norm{\widehat{v}(\x,t) - v^\star(\x,t)}^2 \bpi_t(x) \, dt \, d\x}
        \\\nonumber
        & + \sum_{k=0}^{\sK-1} \one_{\lrm{n^{-\frac{1}{6(2\alpha + \sd)}} \log^{-3}(n) \le t_k < 1}} t_{k}\, \bE_{\cD}\lrl{\int_{\x}\int_{t=1 - t_{k}}^{1 - t_{k+1}} \norm{\widehat{v}(\x,t) - v^\star(\x,t)}^2 \bpi_t(x) \, dt \, d\x}
        \\\nonumber
        \le  & C(\sD, \sC_\cM, \beta)\Bigg( \sum_{k=0}^{\sK-1} \one_{\lrm{n^{-\frac{\beta}{2\alpha + \sd}}\log^{\beta}(n) \le \st_k < n^{-\frac{2}{2\alpha + \sd}}}} t_{k} \lrs{ \frac{n^{-\frac{2\beta}{2\alpha + \sd}}}{t_k} + n^{-\frac{2\alpha}{2\alpha + \sd}}\cdot \log^{\alpha+1}(n) + \frac{\log^2(n)}{n}}
        \\\nonumber
        & + \sum_{k=0}^{\sK-1} \one_{\lrm{n^{-\frac{2}{2\alpha + \sd}} \le t_k < n^{-\frac{1}{6(2\alpha + \sd)}} \log^{-3}(n)}} t_{k}\, \lrs{ \frac{\log^4(n)}{n} + \frac{t_k^{-\sd/2}}{n} \cdot \log^{14+\sd/2}(n)}
        \\\nonumber
        & + \sum_{k=0}^{\sK-1} \one_{\lrm{n^{-\frac{1}{6(2\alpha + \sd)}} \log^{-3}(n) \le t_k < 1}} t_{k}\,\lrs{ \frac{\log^5(n)}{n} + n^{-\frac{(2\alpha + 2)}{2\alpha+\sd}} \cdot \log^{2\sd + 9}(n)} \Bigg)
        \\\nonumber
        = & C(\sD, \sC_\cM, \beta)\Bigg( \sum_{k=0}^{\sK-1} \one_{\lrm{n^{-\frac{\beta}{2\alpha + \sd}}\log^{\beta}(n) \le \st_k < n^{-\frac{2}{2\alpha + \sd}}}}  \lrs{n^{-\frac{2\beta}{2\alpha + \sd}} + n^{-\frac{2\alpha+2}{2\alpha + \sd}}\cdot \log^{\alpha+1}(n) + \frac{\log^2(n)}{n}}
        \\\nonumber
        & + \sum_{k=0}^{\sK-1} \one_{\lrm{n^{-\frac{2}{2\alpha + \sd}} \le t_k < n^{-\frac{1}{6(2\alpha + \sd)}} \log^{-3}(n)}} \lrs{ \frac{\log^4(n)}{n} + \frac{n^{-\frac{2-\sd}{2\alpha + \sd}}}{n} \cdot \log^{14+\sd/2}(n)}
        \\\nonumber
        & + \sum_{k=0}^{\sK-1} \one_{\lrm{n^{-\frac{1}{6(2\alpha + \sd)}} \log^{-3}(n) \le t_k < 1}} \,\lrs{ \frac{\log^5(n)}{n} + n^{-\frac{(2\alpha + 2)}{2\alpha+\sd}} \cdot \log^{2\sd + 9}(n)} \Bigg)
        \\\nonumber
        \le & C^\prime (\sD, \sC_\cM, \beta) \lrs{n^{-\frac{2\beta}{2\alpha + \sd}} + n^{-\frac{(2\alpha + 2)}{2\alpha+\sd}} \cdot \log^{2\sd + 15}(n) + \frac{\log^5(n)}{n}}.
    \end{align}
    Following from \Cref{thm: vel field estimation} and \Cref{lemma: error accumulation}, we write 
    \begin{align}\nonumber
        &\bE_{\cD}\lrl{\mathrm{W}_2(\bpi_1, \widehat{\bpi}_{1-\tbar})}
        \\\nonumber
        \le & \tbar + \sqrt{\lrs{e^{3\sC_{\mathrm{Lip}}/\xi}}\sum_{k=0}^{\sK-1} t_{k}\, \bE_{\cD}\lrl{\int_{\x}\int_{t=1 - t_{k}}^{1 - t_{k+1}} \norm{\widehat{v}(\x,t) - v^\star(\x,t)}^2 \bpi_t(x) \, dt \, d\x}}
        \\\nonumber
        \le & \tbar + \sqrt{\log^3(n) \sum_{k=0}^{\sK-1} t_{k}\, \bE_{\cD}\lrl{\int_{\x}\int_{t=1 - t_{k}}^{1 - t_{k+1}} \norm{\widehat{v}(\x,t) - v^\star(\x,t)}^2 \bpi_t(x) \, dt \, d\x}}
        \\\nonumber
        \le & \underbrace{n^{-\frac{\beta}{2\alpha +\sd}}\log^{\beta}(n)}_{\tbar} +C^\prime (\sD, \sC_\cM, \beta) \lrs{n^{-\frac{\beta}{2\alpha + \sd}} \log^{1.5}(n)  + n^{-\frac{(\alpha + 1)}{2\alpha+\sd}} \cdot \log^{\sd + 9}(n) + \frac{\log^4(n)}{\sqrt{n}}}.   
    \end{align}
\end{proof}

\section{Error accumulation}

{

\begin{theorem}[Wasserstein Distance Bound Under Switching]\label{thm: W under switching}
Let $b_1, b_2 : [0, 1] \times \R^\sD \to \R^\sD$ be measurable functions.  Let $t \in [0, 1]$ and let $p_0$ be a probability distribution on $\R^\sD$ with finite second moment.  Define the processes $(U_t)_{t\in[0,1]}$ and $(V_t)_{t\in[0,1]}$ by
\begin{align*}
    \frac{dU_t}{dt} &= b_1(t, U_t),\quad U_0 \sim p_0, \\[6pt]
    \frac{dV_t}{dt} &= b_2(t, V_t),\quad V_0 \sim p_0.
\end{align*}
Denote by $\mu_t$ and $\nu_t$ the density of $U_t$ and $V_t$ respectively.  If $\bx \mapsto b_1(t, \bx)$ is $\mu_t^{\mathrm{osl}}$-one-sided Lipschitz for each $t$, then for any $t \in [0, 1]$,
\begin{equation}\nonumber
    \rmW_2\bigl(\mu_t, \nu_t\bigr)
    \;\le\;
    \sqrt{\,t}\left(\int_0^t e^{2\int_s^t \mu_u^{\mathrm{osl}}\,du}
    \int_{\bx} \|b_2(s, \bx) - b_1(s, \bx)\|_2^2\;\nu_s(\bx)\,d\bx\,ds\right)^{\!1/2}.
\end{equation}
\end{theorem}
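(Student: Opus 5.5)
We couple the two flows synchronously. Draw a single $Z\sim p_0$ and let $U_t$ and $V_t$ be the solutions started from the same point, $U_0=V_0=Z$. Then $(U_t,V_t)$ is a coupling of $(\mu_t,\nu_t)$, so
\[
\rmW_2^2(\mu_t,\nu_t)\le \bE\|U_t-V_t\|_2^2 .
\]
It therefore suffices to bound $\Delta_t:=\bE\|U_t-V_t\|_2^2$. We assume that both flows are well defined, which is the standing assumption under which the densities $\mu_t,\nu_t$ are introduced.

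\textbf{Differential inequality for the squared distance.} Pathwise, set $e_s:=U_s-V_s$. Then
\[
\tfrac{d}{ds}\|e_s\|^2=2\langle b_1(s,U_s)-b_1(s,V_s),e_s\rangle+2\langle b_1(s,V_s)-b_2(s,V_s),e_s\rangle .
\]
By the one-sided Lipschitz property of $b_1(s,\cdot)$ (\Cref{def:osl}), the first term is at most $2\mu^{\mathrm{osl}}_s\|e_s\|^2$. This is the key point: we only need the OSL bound for $b_1$, never a two-sided Lipschitz bound, and the mismatch term is evaluated along $V_s$, whose law is $\nu_s$.

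\textbf{Controlling the mismatch term.} A Young-type split $2\langle a,e\rangle\le \lambda\|e\|^2+\lambda^{-1}\|a\|^2$ would add a constant to the exponent. To get exactly the stated form with the $\sqrt t$ prefactor, we work with $\|e_s\|$ rather than its square. Where $e_s\ne0$,
\[
\tfrac{d}{ds}\|e_s\|\le \mu_s^{\mathrm{osl}}\|e_s\|+\|b_1(s,V_s)-b_2(s,V_s)\|,
\]
and the case $e_s=0$ is handled by working with $\sqrt{\|e_s\|^2+\epsilon}$ and sending $\epsilon\to0$. Grönwall then gives
\[
\|e_t\|\le\int_0^t e^{\int_s^t\mu_u^{\mathrm{osl}}du}\,\|b_1(s,V_s)-b_2(s,V_s)\|\,ds .
\]
Cauchy--Schwarz in $s$ over $[0,t]$ yields
\[
\|e_t\|^2\le t\int_0^t e^{2\int_s^t\mu_u^{\mathrm{osl}}du}\|b_1-b_2\|^2(s,V_s)\,ds .
\]
Taking expectations, Fubini and $V_s\sim\nu_s$ give $\Delta_t\le t\int_0^t e^{2\int_s^t\mu_u du}\int\|b_2-b_1\|^2\nu_s\,d\bx\,ds$. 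Taking square roots gives the claim.

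\textbf{Main obstacle.} The main difficulty is the regularity needed to justify differentiating $\|e_s\|$ along trajectories, because the vector fields are only measurable in $t$ (for the estimated field they are piecewise in time). We handle this by using absolute continuity of the ODE solutions, which are Carathéodory solutions. The OSL inequality is then applied for almost every $s$, and the integral form of Grönwall is used instead of the differential form.
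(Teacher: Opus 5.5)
Your proof is correct and follows the paper's argument: a synchronous coupling, a split of $\frac{d}{ds}(U_s-V_s)$ into an OSL term for $b_1$ plus a mismatch term evaluated along $V_s$, Gr\"onwall on a square-root quantity, and Cauchy--Schwarz in time to produce the $\sqrt{t}$ factor. The only difference is that you run Gr\"onwall pathwise on $\|U_s-V_s\|$ and take expectations at the end, whereas the paper applies Cauchy--Schwarz under the expectation and runs Gr\"onwall on $r_t=\sqrt{R_t}$; both give the identical bound, and your pathwise version with the $\sqrt{\|e_s\|^2+\epsilon}$ regularization has the small advantages of not differentiating under the expectation and of covering the case $r_t=0$, which the paper's division by $2r_t$ skips.
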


\begin{proof}[Proof of \Cref{thm: W under switching}]
By the definition of the Wasserstein-$2$ distance, coupling pathwise with $U_0=V_0\sim p_0$, it holds
\begin{equation}\nonumber 
    \rmW_2^2(\mu_t,\nu_t)
    \;\le\;
    \int_{\R^\sD}\|U_t(\bx)-V_t(\bx)\|_2^2\;p_0(\bx)\,d\bx
    \;=:\;R_t,
\end{equation}
for any $t\in[0,1]$.  Since $U_0=V_0$, we have $R_0=0$.  By the definitions of the two processes, it follows that
\begin{align}
    \frac{dR_t}{dt}
    &= \int_{\R^\sD} 2\bigl\langle b_1(t,U_t(\bx))-b_2(t,V_t(\bx)),\;U_t(\bx)-V_t(\bx)\bigr\rangle\;p_0(\bx)\,d\bx \nonumber\\[8pt]
    &= \int_{\R^\sD} 2\bigl\langle b_1(t,U_t(\bx))-b_1(t,V_t(\bx)),\;U_t(\bx)-V_t(\bx)\bigr\rangle\;p_0(\bx)\,d\bx \label{eq:switch_term1}\\[4pt]
    &\quad+ \int_{\R^\sD} 2\bigl\langle b_1(t,V_t(\bx))-b_2(t,V_t(\bx)),\;U_t(\bx)-V_t(\bx)\bigr\rangle\;p_0(\bx)\,d\bx. \label{eq:switch_term2}
\end{align}
For term~\eqref{eq:switch_term1}, the one-sided Lipschitz condition on $b_1$ implies
\begin{align}
    \int_{\R^\sD} 2\bigl\langle b_1(t,U_t)-b_1(t,V_t),\;U_t-V_t\bigr\rangle\;p_0\,d\bx
    \;\le\;
    2\,\mu_t^{\mathrm{osl}}\;R_t. \label{eq:switch_osl}
\end{align}
For term~\eqref{eq:switch_term2}, denoting $\delta_t(\bx):=b_1(t,V_t(\bx))-b_2(t,V_t(\bx))$ and $D_t:=\int_{\R^\sD}\|\delta_t(\bx)\|_2^2\;p_0(\bx)\,d\bx$, the Cauchy--Schwarz inequality implies
\begin{align}
    \int_{\R^\sD} 2\bigl\langle\delta_t(\bx),\;U_t(\bx)-V_t(\bx)\bigr\rangle\;p_0(\bx)\,d\bx
    &\;\le\;
    2\sqrt{D_t}\;\sqrt{R_t}\,. \label{eq:switch_cs}
\end{align}
Combining~\eqref{eq:switch_osl} and~\eqref{eq:switch_cs}:
\begin{align}
    \frac{dR_t}{dt}
    &\;\le\;
    2\,\mu_t^{\mathrm{osl}}\,R_t
    \;+\;
    2\sqrt{D_t}\;\sqrt{R_t}\,. \label{eq:switch_Rdot}
\end{align}
Setting $r_t:=\sqrt{R_t}$ (so that $dR_t/dt = 2\,r_t\,\dot{r}_t$) and dividing both sides of~\eqref{eq:switch_Rdot} by $2\,r_t$ (when $r_t>0$):
\begin{align}
    \dot{r}_t
    &\;\le\;
    \mu_t^{\mathrm{osl}}\;r_t + \sqrt{D_t}\,. \nonumber 
\end{align}
By Gr\"onwall's inequality, 
\begin{align}
    r_t
    &\;\le\;
    \int_0^t e^{\int_s^t \mu_u^{\mathrm{osl}}\,du}\;\sqrt{D_s}\;ds. \nonumber 
\end{align}
Squaring both sides and applying Jensen's inequality yields
\begin{align}
    R_t = r_t^2
    &\;\le\;
    t\int_0^t e^{2\int_s^t \mu_u^{\mathrm{osl}}\,du}\;D_s\;ds
    \;=\;
    t\int_0^t e^{2\int_s^t \mu_u^{\mathrm{osl}}\,du}
    \int_{\bx}\|b_1(s,\bx)-b_2(s,\bx)\|_2^2\;\nu_s(\bx)\,d\bx\,ds, \nonumber
\end{align}
where the last equality uses $D_s = \bE_{V_s\sim\nu_s}[\|b_1(s,V_s)-b_2(s,V_s)\|_2^2]$.  The claim follows from $\rmW_2^2(\mu_t,\nu_t)\le R_t$.
\end{proof}
}


\
\\
{
\begin{lemma}[Wasserstein Bound, Different Initials]\label{thm: W under switching2}
Let $b_1 : [0, 1) \times \R^\sD \to \R^\sD$ be a measurable function.  Let $t \in [0, 1)$ and let $p_0, q_0$ be probability distributions on $\R^\sD$ with finite second moment.  Define the processes $(U_t)_{t\in[0,1)}$ and $(V_t)_{t\in[0,1)}$ by
\begin{align*}
    \frac{dU_t}{dt} &= b_1(t, U_t),\quad U_0 \sim p_0, \\[6pt]
    \frac{dV_t}{dt} &= b_1(t, V_t),\quad V_0 \sim q_0.
\end{align*}
Denote by $\mu_t$ and $\nu_t$ the density of $U_t$ and $V_t$ respectively.  If $\bx \mapsto b_1(t, \bx)$ is $\mu_t^{\mathrm{osl}}$-one-sided Lipschitz for each $t$, then for any $t \in (0, 1)$,
\begin{equation}\nonumber
    \rmW_2\bigl(\mu_t, \nu_t\bigr)
    \;\le\;
    e^{\int_0^t \mu_u^{\mathrm{osl}}\,du}\;\rmW_2(p_0, q_0).
\end{equation}
\end{lemma}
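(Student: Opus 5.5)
We argue by a synchronous coupling, in the spirit of the proof of \Cref{thm: W under switching}. Fix $\epsilon>0$ and choose a coupling $\gamma$ of $(p_0,q_0)$ with $\int\|\bx-\by\|_2^2\,d\gamma(\bx,\by)\le \rmW_2^2(p_0,q_0)+\epsilon$; an optimal coupling exists because both measures have finite second moments, so $\epsilon$ can even be taken to be $0$. Let $\Phi_{t}$ denote the flow map of $\dot z = b_1(t,z)$, which we assume is well defined on $[0,t]$ for $t<1$; this is the standing regularity assumption of the setup. Define $U_t=\Phi_t(U_0)$ and $V_t=\Phi_t(V_0)$ with $(U_0,V_0)\sim\gamma$. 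The marginals are then exactly $\mu_t$ and $\nu_t$, so
\[
\rmW_2^2(\mu_t,\nu_t)\le R_t:=\int \|\Phi_t(\bx)-\Phi_t(\by)\|_2^2\,d\gamma(\bx,\by).
\]

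Next we differentiate $R_t$ in time, as in \eqref{eq:switch_term1}. There is no mismatch term this time, because both trajectories follow the same field $b_1$:
\[
\frac{dR_t}{dt}=\int 2\bigl\langle b_1(t,U_t)-b_1(t,V_t),\,U_t-V_t\bigr\rangle\,d\gamma \le 2\mu_t^{\mathrm{osl}}R_t .
\]
The inequality is the one-sided Lipschitz property of Definition~\ref{def:osl} applied pointwise and then integrated. Gr\"onwall's inequality then gives
\[
R_t\le e^{2\int_0^t\mu_u^{\mathrm{osl}}du}\,R_0,
\qquad
R_0=\int\|\bx-\by\|^2\,d\gamma\le \rmW_2^2(p_0,q_0)+\epsilon .
\]
Taking square roots and letting $\epsilon\to0$ yields the claim. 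An alternative is to avoid differentiating under the integral: apply the same argument pathwise to $\|U_t-V_t\|^2$, which gives $\|U_t-V_t\|\le e^{\int_0^t\mu_u^{\mathrm{osl}}du}\|U_0-V_0\|$ for every initial pair, and then integrate against $\gamma$. This pathwise version is cleaner and is the one we would actually write.

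The main technical point is justifying this pathwise Gr\"onwall step. The function $\|U_t-V_t\|^2$ is absolutely continuous in $t$ because the trajectories are $C^1$ solutions, so its derivative exists almost everywhere and the differential inequality integrates. We also need $\int_0^t\mu_u^{\mathrm{osl}}du<\infty$ for $t<1$. This holds under Assumption~\ref{assume: Lipschitz}, since $(1-u)^{-(1-\xi)}$ is integrable on $[0,t]$. Measurability of $\mu_u^{\mathrm{osl}}$ is taken as given, and if it is not measurable we can replace it by a measurable upper bound. Everything else in the argument is routine.
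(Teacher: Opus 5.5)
Your proposal is correct and is essentially the paper's argument. You synchronously couple the two flows, bound $\frac{d}{dt}\|U_t-V_t\|_2^2$ by $2\mu_t^{\mathrm{osl}}\|U_t-V_t\|_2^2$ via the one-sided Lipschitz property, apply Gr\"onwall pathwise, and integrate over the initial coupling. Your explicit choice of an optimal coupling $\gamma$ of $(p_0,q_0)$ is what closes the argument; the paper's final line only cites $\mathrm{W}_2^2(p_0,q_0)\le\mathbb{E}\|U_0-V_0\|_2^2$, which points the wrong way unless $(U_0,V_0)$ is optimally coupled, as you arrange.
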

\begin{proof}[Proof of \Cref{thm: W under switching2}]
Observe that
\begin{align*}
    \frac{d}{dt}(U_t-V_t) &= b_1(t,U_t)-b_1(t,V_t).
\end{align*}
By the one-sided Lipschitz condition on $b_1$:
\begin{align*}
    \frac{d}{dt}\|U_t-V_t\|_2^2
    &= 2\bigl\langle b_1(t,U_t)-b_1(t,V_t),\;U_t-V_t\bigr\rangle
    \;\le\;
    2\,\mu_t^{\mathrm{osl}}\,\|U_t-V_t\|_2^2.
\end{align*}
By Gr\"onwall's inequality
\begin{align*}
    \|U_t-V_t\|_2^2
    &\;\le\;
    e^{2\int_0^t\mu_u^{\mathrm{osl}}\,du}\;\|U_0-V_0\|_2^2.
\end{align*}
The claim follows from $\rmW_2^2(p_0,q_0)\le\bE\bigl[\|U_0-V_0\|_2^2\bigr]$.
\end{proof}
}


\
\\
{

\begin{lemma}[Wasserstein Bound]\label{thm: W under switching main}
Let $b_1, b_2 : [0, 1) \times \R^\sD \to \R^\sD$ be measurable functions.  Let $t, t_1, t_2 \in [0, 1)$ such that $t > t_2 > t_1$, and let $p_0$ be a probability distribution on~$\R^\sD$.  Define the processes $(U_t)_{t\in[0,1)}$ and $(V_t)_{t\in[0,1)}$ by
\begin{align*}
    \frac{dU_t}{dt} &= b_1(t, U_t)\,\one_{\{t>t_1\}} + b_2(t, U_t)\,\one_{\{t\le t_1\}},\quad U_0 \sim p_0, \\[6pt]
    \frac{dV_t}{dt} &= b_1(t, V_t)\,\one_{\{t>t_2\}} + b_2(t, V_t)\,\one_{\{t\le t_2\}},\quad V_0 \sim p_0.
\end{align*}
Denote by $\mu_t$ and $\nu_t$ the density of $U_t$ and $V_t$ respectively.  If $\bx \mapsto b_1(t, \bx)$ is $\mu_t^{\mathrm{osl}}$-one-sided Lipschitz for each $t$, then for any $t \in (0, 1)$,
\begin{equation}\label{eq:W_combined}
    \rmW_2\bigl(\mu_t, \nu_t\bigr)
    \;\le\;
    e^{\int_{t_2}^t \mu_u^{\mathrm{osl}}\,du}
    \cdot\sqrt{t_2-t_1}\left(\int_{t_1}^{t_2} e^{2\int_s^{t_2}\mu_u^{\mathrm{osl}}\,du}
    \int_{\bx}\|b_2(s,\bx)-b_1(s,\bx)\|_2^2\;\nu_s(\bx)\,d\bx\,ds\right)^{\!1/2}.
\end{equation}
\end{lemma}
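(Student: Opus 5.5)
The plan is to split the time interval at $t_2$ and chain \Cref{thm: W under switching} with \Cref{thm: W under switching2}. On $[0,t_1]$ both processes follow the same field $b_2$ from the same initial law $p_0$. We couple them pathwise by taking $U_0=V_0$. Then $U_s=V_s$ for all $s\le t_1$, so in particular $U_{t_1}=V_{t_1}$ almost surely.

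\textbf{The interval $[t_1,t_2]$.} Here $U$ follows $b_1$ and $V$ follows $b_2$, both started from the common law $\mu_{t_1}=\nu_{t_1}$. We apply \Cref{thm: W under switching} with initial time shifted to $t_1$ and duration $t_2-t_1$, and $b_1$ playing the role of the one-sided Lipschitz field. Its proof transfers verbatim under the time shift: $R_{t_1}=0$, then the differential inequality, Gr\"onwall, and Jensen over an interval of length $t_2-t_1$. This gives
\begin{equation*}
\rmW_2(\mu_{t_2},\nu_{t_2})\le \bigl(\bE\|U_{t_2}-V_{t_2}\|_2^2\bigr)^{1/2}\le \sqrt{t_2-t_1}\left(\int_{t_1}^{t_2} e^{2\int_s^{t_2}\mu^{\mathrm{osl}}_u du}\int_{\bx}\|b_2(s,\bx)-b_1(s,\bx)\|_2^2\,\nu_s(\bx)\,d\bx\,ds\right)^{1/2}.
\end{equation*}
The weighting density is $\nu_s$ because in that proof the mismatch term is evaluated along the process driven by $b_2$, which here is $V$.

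\textbf{The interval $[t_2,t]$.} Both processes now follow $b_1$, starting from laws $\mu_{t_2}$ and $\nu_{t_2}$. We apply \Cref{thm: W under switching2}, time-shifted to start at $t_2$. For the argument we do not keep the pathwise coupling inherited from $[0,t_2]$. Instead we restart from an optimal coupling of $(\mu_{t_2},\nu_{t_2})$ and flow both marginals forward under $b_1$. By uniqueness of the ODE flow, the time-$t$ laws are still $\mu_t$ and $\nu_t$. The lemma then yields
\begin{equation*}
\rmW_2(\mu_t,\nu_t)\le e^{\int_{t_2}^t\mu^{\mathrm{osl}}_u du}\,\rmW_2(\mu_{t_2},\nu_{t_2}).
\end{equation*}
Inserting the bound from $[t_1,t_2]$ gives \eqref{eq:W_combined}.

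\textbf{Main obstacle.} The only delicate point is justifying the time-shifted versions of the two earlier results and the restart at $t_2$. This requires well-posedness of the flow of $b_1$ on $[t_2,t]$, so that the pushforward of a coupling is a coupling of $\mu_t$ and $\nu_t$; that is exactly the regularity implicitly assumed in the two earlier results. A second subtlety is dividing by $r_t$ when $r_t=0$ near $t_1$. We handle it as in \Cref{thm: W under switching}, applying Gr\"onwall to $\sqrt{R_t+\epsilon}$ and letting $\epsilon\downarrow0$. The rest is routine.
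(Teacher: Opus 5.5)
Your proposal is correct and follows the paper's proof essentially step for step. You use equality in law at $t_1$, then apply \Cref{thm: W under switching} on $[t_1,t_2]$ to bound $\rmW_2(\mu_{t_2},\nu_{t_2})$ with weight $\nu_s$, then apply \Cref{thm: W under switching2} on $[t_2,t]$ to propagate it with the factor $e^{\int_{t_2}^t\mu^{\mathrm{osl}}_u\,du}$. Your restart from an optimal coupling at $t_2$ and the $\sqrt{R_t+\epsilon}$ regularization are sensible refinements of details the paper leaves implicit.
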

\begin{proof}[Proof of \Cref{thm: W under switching main}]
For $t\le t_1$:
\begin{align*}
    \frac{dU_t}{dt} = b_2(t,U_t),\qquad U_0\sim p_0,
    \qquad\qquad
    \frac{dV_t}{dt} = b_2(t,V_t),\qquad V_0\sim p_0.
\end{align*}
Therefore $U_{t_1}\stackrel{d}{=}V_{t_1}$.  For $t_2\ge t>t_1$:
\begin{align*}
    \frac{dU_t}{dt} = b_1(t,U_t),\quad U_{t_1}\sim\mu_{t_1},
    \qquad\qquad
    \frac{dV_t}{dt} = b_2(t,V_t),\quad V_{t_1}\sim\mu_{t_1}.
\end{align*}
By \Cref{thm: W under switching} applied on $[t_1,t_2]$:
\begin{align}
    \rmW_2(\mu_{t_2},\nu_{t_2})
    &\le \sqrt{t_2-t_1}\left(\int_{t_1}^{t_2} e^{2\int_s^{t_2}\mu_u^{\mathrm{osl}}\,du}
    \int_{\bx}\|b_2(s,\bx)-b_1(s,\bx)\|_2^2\;\nu_s(\bx)\,d\bx\,ds\right)^{1/2}. \label{eq:phase2}
\end{align}
Now for $1>t>t_2$:
\begin{align*}
    \frac{dU_t}{dt} = b_1(t,U_t),\quad U_{t_2}\sim\mu_{t_2},
    \qquad\qquad
    \frac{dV_t}{dt} = b_1(t,V_t),\quad V_{t_2}\sim\nu_{t_2}.
\end{align*}
By \Cref{thm: W under switching2} applied on $[t_2,t]$:
\begin{align}
    \rmW_2(\mu_t,\nu_t)
    &\le e^{\int_{t_2}^t\mu_u^{\mathrm{osl}}\,du}\;\rmW_2(\mu_{t_2},\nu_{t_2}). \label{eq:phase3}
\end{align}
The result follows by substituting~\eqref{eq:phase2} into~\eqref{eq:phase3}.
\end{proof}
}

\
\\
\begin{lemma}[Early stopping]\label{lemma: Early stopping}
    For any $t\in [0,1]$, we have:
    $$
    \mathrm{W}_2(\bpi_1, \bpi_{1-t}) \lesssim t.
    $$
\end{lemma}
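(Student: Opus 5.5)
The plan is to bound $\mathrm{W}_2(\bpi_1,\bpi_{1-t})$ with an explicit coupling built from the linear interpolation \eqref{eq:linear_flow}. By construction, $\bpi_{1-t}$ is the law of $X_{1-t}=(1-t)X_1+tX_0$, where $X_1\sim\bpi_1$ and $X_0\sim\mathtt{N}(\bm{0},\bI_\sD)$ are independent. The pair $(X_1,X_{1-t})$ is then a coupling of $\bpi_1$ and $\bpi_{1-t}$, so by the definition of the Wasserstein distance,
\begin{equation*}
\mathrm{W}_2^2(\bpi_1,\bpi_{1-t})\le \bE\bigl[\|X_1-X_{1-t}\|_2^2\bigr]=t^2\,\bE\bigl[\|X_1-X_0\|_2^2\bigr].
\end{equation*}

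Next I expand the right-hand side using independence and $\bE[X_0]=\bm{0}$:
\begin{equation*}
\bE\|X_1-X_0\|_2^2=\bE\|X_1\|_2^2+\bE\|X_0\|_2^2=\bE\|X_1\|_2^2+\sD .
\end{equation*}
By Assumption~\ref{assume: distribution}, $\cM\subset[-\sC_\cM,\sC_\cM]^\sD$, so $\|X_1\|_2^2\le \sD\sC_\cM^2$ almost surely. This gives
\begin{equation*}
\mathrm{W}_2(\bpi_1,\bpi_{1-t})\le t\sqrt{\sD(1+\sC_\cM^2)}\lesssim t,
\end{equation*}
where the implied constant depends only on $\sD$ and $\sC_\cM$. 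This is consistent with how the constant is treated in \Cref{thm: main}.

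There is no real obstacle here; the argument is a one-line coupling. The only point needing care is that $\bpi_{1-t}$ means the time-$(1-t)$ marginal of the interpolation, which is the same as the marginal of the ODE \eqref{eq: FM}. This holds because $v^\star$ is the conditional-expectation velocity, so the continuity equation \eqref{eq:continuity} is satisfied by the interpolation marginals. I take this as given from the earlier discussion, via \citet[Theorem~6]{albergo2023stochastic}. The endpoint cases are consistent: at $t=0$ the bound is $0$, and at $t=1$ it compares $\bpi_1$ with $\bpi_0$.
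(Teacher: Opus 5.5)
Your proof is correct and follows the paper's argument: it uses the same coupling $(X_1, X_{1-t})$ from the linear interpolation, reduces to $t^2\,\bE\|X_1 - X_0\|_2^2$, and bounds this by the second moments of $X_1$ and $X_0$. Using independence and $\bE[X_0]=\bm{0}$ to get an equality, and stating the explicit constant $\sqrt{\sD(1+\sC_\cM^2)}$, is slightly tidier than the paper's write-up.
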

\begin{proof}
    \begin{align}\nonumber
        \mathrm{W}_2^2(\bpi_1, \bpi_{1-t}) &\le \bE \sbr[2]{\| X_{1-t} - X_1 \|_2^2} 
        \\\label{eq: p11}
        & = \bE \sbr[2]{\| (1-t) X_1 +  \tbar Z - X_1 \|_2^2} 
        \\\nonumber
        & = t^2 \bE \sbr[2]{\|  X_1 - Z \|_2^2}
        \\\nonumber
        & \le t^2 \del[2]{\bE \sbr[2]{\|  X_1\|_2^2} +  \bE \sbr[2]{\| Z \|_2^2}}
        \\\label{eq: p12}
        & \lesssim t^2,
    \end{align}
    where \eqref{eq: p11} follows from \eqref{eq:linear_flow} and \eqref{eq: p12} follows from finiteness of second moments of $X_1$ and $Z$.
\end{proof}

\
\\
{

\begin{lemma}[Error accumulation]\label{lemma: error accumulation}
Suppose $\{t_k\}$ is the time grid as in \eqref{eq: time seq}.  Let $\what{v}(\bx,t)$ be the estimated velocity field obtained with the empirical optimization as in \eqref{eq: opt emp}. Then
\begin{equation}\nonumber
    \bE_\cD\!\left[\rmW_2\bigl(\bpi_1,\,\what{\bpi}_{1-\tbar}\bigr)\right]
    \;\le\;
    \tbar \;+\;
    \left(e^{3\sC_{\mathrm{Lip}}/\xi}\,\sum_{k=0}^{\sK-1}t_k\;\bE_\cD\!\left[\int_{\bx}\!\int_{1-t_k}^{1-t_{k+1}}
    \!\!\bigl\|\what{v}(\bx,t)-v_*(\bx,t)\bigr\|_2^2\;\bpi_t(\bx)\,dt\,d\bx\right]\right)^{\!1/2}\!\!,
\end{equation}
where $\cD$ denotes the training data and $\tbar$ is the early stopping time from \eqref{eq: time seq}.
\end{lemma}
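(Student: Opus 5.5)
The plan is to split $\rmW_2(\bpi_1,\what{\bpi}_{1-\tbar})$ by the triangle inequality into an early-stopping error and an accumulated estimation error:
\[
\rmW_2(\bpi_1,\what{\bpi}_{1-\tbar})\le \rmW_2(\bpi_1,\bpi_{1-\tbar})+\rmW_2(\bpi_{1-\tbar},\what{\bpi}_{1-\tbar}).
\]
By \Cref{lemma: Early stopping} the first term is $\lesssim\tbar$, and the constant can be absorbed after rescaling. For the second term, both $\bpi_{1-\tbar}$ and $\what{\bpi}_{1-\tbar}$ are pushforwards of the same $\bpi_0$, the first under the flow of $v^\star$ (which has time marginals $\bpi_t$, by \eqref{eq:continuity}) and the second under the flow of $\what v$. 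So I would apply \Cref{thm: W under switching} on $[0,1-\tbar]$. I take $b_1=\what v$, which is one-sided Lipschitz with $\mu^{\mathrm{osl}}_u\le \sC_{\mathrm{Lip}}(1-u)^{-(1-\xi)}$ by the constraint on $\cU$ and \Cref{lem:osl_equiv}, and $b_2=v^\star$, so that the discrepancy is integrated against $\nu_s=\bpi_s$. This ordering matters: integrating against the true marginals $\bpi_s$ is what \Cref{thm: vel field estimation} controls, so the one-sided Lipschitz property must be placed on the estimated field, and the class constraint supplies exactly that.

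The key computation is the exponential weight. Since
\[
\int_s^{1-\tbar}\sC_{\mathrm{Lip}}(1-u)^{\xi-1}\,du\le \frac{\sC_{\mathrm{Lip}}}{\xi}\,(1-s)^{\xi}\le \frac{\sC_{\mathrm{Lip}}}{\xi},
\]
we get $e^{2\int_s^{1-\tbar}\mu^{\mathrm{osl}}_u du}\le e^{2\sC_{\mathrm{Lip}}/\xi}$. This alone gives a bound of the form $\sqrt{(1-\tbar)\,e^{2\sC_{\mathrm{Lip}}/\xi}\int_0^{1-\tbar}\bE\|\what v-v^\star\|^2\,ds}$. The factor $t_k$ in the target sum requires a finer argument, and there I would use \Cref{thm: W under switching main} slab by slab. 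I would define hybrid processes $Y^{(k)}$ that follow $v^\star$ up to time $1-t_k$ and $\what v$ afterwards. Then $Y^{(0)}$ follows $\what v$ throughout, giving $\what{\bpi}_{1-\tbar}$, while $Y^{(\sK)}$ follows $v^\star$ throughout, giving $\bpi_{1-\tbar}$. Telescoping yields
\[
\rmW_2(\bpi_{1-\tbar},\what{\bpi}_{1-\tbar})\le\sum_k \rmW_2\bigl(\mathrm{Law}(Y^{(k)}),\mathrm{Law}(Y^{(k+1)})\bigr).
\]
Each consecutive pair differs only on the slab $[1-t_k,1-t_{k+1}]$, and before that slab both follow $v^\star$ and so have marginals $\bpi_s$. \Cref{thm: W under switching main}, with $t_1=1-t_k$ and $t_2=1-t_{k+1}$, bounds each term by
\[
e^{\sC_{\mathrm{Lip}}/\xi}\sqrt{t_k-t_{k+1}}\;\bigl(e^{2\sC_{\mathrm{Lip}}/\xi}E_k\bigr)^{1/2},
\]
where $E_k$ is the slab error weighted by $\bpi_s$, and $t_k-t_{k+1}\le t_k$.

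Summing the square roots is the delicate step, because a naive Cauchy--Schwarz over $k$ costs a factor $\sqrt{\sK}\sim\sqrt{\log n}$. The exponent $3\sC_{\mathrm{Lip}}/\xi$ in the statement suggests the intended bookkeeping absorbs such a factor. With $\xi\ge \sC_{\mathrm{Lip}}/\log\log n$, one has $e^{\sC_{\mathrm{Lip}}/\xi}\ge\log n\gtrsim\sK$, so I would bound $(\sum_k a_k)^2\le \sK\sum_k a_k^2\le e^{\sC_{\mathrm{Lip}}/\xi}\sum_k a_k^2$. Combined with the $e^{2\sC_{\mathrm{Lip}}/\xi}$ from the slab bounds, this gives exactly the $e^{3\sC_{\mathrm{Lip}}/\xi}\sum_k t_k E_k$ form. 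Finally, I take $\bE_\cD$ and apply Jensen to move the expectation inside the square root. The main obstacle is keeping the exponent at $3\sC_{\mathrm{Lip}}/\xi$ while handling the $\sK$ factor, together with justifying that the intermediate marginals before each slab are exactly $\bpi_s$, which needs well-posedness of both flows. For the latter I would cite the regularity discussion (Picard--Lindel\"of under the $\cU$ constraints and Assumption~\ref{assume: Lipschitz}).
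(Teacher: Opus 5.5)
Your route is the paper's own: hybrid processes that follow $v^\star$ on $[0,1-t_k)$ and $\what v$ afterwards, a telescoping sum, \Cref{thm: W under switching main} on each slab, $t_k-t_{k+1}\le t_k$, then Cauchy--Schwarz over $k$ and Jensen. Your role assignment is the one actually consistent with how \Cref{thm: W under switching main} is set up, since there the process runs on $b_2$ before the switch. That is, $b_1=\what v$, which is one-sided Lipschitz by the constraint on $\cU$, and $b_2=v^\star$, so the weight is $\bpi_s$. Read literally, the paper's choice $b_1=v_*$, $b_2=\what v$ would instead weight the error by the marginals of the $\what v$-flow. One bookkeeping fix: your displayed slab bound squares to $e^{4\sC_{\mathrm{Lip}}/\xi}(t_k-t_{k+1})E_k$. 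To get $e^{2\sC_{\mathrm{Lip}}/\xi}$ you must merge the exponentials, $e^{2\int_{t_2}^{t}\mu^{\mathrm{osl}}_u du}\,e^{2\int_s^{t_2}\mu^{\mathrm{osl}}_u du}=e^{2\int_s^{t}\mu^{\mathrm{osl}}_u du}\le e^{2\sC_{\mathrm{Lip}}/\xi}$.

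The genuine gap is the absorption of the Cauchy--Schwarz factor $\sK$. The inequality you invoke is reversed. From $\xi\ge\sC_{\mathrm{Lip}}/\log\log n$ you get $\sC_{\mathrm{Lip}}/\xi\le\log\log n$, i.e.\ $e^{\sC_{\mathrm{Lip}}/\xi}\le\log n$. That is an upper bound, and it is exactly how the proof of \Cref{thm: main} uses it.

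Besides, that hypothesis is not part of the lemma, and \eqref{eq: time seq} only forces $\sK\ge\log_2(1/\tbar)$, not $\sK\lesssim\log n$. For fixed $\xi$ and $n\to\infty$, $e^{\sC_{\mathrm{Lip}}/\xi}$ stays bounded while $\sK\to\infty$.

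The loss is also not an artifact of a crude step. Take $t_k=2^{-k}$, $b_1=0$ and $b_2(s,\cdot)\equiv\sqrt{\varepsilon}\,(1-s)^{-1}\mathbf{e}$ for a unit vector $\mathbf{e}$. Then $E_k=\varepsilon/t_k$, so $(\sum_k t_kE_k)^{1/2}=\sqrt{\sK\varepsilon}$. But the two flows differ by a pure translation, and $\rmW_2=\sqrt{\varepsilon}\log(1/\tbar)\asymp\sK\sqrt{\varepsilon}$. So no argument using only the one-sided Lipschitz bound and the slab errors can drop $\sK$. The paper's proof has the same hole, hidden in ``applying Cauchy--Schwarz''.

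What the argument honestly gives, with $E_k:=\int_{\bx}\int_{1-t_k}^{1-t_{k+1}}\|\what v-v^\star\|_2^2\,\bpi_t\,dt\,d\bx$, is
\[
\bE_\cD\bigl[\rmW_2(\bpi_1,\what\bpi_{1-\tbar})\bigr]\le C\,\tbar+\Bigl(\sK\,e^{2\sC_{\mathrm{Lip}}/\xi}\sum_{k=0}^{\sK-1}t_k\,\bE_\cD[E_k]\Bigr)^{1/2}.
\]
Likewise, \Cref{lemma: Early stopping} only gives the constant $C=\sqrt{\sD(1+\sC_\cM^2)}$. This cannot be ``rescaled'' into a coefficient $1$, since $\tbar$ is pinned by \eqref{eq: time seq}.

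So state the lemma with $\sK e^{2\sC_{\mathrm{Lip}}/\xi}$ in place of $e^{3\sC_{\mathrm{Lip}}/\xi}$ rather than trying to absorb $\sK$. Under the hypotheses of \Cref{thm: main} and a grid with $\sK\lesssim\log n$, this product is $\lesssim\log^3 n$. That is the same factor the main proof uses, so nothing downstream changes.
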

\begin{proof}[Proof of \Cref{lemma: error accumulation}]
Denote $\what{\bpi}_t(\cdot)$ as the density corresponding to $\what{X}_t$. The accurate estimation of the target distribution $\bpi_1$ is facilitated by intermediate processes. Specifically, we define a sequence of intermediate stochastic processes via
\begin{equation}\nonumber
    \frac{d\what{X}_t^{(k)}}{dt}
    = v^\star(\what{X}_t^{(k)},t)\cdot\one_{\{0\le t<1-t_k\}}
    + \what{v}(\what{X}_t^{(k)},t)\cdot\one_{\{1-t_k\le t\le 1-\tbar\}},
    \quad \what{X}_0^{(k)}\sim\mathtt{N}(\bm{0},\bI_\sD),
\end{equation}
for $k=0,\dots,\sK$.  Observe that $\what{X}^{(0)}_{(\cdot)}=\what{X}_{(\cdot)}$ and $\what{X}^{(\sK)}_{(\cdot)}=X_{(\cdot)}$.  By the triangle inequality:
\begin{align}
    \rmW_2(\bpi_1,\what{\bpi}_{1-\tbar})
    &\le \underbrace{\rmW_2(\bpi_1,\bpi_{1-\tbar})}_{\le\;C\tbar\;\text{(Lemma~B.4)}}
    + \sum_{k=0}^{\sK-1}\rmW_2\!\left(\what{\bpi}_{1-\tbar}^{(k)},\what{\bpi}_{1-\tbar}^{(k+1)}\right). \label{eq:triangle}
\end{align}

We denote the density of $\widehat{X}_{t}^{k}$ as \( \widehat{\bpi}^{(k)}_t(\cdot) \). These intermediate processes $\widehat{X}_{t}^{(k)}$ bridge the estimated and the true velocity fields.

\
\\
To quantify this convergence, we employ the Wasserstein metric decomposition:
\begin{equation}\label{eq: Decomposition}
    \begin{split}
            \mathrm{W}_2(\bpi_1, \widehat{\bpi}_{1-\tbar}) &\le \underbrace{\mathrm{W}_2(\bpi_1, \bpi_{1-\tbar})}_{\text{Early stopping}} + \underbrace{\mathrm{W}_2(\bpi_{1-\tbar}, \widehat{\bpi}_{1-\tbar})}_{\text{Error control}} 
            \\
            &\le \mathrm{W}_2(\bpi_1, \bpi_{1-\tbar}) + \sum_{k=0}^{\sK-1} \mathrm{W}_2(\widehat{\bpi}^{(k)}_{1-\tbar}, \widehat{\bpi}^{(k+1)}_{1-\tbar}) .  
    \end{split}
\end{equation}
The \textit{early stopping} term captures the approximation error induced by terminating the flow process slightly earlier than at the target time. Using \Cref{lemma: Early stopping} we write $\mathrm{W}_2^2(\bpi_1, \bpi_{1-\tbar}) \lesssim \tbar^2$.

\
\\
The second term, \textit{error control}, quantifies the discrepancy arising from approximating the velocity field. To control this, we rely on a critical result relating the Wasserstein distance between two distributions to the differences between their corresponding vector fields outline.

We proceed with a detailed error analysis by leveraging the Wasserstein distance bound derived in \Cref{thm: W under switching main}. Specifically, applying \Cref{thm: W under switching main} to each telescoping term with $b_1=v_*$, $b_2=\what{v}$, $t_1=1-t_k$, $t_2=1-t_{k+1}$, $t=1-\tbar$, and noting the boundedness
\begin{align}
    \max\!\left\{e^{\int_{1-t}^{1-t_{k+1}}\mu_u^{\mathrm{osl}}\,du},\;
    e^{\int_{1-t_k}^{1-t_{k+1}}\mu_u^{\mathrm{osl}}\,du}\right\}
    &\;\le\;
    e^{\int_0^1\frac{\sC_{\mathrm{Lip}}}{(1-u)^{1-\xi}}\,du}
    \;=\;
    e^{\sC_{\mathrm{Lip}}/\xi},\nonumber 
\end{align}
which is finite since $\xi>0$, together with $t_k-t_{k+1}\le t_k$, we obtain
\begin{align}
    \rmW_2^2\!\left(\what{\bpi}_{1-\tbar}^{(k)},\what{\bpi}_{1-\tbar}^{(k+1)}\right)
    &\;\le\;
    e^{3\sC_{\mathrm{Lip}}/\xi}\;t_k
    \int_{\bx}\!\int_{1-t_k}^{1-t_{k+1}}\!\!\bigl\|\what{v}(\bx,t)-v^\star(\bx,t)\bigr\|_2^2\;\bpi_t(\bx)\,dt\,d\bx. \label{eq:each_k_bound}
\end{align}
Substituting~\eqref{eq:each_k_bound} into~\eqref{eq:triangle}, summing over $k$, applying Cauchy--Schwarz, taking expectations, and using Jensen's inequality $\bE[\sqrt{X}]\le\sqrt{\bE[X]}$ yields the required result.
\end{proof}
}

\section{Velocity field}
\subsection{Properties}\label{sec: vf expression}
When $X_t = t X_1 + (1-t) X_0$, with $X_1 = Y$ where $Y$ is supported in $\sd-$dimensional boundaryless manifold, we write
\begin{align}\nonumber
    v^\star(\x,t) = \bE \lrl{\dot{X_t} | X_t = \x } &= \bE \lrl{X_1 - X_0 | X_t = \x } = \frac{-1}{1-t}\bE \lrl{X_t - X_1 | X_t = \x } 
    \\\nonumber
    &= -\frac{\x}{1-t} + \frac{1}{1-t}\bE \lrl{X_1 | X_t = \x }
    \\\nonumber
    &= \frac{1}{1-t} \lrl{ \int_{\y \in \cM} \y \, p_{X_1 |X_t} (\y|\x) \,  d\y - \x}
    \\\nonumber    
    & = \frac{1}{1-t} \lrl{ \int_{\y \in \cM} \y \lrs{\frac{e^{-\frac{\lrnorm{\x - t\y}^2_2}{2(1-t)^2}}\bnu(\y)}{\int_{\y \in \cM} e^{-\frac{\lrnorm{\x - t\y}^2_2}{2(1-t)^2}}\bnu(\y)\, \d\y}} d\y - \x}
    \\\nonumber
    & = \frac{1}{1-t} \lrl{ \frac{\int_{\y \in \cM} \y \, e^{-\frac{\lrnorm{\x - t\y}^2_2}{2(1-t)^2}}\bnu(\y)\, d\y}{\int_{\y \in \cM} e^{-\frac{\lrnorm{\x - t\y}^2_2}{2(1-t)^2}}\bnu(\y)\, \d\y}  - \x}
\end{align}
where we used $X_t = t X_1 + (1-t) X_0 \implies -(1-t)^{-1}(X_t - X_1) = (X_1 - X_0)$ and
\begin{equation*}
    p_{X_1 |X_t} (\y|\x) = \frac{p_{X_t|X_1}(\x|\y) \, p_{X_1}(\y)}{\int p_{X_t|X_1}(\x|\y) \, p_{X_1}(\y) \, d\y} = \frac{e^{-\frac{\lrnorm{\x - t\y}^2_2}{2(1-t)^2}}\bnu(\y)}{\int_{\y \in \cM} e^{-\frac{\lrnorm{\x - t\y}^2_2}{2(1-t)^2}}\bnu(\y)\, \d\y}
\end{equation*}
since $X_t|X_1 \sim \mathtt{N}\lrs{t\,X_1,\, (1-t)^2 }$.
Therefore in the noiseless setting, the velocity field expression is
\begin{equation}\label{eq: vel field noiseless}
    v^\star(\x,t) = \frac{1}{1-t} \lrl{ \frac{\int_{\y \in \cM} \y \, e^{-\frac{\lrnorm{\x - t\y}^2_2}{2(1-t)^2}}\bnu(\y)\, d\y}{\int_{\y \in \cM} e^{-\frac{\lrnorm{\x - t\y}^2_2}{2(1-t)^2}}\bnu(\y)\, \d\y}  - \x}
\end{equation}

\paragraph{Optimizer}
The following result and the proof closely follows Theorem 7 of \cite{albergo2023stochastic}.
\begin{lemma}\label{lemma: Optimizer}
    Suppose
    $$
    \calL(u) = \int_0^1 {\bE_{\x\sim X_t}\lrl{\lrnorm{u(\x,t) - \dot{X}_t}_2^2}}\, dt.
    $$
    Then the minimizer of $\calL(u)$ is $v^\star(\x,t) = \bE\lrl{\dot{X}_t|X_t =\x}$. 
\end{lemma}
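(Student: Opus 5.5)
The plan is to prove \Cref{lemma: Optimizer} with the standard $L^2$ projection (bias--variance) argument, working one time slice at a time. Fix $t\in[0,1)$ and any measurable $u$ with $\bE\|u(X_t,t)\|_2^2<\infty$; otherwise $\calL(u)=\infty$ and there is nothing to prove. Inside the expectation we write
\[
u(X_t,t)-\dot X_t=\bigl(u(X_t,t)-v^\star(X_t,t)\bigr)+\bigl(v^\star(X_t,t)-\dot X_t\bigr),
\]
where $v^\star(\x,t)=\bE[\dot X_t\mid X_t=\x]$ as in \eqref{eq: VF}.

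Expanding the square produces three terms. The first is $\bE\|u(X_t,t)-v^\star(X_t,t)\|_2^2$. The second is $\bE\|v^\star(X_t,t)-\dot X_t\|_2^2$, which does not depend on $u$. The third is the cross term $2\,\bE\langle u(X_t,t)-v^\star(X_t,t),\,v^\star(X_t,t)-\dot X_t\rangle$. We handle the cross term with the tower property, conditioning on $X_t$. The first factor is $\sigma(X_t)$-measurable, so it can be pulled out, and what remains is $\bE[v^\star(X_t,t)-\dot X_t\mid X_t]=0$. Hence the cross term vanishes.

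This leaves the decomposition
\[
\calL(u)=\int_0^1\bE\|u(X_t,t)-v^\star(X_t,t)\|_2^2\,dt+\int_0^1\bE\|v^\star(X_t,t)-\dot X_t\|_2^2\,dt.
\]
The first integral is nonnegative and equals zero when $u=v^\star$, so $v^\star$ minimizes $\calL$. Any other minimizer must agree with $v^\star$ for $\bpi_t(\x)\,d\x\,dt$-almost every $(\x,t)$.

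The only step that needs care is integrability, since the decomposition requires the second integral to be finite. Here $\dot X_t=X_1-X_0$, and $X_1$ is bounded because $\cM\subset[-\sC_\cM,\sC_\cM]^\sD$, while $X_0$ is Gaussian. So $\bE\|\dot X_t\|_2^2\le 2(\sD\sC_\cM^2+\sD)$ uniformly in $t$. By conditional Jensen, $\bE\|v^\star(X_t,t)\|_2^2\le\bE\|\dot X_t\|_2^2$ as well. Therefore the second integral is bounded by a constant depending only on $\sD$ and $\sC_\cM$, and Fubini justifies exchanging the $dt$ integral with the expectations. Measurability of $v^\star$ in $(\x,t)$ follows from the explicit formula \eqref{eq: vel field noiseless}, which is continuous in $(\x,t)$ on $\R^\sD\times[0,1)$.
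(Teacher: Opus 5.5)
Your proposal is correct and follows essentially the same route as the paper: treat $\dot X_t - v^\star(X_t,t)$ as a residual with zero conditional mean given $X_t$, remove the cross term by the tower property, and conclude $\calL(u)=\calL(v^\star)+\int_0^1\bE\|u(X_t,t)-v^\star(X_t,t)\|_2^2\,dt\ge\calL(v^\star)$. Your integrability, measurability and a.e.-uniqueness remarks are refinements of that argument, and your remainder term is the correct one (the paper's last display writes it as $\int_0^1\bE\|\epsilon_t\|_2^2\,dt$, which is a typo).
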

\begin{proof}
 Define $\epsilon_t = \dot{X}_t - v^\star(\x,t) = \dot{X}_t - \bE\lrl{\dot{X}_t|X_t =\x}$. Note that $\bE\lrl{\epsilon_t|X_t} = 0$. Observe that, for any $u(\x,t)$
 $$
\lrnorm{u(X_t,t) - \dot{X}_t }_2^2 = \lrnorm{u(X_t,t) - v^\star(X_t,t) }_2^2 + \lrnorm{\epsilon_t}_2^2 - 2 \lrangle{u(X_t,t) - v^\star(X_t,t),\,\epsilon_t}.
 $$
 Since $\bE\lrl{\lrangle{u(X_t,t) - v^\star(X_t,t),\,\epsilon_t}} = \bE\lrl{\lrangle{u(X_t,t) - v^\star(X_t,t),\,\bE\lrl{\epsilon_t|X_t}}} = 0$. We write
 $$
 \calL(u) = \calL(\v^\star) + \int_0^1 \bE\lrl{\lrnorm{\epsilon_t}^2_2} \, dt \ge \calL(\v^\star). 
 $$
\end{proof}

\subsection{Estimation}\label{sec: vf estimation}

\begin{proof}[Proof of \Cref{thm: vel field estimation}]
    Recall the time grid as in \eqref{eq: time seq} and the design of the search class $\cU$ as in \eqref{eq: search class}. The optimizer (empirical risk minimizer) $\what{v}(\x, t)$ in \eqref{eq: opt emp} admits the representation 
    \begin{align}
        \what{v}(\cdot,t) = \sum_{k=0}^{\sK-1} \sN_\rho(\cdot,t|, \what{\btheta}_k) \cdot \one_{1-t_k \le t < 1 - t_{k+1}}.
    \end{align}
    where $\what{\btheta}_k \in \bTheta^k_{\sD+1,\sD}(\sL_k, \sW_k, \sS_k, \sB_k)$ is such that $\what{v}(\x, t)$ continuous in $t$. Hence, for $ t \in [1-t_k, 1- t_{k+1})$ we have $\what{v}(\cdot,t) = \sN_\rho(\cdot,t|, \what{\btheta}_k)$. 
    \begin{enumerate}[label=\roman*.]
        \item \textbf{Case \ref{item: vf estim t zero}} Let $k$ be such that $\tbar \le t_k < \st_\sb$. Following from \Cref{lemma: vel field approximation}\ref{item: vf approx t zero} and \Cref{lemma: loss cover}, we write that
        $$
        \log(\cN^{(\delta)}_{\cL_k}) \lesssim n^{\frac{\sd}{2\alpha + \sd}} \log^9(n) \log^{3\vee\sd}(n) \lrs{\log^5(n) + \log(\delta^{-1}) + \log(C^\prime\, \log(n))}.
        $$
        With the choice $\cA = \lrm{\|X_0\|_\infty \le \sqrt{9 \log(n)}}$, we obtain $B_\bG^\cA = C \log^2(n)$ from \Cref{lemma: loss cover}. Using \Cref{lemma: outlier loss}, \Cref{lemma: vel field approximation}\ref{item: vf approx t zero}, $\bP(\cA^c) \le 2\sD n^{-9/2}$, and $\delta = 1/n$ in \Cref{lemma: emp process bound2}, we write 
        \begin{align}\nonumber
            \bE_{\mathcal{D}}\left[ \int_{\x}\int_{t=1 - t_{k}}^{1 - t_{k+1}} \norm{\widehat{v}(\x,t) - v^\star(\x,t)}^2 \bpi_t(x) \, dt \, d\x \right] \lesssim & \underbrace{C(\sD, \sC_\cM, \beta) n^{-9/2} \log^2(n)}_{\Cref{lemma: outlier loss}}
            \\\nonumber
             & \, + \underbrace{\frac{n^{-\frac{2\beta}{2\alpha + \sd}}}{t_k} + n^{-\frac{2\alpha}{2\alpha + \sd}}\cdot \log^{\alpha+1}(n)}_{\Cref{lemma: vel field approximation}\ref{item: vf approx t zero}}
             \\\nonumber
             & \, + \underbrace{n^{-\frac{2\alpha}{2\alpha + \sd}} \log^{16+\sd}(n)}_{\textnormal{Entropy}} + \frac{\log^2(n)}{n} 
             \\\nonumber
             & \, + n \log^2(n) \sD n^{-9/2}.
        \end{align}
        This reduces to
        \begin{align}\nonumber
            &\bE_{\mathcal{D}}\left[ \int_{\x}\int_{t=1 - t_{k}}^{1 - t_{k+1}} \norm{\widehat{v}(\x,t) - v^\star(\x,t)}^2 \bpi_t(x) \, dt \, d\x \right]
            \\\nonumber
            \le &C(\sD, \sC_\cM, \beta) \lrs{ \frac{n^{-\frac{2\beta}{2\alpha + \sd}}}{t_k} + n^{-\frac{2\alpha}{2\alpha + \sd}}\cdot \log^{\alpha+1}(n) + \frac{\log^2(n)}{n}}.
        \end{align}
        \item \textbf{Case \ref{item: vf estim t away zero0}} Let $k$ be such that $\st_\sb \le t_k < n^{-\frac{1}{6(2\alpha+\sd)}}\log^{-3}(n)$. Following from \Cref{lemma: vel field approximation}\ref{item: vf approx t away zero0} and \Cref{lemma: loss cover}, we write that
        $$
        \log(\cN^{(\delta)}_{\cL_k}) \lesssim t_k^{-\sd/2} \log^9(n) \log^{\sd/2}(n) \lrs{\log^5(n) + \log(\delta^{-1}) + \log(C^\prime\, \log(n))}.
        $$
        Using the last display and similar to the last case we obtain
        \begin{align}\nonumber
            &\bE_{\mathcal{D}}\left[ \int_{\x}\int_{t=1 - t_{k}}^{1 - t_{k+1}} \norm{\widehat{v}(\x,t) - v^\star(\x,t)}^2 \bpi_t(x) \, dt \, d\x \right]
            \\\nonumber
            \le &C(\sD, \sC_\cM, \beta) \lrs{ \frac{\log^4(n)}{n} + \frac{t_k^{-\sd/2}}{n} \cdot \log^{14+\sd/2}(n)}.
        \end{align}

        \item \textbf{Case \ref{item: vf estim t away zero}} Let $k$ be such that $n^{-\frac{1}{6(2\alpha+\sd)}}\log^{-3}(n)  \le t_k < t_0$. Following from \Cref{lemma: vel field approximation}\ref{item: vf approx t away zero} and \Cref{lemma: loss cover}, we write that
        $$
        \log(\cN^{(\delta)}_{\cL_k}) \lesssim n^{\frac{\sd}{6(2\alpha + \sd)}} \log^{2\sd+6}(n) \lrs{\log^3(n) + \log(\delta^{-1}) + \log(C^\prime\, \log(n))}.
        $$
        Using the last display and similar to the last case we obtain
        \begin{align}\nonumber
            &\bE_{\mathcal{D}}\left[ \int_{\x}\int_{t=1 - t_{k}}^{1 - t_{k+1}} \norm{\widehat{v}(\x,t) - v^\star(\x,t)}^2 \bpi_t(x) \, dt \, d\x \right]
            \\\nonumber
            \le &C(\sD, \sC_\cM, \beta) \lrs{ \frac{\log^5(n)}{n} + n^{-\frac{(2\alpha + 5\sd/6)}{2\alpha+\sd}} \cdot \log^{2\sd + 9}(n)}
            \\\nonumber
            \le &C(\sD, \sC_\cM, \beta) \lrs{ \frac{\log^5(n)}{n} + n^{-\frac{(2\alpha + 2)}{2\alpha+\sd}} \cdot \log^{2\sd + 9}(n)},
        \end{align}
        where the last inequality follows since $\sd\ge 3$.
    \end{enumerate}
\end{proof}

\paragraph{Properties of the loss function} 
\begin{lemma}[Cover]\label{lemma: loss cover}
    Let $\cA = \lrm{ \|X_0\|_\infty \le \sqrt{9\, \log(n)}}$ and $n \ge 2$. Suppose that $X_1 \sim \bpi_1(\cdot)$ (with $\|X_1\|_\infty \le \sC_\cM $) and $X_0 \sim \mathtt{N}(\bm{0},\bI_\sd)$, and define the interpolation $X_t = tX_1 + (1-t) X_0$ for $ t\in [0,1]$. Denote
    $$
        \ell_{\btheta_k}\lrs{X_1, X_0} \cdot \one_{\lrm{\cA}} = \int_{1-t_k}^{1-t_{k+1}} \lrnorm{\sN_{\rho} \lrs{X_t, t \big| \btheta_k} - \lrs{X_1 - X_0}}_2^2 \, dt \, \cdot \one_{\lrm{\cA}},
    $$
    for $\btheta_k \in \bTheta_{\sD+1,\sD}(\sL_k, \sW_k, \sS_k, \sB_k)$ such that $\sN_{\rho} \lrs{\cdot \big| \btheta_k}$ is neural network satisfying the uniform bound $ \| {\sN_{\rho} \lrs{\cdot,t \big| \btheta_k}} \|_\infty \lesssim \sqrt{\tfrac{\log(n)}{1-t}}$, and $t_{k}, t_{k+1}$ as in \eqref{eq: time seq} for $k =0,\ldots, \sK -1$. 

    \
    \\
    Denote the appropriate function class as
    $$
    \cL_k = \lrm{\ell_{\btheta_k}\lrs{X_1, X_0} \cdot \one_{\lrm{\cA}} \, :\, \btheta_k \in \bTheta_{\sD+1,\sD}(\sL_k, \sW_k, \sS_k, \sB_k), \| {\sN_{\rho} \lrs{\cdot,t \big| \btheta_k}} \|_\infty \lesssim \sqrt{\frac{\log(n)}{1-t}}, t_{k}, t_{k+1} \textnormal{ as in } \eqref{eq: time seq}}.
    $$
    Then for any $\delta \le 1$
    $$
       \log \lrs{\cN^{(\delta)}_{\cL_k}} = \log\lrs{\cN^{(\delta/C^\prime)}_{\bTheta_{\sD+1,\sD}}} \lesssim \sS_k \sL_k\lrs{\log(\sL_k \sB_k \sW_k) + \log(\delta^{-1}) + \log(C^\prime\, \log(n))}  
    $$
    where $\cN^{(\delta)}_{\cL_k} = \cN\lrs{\delta, \cL_k, \|\cdot \|_\infty }$ denote the covering number of $\cL_k$ in the $\|\cdot \|_\infty$ norm, and $C^\prime = C^\prime(\sD, \sC_\cM, \beta) >0 $ is a universal constant depending only on $\beta$, $\sC_\cM$ and $\sD$. Moreover,
    $$
    0 \le \ell_{\btheta_k}\lrs{X_1, X_0} \le C(\sD, \sC_\cM, \beta) \log^2(n),
    $$
    where $C = C(\sD, \sC_\cM, \beta) >0 $ is a universal constant depending only on $\beta$, $\sC_\cM$ and $\sD$.
\end{lemma}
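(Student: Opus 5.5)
The statement has two parts: a uniform bound on the loss, and a covering-number bound for $\cL_k$. I will prove the uniform bound first, because the Lipschitz constant needed for the cover depends on it.

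\textbf{Uniform bound.} On $\cA$ we have $\|X_0\|_\infty\le 3\sqrt{\log n}$ and $\|X_1\|_\infty\le\sC_\cM$. Hence
\[
\|X_1-X_0\|_2^2\le \sD\lrs{\sC_\cM+3\sqrt{\log n}}^2\lesssim \sD\log(n).
\]
The network output is bounded in sup-norm by $\sqrt{\log(n)/(1-t)}$, so
\[
\|\sN_\rho(X_t,t|\btheta_k)\|_2^2\lesssim \sD\log(n)/(1-t).
\]
Using $\|a-b\|^2\le 2\|a\|^2+2\|b\|^2$ and integrating over $t\in[1-t_k,1-t_{k+1}]$ gives
\[
\ell_{\btheta_k}\one_{\cA}\lesssim \sD\log(n)\lrs{t_k+\log(t_k/t_{k+1})}.
\]
The grid satisfies $t_k/t_{k+1}\le 2$ and $t_k\le1$, so the bracket is $\cO(1)$. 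This yields $0\le\ell_{\btheta_k}\one_\cA\le C\log(n)\le C\log^2(n)$ for $n\ge2$. Nonnegativity is immediate.

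\textbf{Covering number.} The plan is to transfer a sup-norm cover of the network class to a cover of $\cL_k$. Take two parameters $\btheta,\btheta'$ in the class and fix $(X_1,X_0)\in\cA$. Using $|\,\|a-c\|^2-\|b-c\|^2|\le\|a-b\|\,(\|a-c\|+\|b-c\|)$, we get
\[
|\ell_{\btheta}-\ell_{\btheta'}|\le \int_{1-t_k}^{1-t_{k+1}}\|\sN(X_t,t|\btheta)-\sN(X_t,t|\btheta')\|_2\,\cdot\,2\lrs{\sqrt{\sD\log(n)/(1-t)}+\sqrt{\sD\log n}}\,dt .
\]
Let $\eta$ be the sup-norm distance between the two networks over the bounded input domain $\{(\x,t):\|\x\|_\infty\le \sC_\cM+3\sqrt{\log n},\,t\in[0,1]\}$. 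Then $|\ell_\btheta-\ell_{\btheta'}|\le C'\log(n)\,\eta$, since $\int (1-t)^{-1/2}dt\le 2$. Therefore a $\delta/(C'\log n)$-cover of the network class in sup-norm on this domain gives a $\delta$-cover of $\cL_k$; this is the sense of the identity $\cN^{(\delta)}_{\cL_k}=\cN^{(\delta/C')}_{\bTheta}$ in the statement, with the $\log n$ absorbed into $C'$. The truncation constraint defining $\cL_k$ only restricts the class, so it cannot increase covering numbers, up to the usual factor-of-2 centering argument.

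\textbf{Main step: parameter cover of the network class.} I would invoke the standard sparse ReLU covering bound (Schmidt-Hieber, Lemma 5; Suzuki, Lemma 3) on a domain of radius $R\asymp\sqrt{\log n}$:
\[
\log\cN(\epsilon)\le(\sS+1)\log\!\lrs{2\epsilon^{-1}(\sL+1)(R\vee1)(\sB\vee1)^{\sL+1}(\sW+1)^{2\sL}} .
\]
This is at most $\lesssim\sS\sL\lrs{\log(\sL\sB\sW)+\log\epsilon^{-1}+\log R}$. Substituting $\epsilon=\delta/(C'\log n)$ gives the claimed form, where $\log R$ and $\log\log n$ are absorbed into $\log(C'\log n)$.

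The delicate point I expect is that the input domain is unbounded in principle, which is why the indicator $\one_\cA$ is essential: it confines $X_t$ to a box of radius $\cO(\sqrt{\log n})$. A second issue is that $t$ enters the network input and the integrand weight $(1-t)^{-1/2}$ is integrable, so no blow-up arises even as $t\to1-\tbar$. Careful bookkeeping of the $(\sB\vee1)^{\sL+1}$ factor is what produces the $\sS\sL\log(\sL\sB\sW)$ product.
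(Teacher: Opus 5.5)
Your proposal is correct and follows essentially the same route as the paper: a uniform bound on $\ell_{\btheta_k}\one_{\cA}$, then a Lipschitz transfer showing that a $\delta/(C'\log n)$ sup-norm cover of the network class yields a $\delta$-cover of $\cL_k$, then the standard sparse ReLU covering bound (Lemma~3 of Suzuki, 2018). The differences are only refinements. You integrate the network term over the slab $[1-t_k,1-t_{k+1}]$, getting $\cO(\log n)$ instead of the paper's $\cO(\log^2 n)$ from integrating over $[0,1-\tbar]$; you use a pointwise bound rather than Cauchy--Schwarz against the loss bound; and you explicitly handle the $\cO(\sqrt{\log n})$ input box and the internal-cover issue, both of which the paper glosses over.
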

\begin{proof}
    Observe that
    \begin{align}\nonumber
        0 \le \ell_{\btheta_k}\lrs{X_1, X_0} \cdot \one_{\cA} & \le 3 \lrs{\int_{1-t_k}^{1-t_{k+1}} \lrnorm{{\sN_{\rho} \lrs{\cdot,t \big| \btheta}}}_2^2 \, dt + \int_{1-t_k}^{1-t_{k+1}} \lrnorm{X_1}_2^2 \, dt + \int_{1-t_k}^{1-t_{k+1}} \lrnorm{X_0}_2^2 \, dt}  \cdot \one_{\cA}
        \\ \nonumber
        & \lesssim 3 \lrs{ \sD \int_0^{1-\tbar} \frac{\log(n)}{1-t} \, dt  + \sD (t_k - t_{k+1}) \lrnorm{X_1}_\infty^2 + (t_k - t_{k+1}) \lrnorm{X_0}_2^2 }\cdot \one_{\cA}
        \\\nonumber
        & \le 3 \lrs{ \frac{\beta\, \sD}{2\alpha + \sd} \log^2(n)   + \sD\, \sC_\cM^2 + \lrnorm{X_0}_2^2 }\cdot \one_{\cA},
        \\\nonumber
        & \le 3 \lrs{ \frac{\beta\, \sD}{2\alpha + \sd} \log^2(n)   + \sD\, \sC_\cM^2 + 9 \sD \log(n) }
        \\\label{eq: 7aa}
        & \le 3 \lrs{ {\beta\, \sD} \log^2(n)   + \sD\, \sC_\cM^2 + 9 \log(n) } =  C(\sD, \sC_\cM, \beta) \log^2(n).
    \end{align}
    where the first inequality follows from the identity $(a+b+c)^2 \le 3(a^2 + b^2 + c^2)$, and in the second and third inequality we uses $\|\cdot\|_2^2 \le \sD \, \|\cdot\|_\infty^2$.
    
    \
    \\
    Let $\btheta_k, \btheta_k^\prime \in \bTheta^k_{\sD+1,\sD} = \bTheta_{\sD+1, \sD}\lrs{\sL_k, \sW_k, \sS_k, \sB_k}$ such that $\lrnorm{\sN_\rho(\cdot,\cdot |\btheta) - \sN_\rho(\cdot,\cdot |\btheta^\prime)}_\infty < \delta$. Then
    \begin{align}\nonumber
        &\lrs{\ell_{\btheta_k} - \ell_{\btheta_k^\prime}}
        \\\nonumber
        =&  \int_{1-t_k}^{1-t_{k+1}} \lrs{ \lrnorm{\sN_{\rho} \lrs{X_t,t \big| \btheta_k} - \lrs{X_1 - X_0}}_2^2 - \lrnorm{\sN_{\rho} \lrs{X_t,t \big| \btheta_k^\prime} - \lrs{X_1 - X_0}}_2^2} \, dt
        \\\nonumber
        =& \int_{1-t_k}^{1-t_{k+1}} \lrs{ \lrnorm{\sN_{\rho} \lrs{X_t,t \big| \btheta_k} - \sN_{\rho} \lrs{X_t ,t \big| \btheta_k^\prime}}_2^2 +  \lrangle{\sN_{\rho} \lrs{X_t,t \big| \btheta_k} - \sN_{\rho} \lrs{X_t,t \big| \btheta_k^\prime}, \, \sN_{\rho} \lrs{X_t,t \big| \btheta_k^\prime} - \lrs{X_1 - X_0}}} \, dt
    \end{align}
    where the last display follows from $(b-a)^2 - (c-a)^2 = (b-c)^2 + 2(b-c)(c-a)$. Below we bound the expressions in the last display. Observe that
    \begin{align}\nonumber
        \int_{1-t_k}^{1-t_{k+1}} \lrnorm{\sN_{\rho} \lrs{X_t,t \big| \btheta_k} - \sN_{\rho} \lrs{X_t,t \big| \btheta_k^\prime}}_2^2 \, dt \le \sD \lrnorm{\sN_\rho(\cdot,\cdot |\btheta_k) - \sN_\rho(\cdot,\cdot |\btheta_k^\prime)}_\infty^2 \int_{1-t_k}^{1-t_{k+1}} \, dt \le \sD\, \delta^2,
    \end{align}
    and 
    \begin{align*}
        &\int_{1-t_k}^{1-t_{k+1}} { \lrangle{\sN_{\rho} \lrs{X_t,t \big| \btheta_k} - \sN_{\rho} \lrs{X_t,t \big| \btheta_k^\prime}, \, \sN_{\rho} \lrs{X_t,t \big| \btheta_k^\prime} - \lrs{X_1 - X_0}}} \one_{\cA} \, dt
        \\
        & \le \sqrt{\int_{1-t_k}^{1-t_{k+1}} \lrnorm{\sN_{\rho} \lrs{X_t,t \big| \btheta_k} - \sN_{\rho} \lrs{X_t,t \big| \btheta_k^\prime}}_2^2 \, dt}\, \sqrt{\ell_{\btheta_k^\prime}\one_{\cA}} \le \delta \sqrt{\sD} \sqrt{C(\sD, \sC_\cM, \beta)} \log(n)
    \end{align*}
    where the last display follows from Cauchy-Schwarz inequality and \eqref{eq: 7aa}. This allows us to write
    $$
    \lrvert{\lrs{\ell_{\btheta_k} - \ell_{\btheta_k^\prime}}\cdot \one_{\cA}} \le C^\prime(\sD, \sC_\cM, \beta) \lrs{\delta + \delta ^2},
    $$
    also $\delta^2 \le \delta $ provided that $\delta \le 1$.

    \
    \\
    Therefore
    $$
        \cN\lrs{\delta, \cL_k, \|\cdot \|_\infty } \le \cN\lrs{\delta/(C^\prime \log(n)), \bTheta^k_{\sD+1,\sD}, \|\cdot \|_\infty }.
    $$
    The required result now follows from (see e.g., Lemma 3 in \cite{suzuki2018adaptivity})
    \begin{equation}\label{eq: cover NN}
            \log \cN^{(\delta)} = \log\cN(\delta, \Theta_{\sD+1,\sD}, |\cdot|_\infty ) \lesssim  \,\sS\,\sL \{ \log(\sL\sB\sW) + \log \delta^{-1} \}.
    \end{equation}
\end{proof}

\
\\

\begin{lemma}\label{lemma: outlier loss}
    Let $\cA = \lrm{ \|X_0\|_\infty \le \sqrt{9\, \log(n)}}$ and $n \ge 2$. Suppose that $X_1 \sim \bpi_1(\cdot)$ (with $\|X_1\|_\infty \le \sC_\cM $) and $X_0 \sim \mathtt{N}(\bm{0},\bI_\sd)$, and define the interpolation $X_t = tX_1 + (1-t) X_0$ for $ t\in [0,1]$. Consider the loss function 
    $$
    \ell_{\btheta_k}\lrs{X_1, X_0} = \int_{1 - t_k}^{1-t_{k+1}} \lrnorm{\sN_{\rho} \lrs{X_t,t \big| \btheta} - \lrs{X_1 - X_0}}_2^2 \, dt,
    $$
    where $\sN_{\rho} \lrs{\cdot, t \big| \btheta_k}$ is neural network satisfying the uniform bound $ \| {\sN_{\rho} \lrs{\cdot, t \big| \btheta_k}} \|_\infty \lesssim \sqrt{\tfrac{\log(n)}{1-t}}$ for $\btheta_k \in \bTheta_{\sD+1,\sD}(\sL_k, \sW_k, \sS_k, \sB_k)$, and $t_k$ and $t_{k+1}$ as in \eqref{eq: time seq}. Then, the expected loss satisfies
    $$
        \bE\lrl{\ell_{\btheta_k}\lrs{X_1, X_0} \cdot \one_{\cA^c}} \le C n^{-9/2} \log^2(n), 
    $$
    where $C > 0$ is a universal constant depending only on $\beta$, $\sC_\cM$ and $\sD$.
\end{lemma}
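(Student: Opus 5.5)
The plan is to bound the integrand pointwise by a deterministic multiple of $\log^2(n)$ plus a term in $\|X_0\|_2^2$, and then use Gaussian tail estimates for $\bE[(1+\|X_0\|_2^2)\one_{\cA^c}]$. First, as in the proof of \Cref{lemma: loss cover}, apply $\|a+b+c\|_2^2\le 3(\|a\|_2^2+\|b\|_2^2+\|c\|_2^2)$ to get
\begin{equation*}
\ell_{\btheta_k}(X_1,X_0)\le 3\Bigl(\sD\int_{0}^{1-\tbar}\frac{\log(n)}{1-t}\,dt+\sD\,\sC_\cM^2+\|X_0\|_2^2\Bigr)\le C_1\log^2(n)+3\|X_0\|_2^2 ,
\end{equation*}
using $t_k-t_{k+1}\le 1$, $\|X_1\|_\infty\le\sC_\cM$, and $\int_0^{1-\tbar}(1-t)^{-1}dt=\log(1/\tbar)\lesssim \log(n)$ by \eqref{eq: time seq}. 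Here $C_1$ depends only on $\sD,\sC_\cM,\beta$. This bound does not involve $X_1$ except through $\sC_\cM$, so it holds on $\cA^c$ without any control of $X_0$.

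Taking expectations then gives
\begin{equation*}
\bE\bigl[\ell_{\btheta_k}\one_{\cA^c}\bigr]\le C_1\log^2(n)\,\bP(\cA^c)+3\,\bE\bigl[\|X_0\|_2^2\one_{\cA^c}\bigr].
\end{equation*}
For the first term, apply a union bound over the $\sD$ coordinates together with the Gaussian tail $\bP(|Z|>s)\le 2e^{-s^2/2}$ at $s=\sqrt{9\log n}$, which yields $\bP(\cA^c)\le 2\sD n^{-9/2}$.

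For the second term, apply Cauchy--Schwarz:
\begin{equation*}
\bE\bigl[\|X_0\|_2^2\one_{\cA^c}\bigr]\le \sqrt{\bE\|X_0\|_2^4}\,\sqrt{\bP(\cA^c)}\lesssim \sD\, n^{-9/4}.
\end{equation*}
This is where the main difficulty lies. Cauchy--Schwarz loses half of the exponent, whereas the claim needs $n^{-9/2}$. To recover the full exponent, compute the tail moment directly instead. Write $\|X_0\|_2^2\le \sD\|X_0\|_\infty^2$ and use
\begin{equation*}
\bE\bigl[Z_j^2\one_{\{|Z_j|>s\}}\bigr]=2\bigl(s\varphi(s)+\bar\Phi(s)\bigr)\lesssim s\,e^{-s^2/2}.
\end{equation*}
Summing over the coordinates $j$, and noting that on $\{|Z_j|\le s\}$ the contribution $\bE[Z_i^2\one_{\{|Z_j|>s\}}]$ for $i\neq j$ factorizes as $1\cdot\bP(|Z_j|>s)$, we obtain
\begin{equation*}
\bE\bigl[\|X_0\|_2^2\one_{\cA^c}\bigr]\lesssim \sD^2\sqrt{\log n}\;n^{-9/2}.
\end{equation*}
Combining the two terms gives $C n^{-9/2}\log^2(n)$, since $\sqrt{\log n}\le\log^2 n$ for $n\ge 3$. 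For $n=2$ the constant absorbs everything.
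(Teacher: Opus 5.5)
Your proposal is correct and follows essentially the same route as the paper. Both use the same three-term pointwise bound with $\int_0^{1-\tbar}(1-t)^{-1}\,dt\lesssim\log n$, the union bound $\bP(\cA^c)\le 2\sD n^{-9/2}$, and the coordinatewise tail-moment computation (split $\one_{\cA^c}\le\sum_j\one_{\{|Z_j|>s\}}$, take the diagonal term $2(s\varphi(s)+\bar\Phi(s))$, and factor the off-diagonal terms by independence), which is exactly the paper's \Cref{lemma: tail G}; you were right to drop the Cauchy--Schwarz detour, and the intermediate step $\|X_0\|_2^2\le\sD\|X_0\|_\infty^2$ is unnecessary but harmless.
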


\begin{proof}
    Observe that
    \begin{align}\nonumber
        \ell_{\btheta_k}\lrs{X_1, X_0} & \le 3 \lrs{\int_0^{1-\tbar} \lrnorm{{\sN_{\rho} \lrs{X_t,t \big| \btheta_k}}}_2^2 \, dt + \int_0^{1-\tbar} \lrnorm{X_1}_2^2 \, dt + \int_0^{1-\tbar} \lrnorm{X_0}_2^2 \, dt}  
        \\ \nonumber
        & \lesssim 3 \lrs{ \sD \int_0^{1-\tbar} \frac{\log(n)}{1-t} \, dt  + \sD \lrnorm{X_1}_\infty^2 + \lrnorm{X_0}_2^2 }
        \\\nonumber
        & \le 3 \lrs{ \frac{\beta\, \sD}{2\alpha + \sd} \log^2(n)   + \sD\, \sC_\cM^2 + \lrnorm{X_0}_2^2 },
    \end{align}
    where the first inequality follows from the identity $(a+b+c)^2 \le 3(a^2 + b^2 + c^2)$, and in the second and third inequality we uses $\|\cdot\|_2^2 \le \sD \, \|\cdot\|_\infty^2$.
    
    \
    \\
    Since $X_0 \sim \mathtt{N}(\bm{0},\bI_\sd)$, we have 
    $$
    \bP\lrs{\cA^c} = \bP\lrs{\|X_0\|_\infty \ge \sqrt{9\, \log(n)}} \le 2\,\sD\, n^{-4.5}.
    $$
    Therefore, the expectation of the loss under event $\cA$ satisfies
    \begin{align}\nonumber
        \bE\lrl{\ell_{\btheta_k}\lrs{X_1, X_0} \cdot \one_{\cA^c}} &\lesssim 3 \lrs{\beta\sD \log^2(n) + \sD\, \sC_\cM^2} \bP\lrs{\cA^c} + \bE\lrl{ \lrnorm{X_0}_2^2 \one_{\lrm{\lrnorm{X_0}_\infty \ge \sqrt{9 \log(n)}}} } 
        \\\nonumber
        & \lesssim 6 \sD^2 \lrs{\beta \log^2(n) + \sC^2_\cM} n^{-9/2} + n^{-9/2} \lrs{4 \sD^2 \log(n)},
    \end{align}
    where the last inequality uses the tail bound
    $$
        \bE\lrl{ \lrnorm{X_0}_2^2 \one_{\lrm{\lrnorm{X_0}_\infty \ge \sqrt{9 \log(n)}}} } \le \frac{2}{\sqrt{2\pi}} n^{-9/2} \lrs{\sD \sqrt{9\log(n)} + \frac{\sD^2}{\sqrt{9\log(n)}}} \le n^{-9/2} \lrs{4 \sD^2 \log(n)}
    $$
    which follows from \Cref{lemma: tail G}. This completes the proof.
\end{proof}

\subsection{Approximation}


\begin{lemma}[Velocity field approximation]\label{lemma: vel field approximation}
    Suppose $t \in [1 - \st_\sA, 1 - \st_\sZ]$ with $1 <  \frac{\st_\sA}{\st_\sZ} \le 2$ as in \eqref{eq: time seq}. Then
    \begin{enumerate}[label=\Alph*.]
        \item \label{item: vf approx t zero} For $ n^{-\frac{\beta}{2\alpha + \sd}}\log^{\beta}(n) \le \st_\sA \le n^{-\frac{2}{2\alpha + \sd}}$, there exists a network $\btheta_{\mathrm{vel}} \in \bTheta_{\sd+1, \sd}(\sL, \sW, \sS, \sB)$ satisfying
        $$
            \int_{1  - \st_\sA}^{1 - \st_\sZ} \int_{\bR^\sD} \lrnorm{ \sN_{\rho}(\x, t| \btheta_{\mathrm{vel}}) - v^\star(\x,t) }_2^2 \, \bpi_{t}(\x) \, d\x \, dt \,\lesssim\,  \frac{n^{-\frac{2\beta}{2\alpha + \sd}}}{\st_\sA} + n^{-\frac{2\alpha}{2\alpha + \sd}}\cdot \log^{\alpha+1}(n),
        $$
        with
        $$
        \lrvert{\sN_{\rho}\lrs{\x, t| \btheta_{\mathrm{vel}}}}_\infty \lesssim \frac{\sqrt{\log(n)}}{\st_\sA}
        $$
        where
        $$
        \sL = \cO\lrs{\log^4(n)},\; \sW = \cO\lrs{n^{\frac{\sd}{2\alpha + \sd}} \log^{{\lrs{\max\{6,\, 3+\sd\}}}}(n)},\; \sS = \cO\lrs{n^{\frac{\sd}{2\alpha + \sd}} \log^{\lrs{\max\{8,\, 5+\sd\}}}(n)},\, \sB = e^{\cO\lrs{\log^4(n)}}. 
        $$
        

        \item\label{item: vf approx t away zero0} For $ n^{-\frac{2}{2\alpha + \sd}} \le \st_\sA \le n^{-\frac{1}{6(2\alpha + \sd)}} \log^{-3}(n)$, there exists a network $\btheta_{\mathrm{vel}} \in \bTheta_{\sd+1, \sd}(\sL, \sW, \sS, \sB)$ satisfying
        $$
             \int_{1 - \st_\sA}^{1 - \st_\sZ} \int_{\bR^\sD} \lrnorm{ \sN_{\rho}(\x, t| \btheta_{\mathrm{vel}}) - v^\star(\x,t)}_2^2 \, \bpi_{t}(\x) \, d\x \, dt \,\lesssim\,  \frac{\log^4(n)}{n},
        $$
        with
        $$
        \lrvert{\sN_{\rho}\lrs{\x, t| \btheta_{\mathrm{vel}}}}_\infty \lesssim \frac{\sqrt{\log(n)}}{\st_\sA}
        $$
        where
        \begin{align*}
            &\sL = \cO\lrs{\log^4(n)}, \quad \sW = \cO\lrs{ \lrs{\st_\sA \log(n)}^{-\sd/2}\,\lrl{\log^6(n) + \log^{\sd+3}(n)\,\fL\, \binom{\fL + D}{D}} },
            \\
            &\sS = \cO\lrs{ \lrs{\st_\sA \log(n)}^{-\sd/2}\,\lrl{\log^8(n) + \log^{5+\sd}(n)\,\fL\, \binom{\fL + D}{D} }}, \quad \sB = e^{\cO\lrs{\log^4(n)}}, 
            \\
            &\textnormal{and}\quad \fL = \frac{-\log(\sqrt{n})}{\log\lrs{{\st_\sA}\sqrt{\log^3(n)}}}.
        \end{align*}


        \item \label{item: vf approx t away zero} For $ n^{-\frac{1}{6(2\alpha + \sd)}} \log^{-3}(n) \le \st_\sA < 1$, there exists a network $\btheta_{\mathrm{vel}} \in \bTheta_{\sd+1, \sd}(\sL, \sW, \sS, \sB)$ satisfying
        $$
            \int_{1 - \st_\sA}^{1 - \st_\sZ} \int_{\bR^\sD} \lrnorm{ \sN_{\rho}(\x, t| \btheta_{\mathrm{vel}}) - v^\star(\x,t) }_2^2 \, \bpi_{t}(\x) \, d\x  \, dt\,\lesssim\,  \frac{\log^4(n)}{n},
        $$
        with
        $$
        \lrvert{\sN_{\rho}\lrs{\x, t| \btheta_{\mathrm{vel}}}}_\infty \lesssim \frac{\sqrt{\log(n)}}{\st_\sA}
        $$
        where
        \begin{align*}
            &\sL = \cO\lrs{{\log^2(n)}},\qquad \sW = \cO\lrs{ {n^{\frac{\sd}{6(2\alpha+\sd)}} \log^{2\sd}(n)} \cdot \max\lrm{\log^3(n),\, \binom{\sD + 6(2\alpha + \sd)}{\sD}}},
            \\
            &\sS = \cO\lrs{ {n^{\frac{\sd}{6(2\alpha+\sd)}} \log^{2\sd+1}(n)}\cdot \max\lrm{\log^3(n),\, \binom{\sD + 6(2\alpha + \sd)}{\sD}}}, \qquad \sB = e^{\cO\lrs{{\log^2(n)}}}.
        \end{align*}

    \end{enumerate}
\end{lemma}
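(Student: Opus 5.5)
We approximate $v^\star(\x,t)=\frac{1}{1-t}\bigl(g(\x,t)-\x\bigr)$, where $g(\x,t)=\bE[X_1\mid X_t=\x]$. The approach is to approximate the posterior mean $g$ by a ReLU network and then assemble the velocity by exact affine operations. Write $\sigma=1-t\in[\st_\sZ,\st_\sA]$ and $\sigma\asymp\st_\sA$. Since $\bpi_t$ is the law of $tX_1+\sigma X_0$, we first truncate to a tube. With probability at least $1-n^{-c}$, the point $\x$ lies within distance $\sigma\sqrt{C\log n}$ of $t\cM$. Outside this region we clip the network to $\|\cdot\|_\infty\lesssim\sqrt{\log n}/\st_\sA$, which costs only $\cO(n^{-c}\log n/\st_\sA^2)$ after integrating against $\bpi_t$. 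Because $\|g-\x\|$ inside the tube is $\cO(\sigma\sqrt{\log n})$, the clipping bound is consistent with the target. The $L^2$ error of the velocity is then $\sigma^{-2}$ times the $L^2$ error of $g$. Integrating over the slab $[1-\st_\sA,1-\st_\sZ]$ contributes a factor $\st_\sA$, so the requirement is that $g$ be approximated in $L^2(\bpi_t)$ to accuracy $\epsilon_g^2$ with $\epsilon_g^2/\st_\sA$ of the claimed order.

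\textbf{Case A (small $\st_\sA$).} We localize along $\cM$. Cover $\cM$ by $N\asymp(\st_\sA\sqrt{\log n})^{-\sd}$ balls at scale $h\asymp n^{-1/(2\alpha+\sd)}$, or at the tube scale, whichever is coarser. Build a partition of unity in ambient coordinates using ReLU bump functions of $\proj_{\sT_{\y_i}}(\x-t\y_i)$. Locally, write the numerator and denominator of $g$ in the chart $\Psi_{\y_i}$, so that they become integrals over $\bu\in\bR^\sd$ of $\Psi(\bu)\,\bpi_1(\Psi(\bu))\,J(\bu)\,e^{-\|\x-t\Psi(\bu)\|^2/2\sigma^2}$. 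Next, Taylor-expand the chart to order $\lfloor\beta\rfloor$ and the density to order $\lfloor\alpha\rfloor$. The chart error contributes $h^{\beta}$, i.e. $n^{-\beta/(2\alpha+\sd)}$ in position. After dividing by $\sigma^2$ and integrating over the slab, this gives the term $n^{-2\beta/(2\alpha+\sd)}/\st_\sA$. The density error contributes $h^{\alpha}\cdot\sigma$ in $g$, which yields $n^{-2\alpha/(2\alpha+\sd)}\log^{\alpha+1}n$. The remaining polynomial-times-Gaussian integrals are approximated by ReLU networks (exponentials, products, reciprocal of the lower-bounded denominator) with $\mathrm{polylog}(n)$ depth, as in the diffusion manifold approximation results of \citet{tang2024adaptivity,judith2024convergence}. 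The lower bound on $\bpi_1$ keeps the denominator $\gtrsim\sigma^\sd$ in the tube, which makes the division stable. Summing width over the $N$ patches gives $\sW\asymp n^{\sd/(2\alpha+\sd)}\mathrm{polylog}$.

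\textbf{Cases B and C.} When $\st_\sA\ge n^{-2/(2\alpha+\sd)}$, the Gaussian kernel width exceeds the target resolution. Here $g(\cdot,t)$ is analytic with derivatives controlled by powers of $\sigma^{-1}$, so we replace local Hölder Taylor expansion by a Taylor expansion of the Gaussian kernel to order $\fL$. This truncation makes the remainder $(\st_\sA\sqrt{\log^3 n})^{\fL}\le n^{-1/2}$, which explains the binomial factor $\binom{\fL+D}{D}$. The number of patches is then $(\st_\sA\log n)^{-\sd/2}$, giving error $\log^4 n/n$. In Case C, $\sigma$ is of constant order up to $n^{-1/(6(2\alpha+\sd))}$, so a global polynomial approximation of degree $\cO(2\alpha+\sd)$ on $[-C\sqrt{\log n},C\sqrt{\log n}]^\sD$ suffices, with polylog depth.

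In every case, the time dependence is handled by also approximating $t\mapsto$ coefficients. This is smooth on the slab because $\st_\sA/\st_\sZ\le2$. Finally, we compose with the affine map $(g-\x)/\sigma$, using a network approximation of $1/\sigma$ on $[\st_\sZ,\st_\sA]$.

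The main obstacle is Case A. It requires uniform control of the ratio of local chart integrals under the partition of unity, and in particular showing that the density Taylor error contributes only $h^{\alpha}$ rather than $h^{\alpha}/\sigma$ after cancellation between numerator and denominator. I would try to handle this by centering $\y$ at the local projection and exploiting the symmetry of the Gaussian.
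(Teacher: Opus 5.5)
\paragraph{There is a genuine gap.} Your reduction $v^\star=(g-\mathbf{x})/\sigma$ with $\sigma=1-t$ is fine, as is the slab bookkeeping ($\sigma\asymp\st_\sA$, a factor $\st_\sA$ from the time integral, clipping outside the tube). What the proposal does not do is prove the part that actually constitutes the lemma: the existence of networks with the stated $\sL,\sW,\sS,\sB$ that achieve the stated errors. The justifications you give in its place do not hold when the noise standard deviation is $1-t$.

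\emph{Case A.} You announce $N\asymp(\st_\sA\sqrt{\log n})^{-\sd}$ patches but then work at scale $h=n^{-1/(2\alpha+\sd)}$, which is the coarser scale throughout Case A. More importantly, with $\sigma\ll h$ and a nonlinear Taylor chart $\tilde\Psi$, the integral $\int_{B_h}P(\mathbf{u})e^{-\|\mathbf{x}-t\tilde\Psi(\mathbf{u})\|^2/(2\sigma^2)}d\mathbf{u}$ is not a polynomial-times-Gaussian integral. ``Exponentials, products, reciprocals'' do not evaluate it, and this is the real difficulty, which is left open. The obstacle you flag is in fact a one-line centering identity: replacing $\bpi_1$ by $\bpi_1+r$ changes $g$ by $\int(\mathbf{y}-g)\,w\,r\,/\int w\,(\bpi_1+r)$ with $w=e^{-\|\mathbf{x}-t\mathbf{y}\|^2/(2\sigma^2)}$, which is $O(\|r\|_\infty\,\sigma\sqrt{\log n})$. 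Also, the denominator in the tube is only $\gtrsim n^{-C/2}\sigma^{\sd}$, not $\gtrsim\sigma^{\sd}$.

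\emph{Case B.} The claim that the kernel width exceeds the resolution once $\st_\sA\ge n^{-2/(2\alpha+\sd)}$ is false for $\st_\sA<n^{-1/(2\alpha+\sd)}$ when $\sigma\asymp\st_\sA$; that threshold belongs to a variance-$\st_\sA$ parametrization. The patch count $(\st_\sA\log n)^{-\sd/2}$ corresponds to patch radius $\sqrt{\st_\sA\log n}\gg\sigma\sqrt{\log n}$, over which a Taylor expansion of the Gaussian kernel has expansion parameter $\gg 1$. Finally, $(\st_\sA\sqrt{\log^3 n})^{\fL}=n^{-1/2}$ is simply the definition of $\fL$, not a remainder you have derived.

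\emph{Case C.} A \emph{global} polynomial of fixed degree cannot work. For fixed $t$, $g(\cdot,t)$ is bounded and non-constant, so its best $L^2(\bpi_t)$ approximation by polynomials of bounded degree has error equal to a positive constant independent of $n$, not $\log^4(n)/n$. The width $n^{\sd/(6(2\alpha+\sd))}$ in the statement is the patch count of a local construction.

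\paragraph{How the paper closes it.} The paper never builds this approximation; it imports it. Tweedie's formula (\Cref{lemma: tweedie} with $U=tX_1$) gives \eqref{eq: score vel}, $v^\star=\mathbf{x}/t+\frac{1-t}{t}\nabla_{\mathbf{x}}\log\bpi_t$. Since $\bpi_t=p_\tau$ with $\tau=1-t$, \Cref{cor: score approximation no delta} (Tang--Yang's Lemma B.3, restated) supplies a score network with exactly the stated sizes. Its error is $\tau^{-4}n^{-2\beta/(2\alpha+\sd)}\log^{\beta+1}(n)+\tau^{-2}n^{-2\alpha/(2\alpha+\sd)}\log^{\alpha+1}(n)$ in Case A, and $\log^4(n)/n$ in Cases B and C. The paper multiplies by $((1-t)/t)^2$, uses $t>1/2$ in Case A and $\st_\sA/\st_\sZ\le 2$, and integrates over the slab. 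The velocity network is then the score network followed by an $\cO(\log n)$-depth product network for $(1-t)\cdot\hat s$, an exact $+\mathbf{x}$, and the reciprocal network of \Cref{lemma: NN 1/x} for $1/t$; none of these changes the orders.

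Your route closes the same way. Tweedie gives $t\,g=\mathbf{x}+(1-t)^2\nabla_{\mathbf{x}}\log\bpi_t$, so setting $\hat g=(\mathbf{x}+(1-t)^2\hat s)/t$ turns your assembly $(\hat g-\mathbf{x})/(1-t)$ into the paper's $\mathbf{x}/t+\frac{1-t}{t}\hat s$. This multiplies by the small quantity $1-t$ rather than dividing by it, so your reciprocal network on $[\st_\sZ,\st_\sA]$ is no longer needed.
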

\begin{proof}
    Recall \eqref{eq: score vel} 
    $$
    v^\star(\x,t) = \frac{\x}{t} + \lrs{\frac{1-t}{t}} \nabla_{\x} \log \bpi_t(\x)
    $$
    where,
    $$
        \nabla_\x \log \bpi_t(\x) = \frac{-1}{1 - t} \left[ \frac{\int_{\y \in \cM} \,\lrs{\frac{\x - t \, \y}{1 - t}}\,\bnu(\y)\, e^{-\frac{|\x - t\, \y|_2^2}{2(1-t)^2}} \, d\y}{\int_{\y \in \cM} \, \bnu(\y) e^{-\frac{|\x - t\, \y|_2^2}{2(1-t)^2}} \, d\y}\right].
    $$
    \textbf{Case \ref{item: vf approx t zero}}
    Following \Cref{cor: score approximation no delta}\ref{item: score approx t zero} with $\tau = 1- t$, we may find network $\btheta_{\mathrm{score}}$ such that
    \begin{align*}
        & \int_{\bR^\sD} \lrnorm{ \sN_{\rho}(\x, t| \btheta_{\mathrm{score}}) - \nabla_\x \log \bpi_t(\x)}_2^2 \, \bpi_{t}(\x) \, d\x  \,\lesssim\,  \frac{n^{-\frac{2\beta}{2\alpha + \sd}} \cdot \log^{\beta+1}(n) }{(1-t)^4} + \frac{n^{-\frac{2\alpha}{2\alpha + \sd}}\cdot \log^{\alpha+1}(n)}{(1-t)^2},
    \end{align*}
    which led us to
        \begin{align*}
        & \int_{\bR^\sD} \lrnorm{ \frac{\x}{t} +  \frac{1-t}{t}\sN_{\rho}(\x, t| \btheta_{\mathrm{score}}) - v^\star(\x,t)  }_2^2 \, \bpi_{t}(\x) \, d\x  
        \\
        \lesssim &\,  \lrs{\frac{1-t}{t}}^2\lrs{\frac{n^{-\frac{2\beta}{2\alpha + \sd}} \cdot \log^{\beta+1}(n) }{(1-t)^4} + \frac{n^{-\frac{2\alpha}{2\alpha + \sd}}\cdot \log^{\alpha+1}(n)}{(1-t)^2}}
        \\
        = & \, {\frac{n^{-\frac{2\beta}{2\alpha + \sd}} \cdot \log^{\beta+1}(n) }{t^2 (1-t)^2} + \frac{n^{-\frac{2\alpha}{2\alpha + \sd}}\cdot \log^{\alpha+1}(n)}{t^2}}
        \\
        \\
        \le & \, 4 \frac{n^{-\frac{2\beta}{2\alpha + \sd}} \cdot \log^{\beta+1}(n)}{(1-t)^2} + \frac{n^{-\frac{2\alpha}{2\alpha + \sd}}\cdot \log^{\alpha+1}(n)}{t^2} 
    \end{align*}
    where the first line follows from \eqref{eq: score vel} and the last line follows from $1/2 < t$. Integrating both side we obtain
    \begin{align*}
        \int_{1-\st_\sA}^{1-\st_\sZ}  \int_{\bR^\sD} \lrnorm{ \frac{\x}{t} +  \frac{1-t}{t}\sN_{\rho}(\x, t| \btheta_{\mathrm{score}}) - v^\star(\x,t)  }_2^2 \, \bpi_{t}(\x) \, d\x \, dt &\lesssim\, \frac{n^{-\frac{2\beta}{2\alpha + \sd}} \cdot \log^{\beta+1}(n)}{\st_\sZ} + \frac{n^{-\frac{2\alpha}{2\alpha + \sd}}\cdot \log^{\alpha+1}(n)}{1 - \st_\sA}
        \\
        & \lesssim \, \frac{n^{-\frac{2\beta}{2\alpha + \sd}} \cdot \log^{\beta+1}(n)}{\st_\sA} + n^{-\frac{2\alpha}{2\alpha + \sd}}\cdot \log^{\alpha+1}(n),
    \end{align*}
    which follows from $\st_\sA/\st_\sZ \le 2$ and $\st_\sA < 1/2$. The remaining task is to construct a network by adding extra component to the network $\btheta_\mathrm{score}$ to efficiently estimate $\frac{\x}{t} +  \frac{1-t}{t}\sN_{\rho}(\x, t| \btheta_{\mathrm{score}})$ such that
    $$
    \lrnorm{ \sN_\rho\lrs{\x,t|\btheta_{\mathrm{vel}}} - \lrs{\frac{\x}{t} + \frac{1-t}{t} \sN_\rho\lrs{\x,t|\btheta_{\mathrm{score}}}}}_\infty \le \sqrt{\frac{\log(n)}{n}}.
    $$
    To achieve that:
    \begin{itemize}
        \item We approximate $(1-t)\sN_\rho\lrs{\x,t|\btheta_{\mathrm{score}}}$ using \Cref{lem:basic ReLU A}\ref{lem: ReLU prod A} by adding a network (in series/padding) with parameters $\sL = \cO\lrs{\log(n)}$, $\sW = \cO(1)$ with a error rate of $1/\sqrt{n}$.
        \item To approximate $\x + (1-t)\sN_\rho\lrs{\x,t|\btheta_{\mathrm{score}}} $ requires just adding $\x$ to the network obtained in the previous step, therefore the construction is exact. Overall error remains $\cO\lrs{1/\sqrt{n}}$ with net parameters $\sL = \cO\lrs{\log(n)}$ and $\sW = \cO(1)$ for the added network.
        \item We first approximate $1/t$ using \Cref{lemma: NN 1/x} (recall $t \ge 1/2$) with network with parameters $\sL = \cO\lrs{\log^2(n)}, \sW = \cO\lrs{\log^3(n)}, \sS = \cO\lrs{\log^4(n)}, \sB = \cO(1/n^2)$, up to an error rate of $\st_\sA/\sqrt{n}$. Then we use a product network to approximate the $t^{-1}(\x + (1-t)\sN_\rho\lrs{\x,t|\btheta_{\mathrm{score}}})$ by approximating the product $t^{-1}$ network and the network obtained in the last step; which required a network with parameters $\sL = \log(n)$ and $\sW = \cO(1)$ and $B = e^{\cO(\log(n))}$. The obtained error rate is $\sqrt{\log(n)/n}$. This completes the construction of $\btheta_{\mathrm{vel}}$
    \end{itemize}
    Overall we do no required network parameters in larger order than that of $\btheta_{\mathrm{score}}$.

    \
    \\
    \textbf{Case \ref{item: vf approx t away zero0}:}
    Following from \Cref{cor: score approximation no delta}\ref{item: score approx t away zero0} with $\tau = 1- t$, this case follows very similar to derivations of Case \ref{item: vf approx t zero}, and is therefore omitted.

    \
    \\
    \textbf{Case \ref{item: vf approx t away zero}:}
    Following \Cref{cor: score approximation no delta}\ref{item: score approx t away zero} with $\tau = 1- t$, this case follows very similar to derivations of Case \ref{item: vf approx t zero}, and is therefore omitted.

\end{proof}

\paragraph{Relating velocity and score}
\begin{lemma}[Tweedie's Formula]\label{lemma: tweedie}
Suppose \( U \sim \mu \) and \( \epsilon \sim \mathsf{N}(0, \sigma^2 \bI_d) \). Let \( V = U + \epsilon \). Then, the marginal density of \( V \), denoted as \( p(\v) \), satisfies the following equation:
\[
\bE\left[ U \mid V = \v \right] = \v + \sigma^2 \nabla_\v \log p(\v).
\]
\end{lemma}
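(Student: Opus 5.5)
The plan is to compute $\nabla_{\v}\log p(\v)$ directly from the convolution representation of $p$ and recognize the result as a posterior expectation. Since $V=U+\epsilon$ with $\epsilon\sim\mathsf{N}(0,\sigma^2\bI_d)$ independent of $U$, the marginal density of $V$ is
\[
p(\v)=\int \varphi_\sigma(\v-\bu)\,\mu(d\bu),\qquad \varphi_\sigma(\bz)=(2\pi\sigma^2)^{-d/2}e^{-\|\bz\|_2^2/(2\sigma^2)}.
\]
This holds for an arbitrary probability measure $\mu$, so it covers the singular manifold-supported case relevant here. It is strictly positive and smooth in $\v$.

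The first step is to justify differentiating under the integral sign. The key fact is that $\nabla_{\v}\varphi_\sigma(\v-\bu)=-\sigma^{-2}(\v-\bu)\varphi_\sigma(\v-\bu)$. On any compact neighborhood of a fixed $\v$, this quantity is bounded uniformly in $\bu$, since $\|\bz\|e^{-\|\bz\|^2/(2\sigma^2)}$ is bounded. As $\mu$ is a probability measure, dominated convergence applies. This step is the only real obstacle, and it is mild: no moment assumption on $\mu$ is needed because of the boundedness just noted. In the paper's setting $\mu$ even has compact support in $[-\sC_\cM,\sC_\cM]^{\sD}$, so the justification is immediate.

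The second step is the computation itself:
\[
\nabla_{\v}p(\v)=\frac{1}{\sigma^2}\int(\bu-\v)\,\varphi_\sigma(\v-\bu)\,\mu(d\bu).
\]
By Bayes' rule, the conditional law of $U$ given $V=\v$ is $\varphi_\sigma(\v-\bu)\mu(d\bu)/p(\v)$, exactly as in the derivation in \Cref{sec: vf expression}. Dividing by $p(\v)$ therefore gives
\[
\nabla_{\v}\log p(\v)=\sigma^{-2}\bigl(\bE[U\mid V=\v]-\v\bigr).
\]
Rearranging yields the claim.

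Finally, I would note how this is used later to obtain the relation invoked in the proof of \Cref{lemma: vel field approximation}. Write $X_t=tX_1+(1-t)X_0$, take $U=tX_1$ and $\sigma=1-t$. Then $\bE[X_1\mid X_t=\x]=t^{-1}\bigl(\x+(1-t)^2\nabla\log\bpi_t(\x)\bigr)$. Substituting this into \eqref{eq: VF} gives $v^\star=\x/t+\frac{1-t}{t}\nabla\log\bpi_t$.
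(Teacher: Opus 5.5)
Your proposal is correct and follows essentially the same route as the paper: write $p$ as a Gaussian convolution of $\mu$, differentiate under the integral, and identify the resulting ratio with the Bayes posterior mean, giving $\nabla_{\mathbf{v}}\log p(\mathbf{v})=\sigma^{-2}\bigl(\bE[U\mid V=\mathbf{v}]-\mathbf{v}\bigr)$. Your dominated-convergence justification and your use of a general measure $\mu(d\mathbf{u})$, which covers the manifold-supported case, are small refinements that the paper leaves implicit.
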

\begin{proof}
    Observe that
    $$
        p_{U |V} (\u|\v) = \frac{p_{V|U}(\v|\u) \, p_{U}(\u)}{\int p_{V|U}(\v|\u) \, p_{U}(\u) \, d\u} = \frac{e^{-\frac{\lrnorm{\v - \u}^2_2}{2\sigma^2}}\mu(\u)}{\int e^{-\frac{\lrnorm{\v - \u}^2_2}{2\sigma^2}}\mu(\u)\, \d\u} \implies \bE\left[ U \mid V = \v \right] = \frac{ \int \u\, e^{-\frac{\lrnorm{\v - \u}^2_2}{2\sigma^2}}\mu(\u)\, \d\u}{\int e^{-\frac{\lrnorm{\v - \u}^2_2}{2\sigma^2}}\mu(\u)\, \d\u}  ,
    $$
    which follows from $V|U \sim \mathtt{N}\lrs{U, \sigma^2 \bI_d}$. And with use of $p(\v) = \int p_{V|U}(\v|\u) \, p_{U}(\u) \, d\u$, we write
    \begin{align}
        \nabla_\v  \log p(\v) = \frac{p^\prime(\v)}{p(\v)} = \frac{\frac{d}{d\v} \lrs{\int e^{-\frac{\lrnorm{\v - \u}^2_2}{2\sigma^2}}\mu(\u)\, \d\u}}{\int e^{-\frac{\lrnorm{\v - \u}^2_2}{2\sigma^2}}\mu(\u)\, \d\u} = &\frac{ {\int \lrs{-\frac{\v - \u}{\sigma^2}} e^{-\frac{\lrnorm{\v - \u}^2_2}{2\sigma^2}}\mu(\u)\, \d\u}}{\int e^{-\frac{\lrnorm{\v - \u}^2_2}{2\sigma^2}}\mu(\u)\, \d\u},
        \\
        =& -\frac{\v}{\sigma^2} + \frac{1}{\sigma^2} \frac{ {\int \, \u \, e^{-\frac{\lrnorm{\v - \u}^2_2}{2\sigma^2}}\mu(\u)\, \d\u}}{\int e^{-\frac{\lrnorm{\v - \u}^2_2}{2\sigma^2}}\mu(\u)\, \d\u}
        \\
        =& \frac{1}{\sigma^2} \lrs{-\v + \bE\lrs{U|V=\v}}.
    \end{align}
    Rearranging provides us with the needed result.
\end{proof}

\
\\
Recall
$$
v^\star(\x,t) = \frac{1}{1-t} \lrs{-\x + \bE\lrl{X_1 | X_t = \x}}.
$$
We have $X_t = t\,X_1 + (1-t)X_0$ with $X_0 \sim \mathtt{N\lrs{\mathbf{0}_\sD, \bI_\sD}}$. With the use of Tweedie's formula (\Cref{lemma: tweedie}), we write
$$
\bE\lrl{t X_1 | X_t = \x} = \x + (1-t)^2 \nabla_{\x} \log \bpi_t(\x),
$$
where $\bpi_t$  is density of $X_t$. Therefore
\begin{equation}\label{eq: score vel}
    v^\star(\x,t) = \frac{\x}{t} + \lrs{\frac{1-t}{t}} \nabla_{\x} \log \bpi_t(\x)
\end{equation}

\paragraph{Score approximation}
Define
    $$
        p_\tau(\x) = \frac{1}{\lrs{\sqrt{2\pi\,\tau^2}}^\sd} \int_{\y \in \cM} \, \bnu(\y) e^{-\frac{|\x -(1-\tau)\y|_2^2}{2\tau^2}} \, d\y, \qquad\qquad \tau \in (0,1]
    $$
and
    $$
        \nabla_\x \log p_\tau(\x) = \frac{-1}{\tau} \left[ \frac{\int_{\y \in \cM} \,\lrs{\frac{\x - (1-\tau)\y}{\tau}}\,\bnu(\y)\, e^{-\frac{|\x - (1-\tau)\y|_2^2}{2\tau^2}} \, d\y}{\int_{\y \in \cM} \, \bnu(\y) e^{-\frac{|\x - (1-\tau)\y|_2^2}{2\tau^2}} \, d\y}\right]
    $$
\begin{lemma}[Lemma B.3. of \cite{tang2024adaptivity}]\label{lemma: score approximation}
    Suppose $\tau \in [\st_\sA, \st_\sZ]$ with $1 < \frac{\st_\sA}{\st_\sZ} \le 2$. Then
    \begin{enumerate}[label=\Alph*.]
        \item   For $ n^{-\frac{\beta}{2\alpha + \sd}}\log^{\beta}(n) \le \st_\sA \le n^{-\frac{2}{2\alpha + \sd}}$, there exists a network $\btheta_{\mathrm{score}} \in \bTheta_{\sd, \sd}(\sL, \sW, \sS, \sB)$ satisfying
        $$
             \int_{\bR^\sD} \lrnorm{ \sN_{\rho}(\x, \tau| \btheta_{\mathrm{score}}) - \nabla_\x \log p_\tau(\x)}_2^2 \, p_{\tau}(\x) \, d\x \,\lesssim\,  \frac{n^{-\frac{2\beta}{2\alpha + \sd}} \cdot \log^{\beta+1}(n) }{\tau^4} + \frac{n^{-\frac{2\alpha}{2\alpha + \sd}}\cdot \log^{\alpha+1}(n)}{\tau^2},
        $$
        with
        $$
        \lrvert{\sN_{\rho}\lrs{\x, t| \btheta_{\mathrm{score}}}}_\infty \lesssim \frac{\sqrt{\log(n)}}{\st_\sA}
        $$
        where
        $$
        \sL = \cO\lrs{\log^4(n)},\; \sW = \cO\lrs{n^{\frac{\sd}{2\alpha + \sd}} \log^{{\lrs{\max\{6,\, 3+\sd\}}}}(n)},\; \sS = \cO\lrs{n^{\frac{\sd}{2\alpha + \sd}} \log^{\lrs{\max\{8,\, 5+\sd\}}}(n)},\, \sB = e^{\cO\lrs{\log^4(n)}}. 
        $$
        \item Let $\delta \in \lrl{\frac{3 \log\log(n)}{\log(n)},\, \frac{2}{2\alpha + \sd} - \frac{\log\log(n)}{\log(n)} }$.
        \begin{enumerate}[label=(\roman*)]
        \item For $n^{-\frac{2}{2\alpha + \sd}} \le \st_\sA \le n^{-2\delta} \log^{-3}(n)$, there exists a network $\btheta_{\mathrm{score}} \in \bTheta_{\sd, \sd}(\sL, \sW, \sS, \sB)$ satisfying
        $$
             \int_{\bR^\sD} \lrnorm{ \sN_{\rho}(\x, \tau| \btheta_{\mathrm{score}}) - \nabla_\x \log p_\tau(\x)}_2^2 \, p_{\tau}(\x) \, d\x  \,\lesssim\,  \frac{\log^4(n)}{n},
        $$
        with
        $$
        \lrvert{\sN_{\rho}\lrs{\x, t| \btheta_{\mathrm{score}}}}_\infty \lesssim \frac{\sqrt{\log(n)}}{\st_\sA}
        $$
        where
        \begin{align*}
            &\sL = \cO\lrs{\log^4(n)}, \quad \sW = \cO\lrs{ \lrs{\st_\sA \log(n)}^{-\sd/2}\,\lrl{\log^6(n) + \log^{\sd+3}(n)\,\fL\, \binom{\fL + D}{D}} },
            \\
            &\sS = \cO\lrs{ \lrs{\st_\sA \log(n)}^{-\sd/2}\,\lrl{\log^8(n) + \log^{5+\sd}(n)\,\fL\, \binom{\fL + D}{D} }}, \quad \sB = e^{\cO\lrs{\log^4(n)}}, 
            \\
            &\textnormal{and}\quad \fL = \frac{-\log(\sqrt{n})}{\log\lrs{{\st_\sA}\sqrt{\log^3(n)}}}.
        \end{align*}
        
        \item For $ n^{-2\delta} \log^{-3}(n) \le \st_\sA \le 1$, there exists a network $\btheta_{\mathrm{score}} \in \bTheta_{\sd, \sd}(\sL, \sW, \sS, \sB)$ satisfying
        $$
             \int_{\bR^\sD} \lrnorm{ \sN_{\rho}(\x, \tau| \btheta_{\mathrm{score}}) - \nabla_\x \log p_\tau(\x)}_2^2 \, p_{\tau}(\x) \, d\x \,\lesssim\,  \frac{\log^4(n)}{n},
        $$
        with
        $$
        \lrvert{\sN_{\rho}\lrs{\x, t| \btheta_{\mathrm{score}}}}_\infty \lesssim \frac{\sqrt{\log(n)}}{\st_\sA}
        $$
        where
        \begin{align*}
            &\sL = \cO\lrs{\frac{\log^2(n)}{\delta^2}},\qquad \sW = \cO\lrs{ \frac{n^{2\delta\sd} \log^{2\sd}(n)}{\delta^3}\cdot \max\lrm{\log^3(n),\, \binom{D + (1/2\delta)}{D}}},
            \\
            &\sS = \cO\lrs{ \frac{n^{2\delta\sd} \log^{2\sd+1}(n)}{\delta^4}\cdot \max\lrm{\log^3(n),\, \binom{D + (1/2\delta)}{D}}}, \qquad \sB = e^{\cO\lrs{\frac{\log^2(n)}{\delta^2}}}.
        \end{align*}
        \end{enumerate}
    \end{enumerate}
\end{lemma}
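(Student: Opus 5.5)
The plan is to reduce the statement to the score-approximation construction of \citet{tang2024adaptivity}. Their forward process is $X_\tau=\alpha_\tau Y+\sigma_\tau Z$ with $Y\sim\bnu$ on $\cM$ and $Z\sim\mathtt{N}(\bm 0,\bI_\sD)$. Our $p_\tau$ is exactly the marginal of this process with $\alpha_\tau=1-\tau$ and $\sigma_\tau=\tau$. Moreover, $\nabla_\x\log p_\tau(\x)=-\sigma_\tau^{-1}\bE[Z\mid X_\tau=\x]=\sigma_\tau^{-2}\bigl(\alpha_\tau\bE[Y\mid X_\tau=\x]-\x\bigr)$, by \Cref{lemma: tweedie} applied with $U=(1-\tau)Y$. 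I would therefore approximate the posterior mean $\x\mapsto \bE[Y\mid X_\tau=\x]$, which takes values in the convex hull of $\cM\subset[-\sC_\cM,\sC_\cM]^\sD$, and then assemble the score using ReLU product and reciprocal networks (\Cref{lem:basic ReLU A}, \Cref{lemma: NN 1/x}). The prefactor $\sigma_\tau^{-2}=\tau^{-2}$ is what turns an $L^2(p_\tau)$ error $\varepsilon$ for the posterior mean into the $\varepsilon/\tau^4$ and $\varepsilon/\tau^2$ terms of the statement. Since $\tau\in[\st_\sA,\st_\sZ]$ and $\st_\sA/\st_\sZ\le 2$, all powers of $\tau$ can be replaced by powers of $\st_\sA$ up to constants.

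The first step is truncation. Under $p_\tau$, $\|X_\tau-(1-\tau)Y\|_\infty\lesssim\tau\sqrt{\log n}$ holds with probability $1-n^{-c}$. On this event I clip the network output at level $\sqrt{\log n}/\st_\sA$, which gives the stated sup-norm bound. The complementary event then costs only $\log^2(n)/(n\tau^2)$, by a Gaussian tail bound in the spirit of \Cref{lemma: outlier loss}.

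On the good event, the posterior of $Y$ concentrates in a geodesic ball of radius $r\asymp\tau\sqrt{\log n}$ around the projection of $\x/(1-\tau)$ onto $\cM$; positive reach makes this projection well defined. I would cover $\cM$ by $\asymp r^{-\sd}$ charts $\Psi_{\y_j}$ and write each local integral $\int y^{m}\bnu(\y)e^{-\|\x-(1-\tau)\y\|^2/2\tau^2}d\y$ in chart coordinates $\bu\in\R^\sd$. Then I would Taylor-expand $\Psi_{\y_j}$ to order $\beta$ and $\bnu\circ\Psi_{\y_j}$ (including the volume factor) to order $\alpha$. The remainders contribute a bias of order $r^{\beta}/\tau$ in the Gaussian exponent (the manifold error) and $r^\alpha$ in the density. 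Squaring and dividing by $\tau^2$ yields $n^{-2\beta/(2\alpha+\sd)}\log^{\beta+1}(n)/\tau^4$ and $n^{-2\alpha/(2\alpha+\sd)}\log^{\alpha+1}(n)/\tau^2$ once the local Taylor scale is tied to $n^{-1/(2\alpha+\sd)}$.

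The remaining polynomial-times-Gaussian integrals are approximated by ReLU networks: the exponential via Taylor series, then products, a partition of unity over the charts, and the final ratio via \Cref{lemma: NN 1/x}. The lower bound on $\bnu$ keeps the denominator away from zero after normalising by the peak value, and this is what produces depth $\log^4 n$ and weights $e^{\cO(\log^4 n)}$.

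The three regimes then differ only in the resolution used. In case A ($\tau$ small) I use $\asymp n^{\sd/(2\alpha+\sd)}$ local pieces, which fixes $\sW$ and $\sS$. In case B(i) $\tau$ is moderate: the Gaussian window itself is wider than $n^{-1/(2\alpha+\sd)}$, so the smoothing kills the bias. Here I would use only $(\st_\sA\log n)^{-\sd/2}$ pieces and a Taylor expansion of the Gaussian in $\x$ of order $\fL$ chosen so that $(\st_\sA\sqrt{\log^3 n})^{\fL}\le n^{-1/2}$. This produces the factor $\binom{\fL+\sD}{\sD}$ and the error $\log^4(n)/n$. In case B(ii) $\tau$ is large, so $p_\tau$ is a very smooth function on a bounded box. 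A global polynomial approximation of degree $1/(2\delta)$ in $\sD$ variables, on a grid of $n^{2\delta\sd}$ cells, achieves error $n^{-1/2}$.

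I expect the main obstacle to be the case A bias bookkeeping. The point is to show that the manifold-curvature remainder enters only through the exponent, scaled by $(1-\tau)/\tau^2$ times the normal distance. It should not enter through the ambient dimension, so that all exponents depend on $\sd$ alone. My first attempt would follow Tang–Yang's argument: control the normal component via the reach, and bound the ratio perturbation by the relative error of numerator and denominator, which is uniformly $\lesssim r^\beta\sqrt{\log n}/\tau$ on the good event.
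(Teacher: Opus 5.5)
Your reduction is the same as the paper's: its proof of this lemma is only an appeal to Lemma~B.3 of \citet{tang2024adaptivity} with $m_\tau=1-\tau$ and $\sigma_\tau=\tau$. Your Tweedie identity, the truncation at level $\sqrt{\log n}/\st_\sA$, and your case~A scalings are consistent with $\sigma_\tau=\tau$, provided the Taylor scale is $h=n^{-1/(2\alpha+\sd)}\ge\tau$. In that case the exponent perturbation $h^\beta\sqrt{\log n}/\tau$ is kept small by the lower endpoint $\tau\ge h^\beta\log^\beta n$, and the density error enters the posterior mean as $\tau\sqrt{\log n}\,h^\alpha$. The argument breaks in case~B(i), and the paper's one-line citation passes over the same point.

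Your own computation gives a posterior radius $r\asymp\tau\sqrt{\log n}$. Localizing therefore needs $\asymp(\tau\sqrt{\log n})^{-\sd}=(\tau^2\log n)^{-\sd/2}$ pieces, not the $(\st_\sA\log n)^{-\sd/2}$ you adopt. The stated count is the right one only in a clock where $\sigma^2\asymp\st_\sA$, and the paper itself notes that Tang--Yang have $\sigma_\tau\asymp\sqrt{\tau}$. Rescaling $\x\mapsto\x/(1-\tau)$ leaves noise $\tau/(1-\tau)$, so their time corresponds to $\tau^2$ here, not $\tau$; the same applies to $\fL$ and to the breakpoint $n^{-2/(2\alpha+\sd)}$. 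Concretely, pieces of radius $\sqrt{\st_\sA\log n}\gg\tau$ break your expansion. The cross term $\langle \x-(1-\tau)\y_j,(1-\tau)(\y-\y_j)\rangle/\tau^2$ in the Gaussian exponent can reach size $\asymp\log(n)/\tau$ within a single piece. Its Taylor series then needs degree far beyond $\fL$, which is $O(1)$ near $\tau=n^{-2/(2\alpha+\sd)}$. Your justification that in B(i) ``the Gaussian window is wider than $n^{-1/(2\alpha+\sd)}$'' is also false on $[n^{-2/(2\alpha+\sd)},n^{-1/(2\alpha+\sd)})$, where the window is $\tau$.

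This is not mere slack in a sketch; B(i) as written cannot hold uniformly over the H\"older class. At $\tau=n^{-2/(2\alpha+\sd)}$, switch bumps of height $\tau^\alpha$ and width $\tau$ on or off over $N\asymp\tau^{-\sd}$ disjoint patches. The resulting scores differ by $\asymp\tau^{\alpha-1}$ on $p_\tau$-mass $\asymp\tau^{\sd}$ per differing patch. A Varshamov--Gilbert subfamily then gives $2^{cN}$ scores pairwise $\gtrsim\tau^{\alpha-1}\gg\sqrt{\log^4(n)/n}$ apart in $L^2(p_\tau)$ whenever $\alpha<(\sd+4)/2$. The stated class has log-covering number only $\lesssim\sS\sL\log(\sL\sB\sW/\epsilon)=n^{\sd/(2\alpha+\sd)}\,\mathrm{polylog}(n)$, which is $\ll N=n^{2\sd/(2\alpha+\sd)}$. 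As a consistency check, at the B(i)/B(ii) boundary the B(ii) width $n^{2\delta\sd}$ is the $\tau^{-\sd}$ count, while B(i) gives only about $n^{\delta\sd}$. What your construction can actually deliver is B(i) with $\st_\sA$ replaced by $\st_\sA^2$ in $\sW$, $\sS$ and $\fL$. That enlarges the entropy term in case~B of \Cref{thm: vel field estimation}, so the correction matters downstream.
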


\begin{proof}
    \Cref{lemma: score approximation} is a restatement of Lemma B.3 of \cite{tang2024adaptivity} with the following modest simplifications in their statement and proof:
    \begin{itemize}
        \item \textbf{Score not integrated in time:} We rewrite the score approximation error bound statement for fixed $\tau \in [\st_\sA, \st_\sB]$. This is the original direct consequence of their proof.  
        \item \textbf{Change in mean:} We choose $m_\tau = 1 - \tau$. This a simpler choice of mean, the result follows with appropriate and simplified modifications in their proof.
        \item \textbf{Change in variance:} They have $\sigma_\tau = \cO\lrs{\sqrt{\tau} \vee 1}$. We choose $\sigma_\tau = \tau$. Again, this a simpler choice of variance, the result follows with appropriate and simplified modifications in their proof.
    \end{itemize}
\end{proof}

\begin{corollary}\label{cor: score approximation no delta}
    Suppose $\tau \in [\st_\sA, \st_\sZ]$ with $1 < \frac{\st_\sA}{\st_\sZ} \le 2$. Then
    \begin{enumerate}[label=\Alph*.]
        \item\label{item: score approx t zero}   For $ n^{-\frac{\beta}{2\alpha + \sd}}\log^{\beta}(n) \le \st_\sA \le n^{-\frac{2}{2\alpha + \sd}}$, there exists a network $\btheta_{\mathrm{score}} \in \bTheta_{\sd, \sd}(\sL, \sW, \sS, \sB)$ satisfying
        $$
             \int_{\bR^\sD} \lrnorm{ \sN_{\rho}(\x, \tau| \btheta_{\mathrm{score}}) - \nabla_\x \log p_\tau(\x)}_2^2 \, p_{\tau}(\x) \, d\x \,\lesssim\,  \frac{n^{-\frac{2\beta}{2\alpha + \sd}} \cdot \log^{\beta+1}(n) }{\tau^4} + \frac{n^{-\frac{2\alpha}{2\alpha + \sd}}\cdot \log^{\alpha+1}(n)}{\tau^2},
        $$
        with
        $$
        \lrvert{\sN_{\rho}\lrs{\x, t| \btheta_{\mathrm{score}}}}_\infty \lesssim \frac{\sqrt{\log(n)}}{\st_\sA}
        $$
        where
        $$
        \sL = \cO\lrs{\log^4(n)},\; \sW = \cO\lrs{n^{\frac{\sd}{2\alpha + \sd}} \log^{{\lrs{\max\{6,\, 3+\sd\}}}}(n)},\; \sS = \cO\lrs{n^{\frac{\sd}{2\alpha + \sd}} \log^{\lrs{\max\{8,\, 5+\sd\}}}(n)},\, \sB = e^{\cO\lrs{\log^4(n)}}. 
        $$

        \item\label{item: score approx t away zero0} For $ n^{-\frac{2}{2\alpha + d}} \le \st_\sA \le n^{-\frac{1}{6(2\alpha + d)}} \log^{-3}(n)$, there exists a network $\btheta_{\mathrm{score}} \in \bTheta_{\sd, \sd}(\sL, \sW, \sS, \sB)$ satisfying
        $$
             \int_{\bR^\sD} \lrnorm{ \sN_{\rho}(\x, \tau| \btheta_{\mathrm{score}}) - \nabla_\x \log p_\tau(\x)}_2^2 \, p_{\tau}(\x) \, d\x \,\lesssim\,  \frac{\log^4(n)}{n},
        $$
        with
        $$
        \lrvert{\sN_{\rho}\lrs{\x, t| \btheta_{\mathrm{score}}}}_\infty \lesssim \frac{\sqrt{\log(n)}}{\st_\sA}
        $$
        where
        \begin{align*}
            &\sL = \cO\lrs{\log^4(n)}, \quad \sW = \cO\lrs{ \lrs{\st_\sA \log(n)}^{-\sd/2}\,\lrl{\log^6(n) + \log^{\sd+3}(n)\,\fL\, \binom{\fL + D}{D}} },
            \\
            &\sS = \cO\lrs{ \lrs{\st_\sA \log(n)}^{-\sd/2}\,\lrl{\log^8(n) + \log^{5+\sd}(n)\,\fL\, \binom{\fL + D}{D} }}, \quad \sB = e^{\cO\lrs{\log^4(n)}}, 
            \\
            &\textnormal{and}\quad \fL = \frac{-\log(\sqrt{n})}{\log\lrs{{\st_\sA}\sqrt{\log^3(n)}}}.
        \end{align*}

        \item\label{item: score approx t away zero} For $ n^{-\frac{1}{6(2\alpha + d)}} \log^{-3}(n) \le \st_\sA \le 1$, there exists a network $\btheta_{\mathrm{score}} \in \bTheta_{\sd, \sd}(\sL, \sW, \sS, \sB)$ satisfying
        $$
             \int_{\bR^\sD} \lrnorm{ \sN_{\rho}(\x, \tau| \btheta_{\mathrm{score}}) - \nabla_\x \log p_\tau(\x)}_2^2 \, p_{\tau}(\x) \, d\x \,\lesssim\,  \frac{\log^4(n)}{n},
        $$
        with
        $$
        \lrvert{\sN_{\rho}\lrs{\x, t| \btheta_{\mathrm{score}}}}_\infty \lesssim \frac{\sqrt{\log(n)}}{\st_\sA}
        $$
        where
        \begin{align*}
            &\sL = \cO\lrs{{\log^2(n)}},\qquad \sW = \cO\lrs{ {n^{\frac{\sd}{6(2\alpha+\sd)}} \log^{2\sd}(n)} \cdot \max\lrm{\log^3(n),\, \binom{\sD + 6(2\alpha + \sd)}{\sD}}},
            \\
            &\sS = \cO\lrs{ {n^{\frac{\sd}{6(2\alpha+d)}} \log^{2\sd+1}(n)}\cdot \max\lrm{\log^3(n),\, \binom{\sD + 6(2\alpha + \sd)}{\sD}}}, \qquad \sB = e^{\cO\lrs{{\log^2(n)}}}.
        \end{align*}
    \end{enumerate}
\end{corollary}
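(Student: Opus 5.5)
The corollary is a specialization of \Cref{lemma: score approximation}. The plan is to fix the free parameter $\delta$ there at a value that depends only on $(\alpha,\sd)$. Then every $\delta$-dependent factor becomes a constant, and the $n$-dependent thresholds line up with those in the corollary.

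Part \ref{item: score approx t zero} involves no $\delta$. It is verbatim part A of \Cref{lemma: score approximation}: same time range, same error bound, same sup-norm bound and same network sizes. Nothing needs to be done.

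For parts \ref{item: score approx t away zero0} and \ref{item: score approx t away zero}, I set
\[
\delta := \frac{1}{12(2\alpha+\sd)}, \qquad\text{so that}\qquad n^{-2\delta}\log^{-3}(n) = n^{-\frac{1}{6(2\alpha+\sd)}}\log^{-3}(n).
\]
With this choice, the threshold separating cases B(i) and B(ii) of \Cref{lemma: score approximation} is exactly the one separating parts \ref{item: score approx t away zero0} and \ref{item: score approx t away zero} of the corollary.

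First I check that $\delta$ is admissible, i.e.\ that
\[
\frac{3\log\log n}{\log n} \le \delta \le \frac{2}{2\alpha+\sd} - \frac{\log\log n}{\log n}.
\]
Since $\delta$ is a fixed positive constant strictly below $2/(2\alpha+\sd)$, and $\log\log n/\log n \to 0$, both inequalities hold for all $n \ge n_0(\alpha,\sd)$. For the finitely many $n < n_0$, the time ranges and the bound $\log^4(n)/n$ are bounded quantities. These cases are absorbed into the implicit constants in $\lesssim$ and $\cO(\cdot)$.

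Part \ref{item: score approx t away zero0} then follows directly from B(i). Its error bound $\log^4(n)/n$ and its $\sL,\sW,\sS,\sB$ (including $\fL$) do not involve $\delta$ at all, and the time range $n^{-2/(2\alpha+\sd)} \le \st_\sA \le n^{-1/(6(2\alpha+\sd))}\log^{-3}(n)$ is exactly B(i) with this $\delta$.

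Part \ref{item: score approx t away zero} follows from B(ii) by substituting $\delta$ into its network sizes:
\begin{itemize}
\item $1/(2\delta) = 6(2\alpha+\sd)$, so the binomial becomes $\binom{\sD+6(2\alpha+\sd)}{\sD}$;
\item $n^{2\delta\sd} = n^{\sd/(6(2\alpha+\sd))}$;
\item the factors $\delta^{-2},\delta^{-3},\delta^{-4}$ are constants depending only on $(\alpha,\sd)$ and go into $\cO(\cdot)$;
\item hence $\sL = \cO(\log^2 n)$ and $\sB = e^{\cO(\log^2 n)}$.
\end{itemize}
The error bound $\log^4(n)/n$ and the sup-norm bound $\sqrt{\log n}/\st_\sA$ carry over unchanged.

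The only step requiring care is the admissibility of $\delta$ and the absorption of small $n$ into constants. Everything else is direct substitution, so I expect no real obstacle.
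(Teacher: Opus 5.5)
Your proposal is correct and follows the paper's proof, which consists only of invoking \Cref{lemma: score approximation} with $\delta = \tfrac{1}{12(2\alpha+\sd)}$. The admissibility check for large $n$ and the substitutions $n^{-2\delta}\log^{-3}(n) = n^{-\frac{1}{6(2\alpha+\sd)}}\log^{-3}(n)$, $1/(2\delta)=6(2\alpha+\sd)$ and $n^{2\delta\sd}=n^{\sd/(6(2\alpha+\sd))}$ are exactly the details the paper leaves implicit.
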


\begin{proof}
    \Cref{cor: score approximation no delta} follows from \Cref{lemma: score approximation} with $\delta = \tfrac{1}{12(2\alpha + d)}$.
\end{proof}


\section{Empirical process results}
The following \Cref{lemma: emp process bound} outlines an empirical process technique in M-estimation and is based on \citep[Theorem C.4]{suzuki023diffusion}. It separates the the minimization into two components, the bias and the variance. 
\begin{lemma}[Risk bound]\label{lemma: emp process bound}
    Let $\delta>0$. Let $\bG = \lft\{ g \; : \; g : \cZ \subset \real^d \to \real_{\ge0} \text{ and } \|g\|_\infty  = \sup_{\z \in \cZ} g(\z) < B_\bG  \rgt\}$. Let $\calN^{(\delta)}_{\bG}$ be the $\delta-$covering number of $\bG$ with respect to the $\|\cdot\|_\infty$ norm. Suppose  $B_\bG \ge 1$ and $e < \calN^{(\delta)}_{\bG} < \infty$. Suppose we have i.i.d data $\calD = \{Z_j\}_{j=1}^n$ (with $Z_j \in \cZ$) and
    $$
    \what{g}  = \argmin_{g\in \bG} \frac{1}{n} \sum_{j=1}^n g(Z_j).
    $$
    Then we have
    $$
        \bE_\calD \lft[ \bE_{\z} \lft[ \what{g}(\z) \rgt] \rgt] \le 2 \inf_{g \in \bG} \bE_{\z} [g(\z)] + \frac{148 B_\bG \log \left(\calN^{(\delta)}_{\bG}\right)}{9\,n} + \frac{64 B_\bG}{n} + \frac{64 \, B_\bG}{n \, \log\lft(\calN^{(\delta)}_{\bG} \rgt)} + 5\delta,
    $$
    where $\bE_{\z}[\cdot]$ expectation with respect to data point $\z$ independent from the data $\calD$.
\end{lemma}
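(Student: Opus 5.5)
The plan follows the standard M-estimation argument in the style of \citet[Theorem C.4]{suzuki023diffusion}: an oracle inequality for a bounded, nonnegative loss class whose complexity is controlled by a sup-norm covering number. Fix a data set $\calD=\{Z_j\}_{j=1}^n$ and an independent copy $\{Z'_j\}_{j=1}^n$ (ghost sample). First I reduce to a finite class. Let $\{g_1,\dots,g_N\}$ be a $\delta$-cover of $\bG$ with $N=\calN^{(\delta)}_{\bG}$, and let $g_{J}$ be the element closest to $\what g$, where the index $J$ is random. Replacing $\what g$ by $g_J$ costs at most $\delta$ in each empirical or population average, which produces the $5\delta$ term after accounting for the few places this substitution is made. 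For any fixed $g^\circ\in\bG$, the ERM property gives $\frac1n\sum_j \what g(Z_j)\le \frac1n\sum_j g^\circ(Z_j)$. Taking expectations then yields
\begin{equation*}
\bE_\calD\bE_\z[\what g]\le \bE_\z[g^\circ]+\bE_\calD\Big[\bE_\z[\what g]-\tfrac1n\textstyle\sum_j\what g(Z_j)\Big].
\end{equation*}
The whole task is therefore to bound the generalization gap on the right-hand side.

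The key step is a Bernstein-type, variance-self-normalized bound. Because $0\le g\le B_\bG$, each $g$ satisfies $\Var(g(Z))\le B_\bG\,\bE g(Z)$. For each cover element $g_i$ I consider the centered sum $\sum_j\big(g_i(Z'_j)-g_i(Z_j)\big)$ and normalize it by $r_i:=\max\{\sqrt{\bE g_i},\,\sqrt{B_\bG\log N/n}\}$. Bernstein's inequality combined with a union bound over the $N$ cover elements, integrated to an expectation, gives
\begin{equation*}
\bE\max_i \frac{\big|\sum_j (g_i(Z'_j)-g_i(Z_j))\big|}{r_i}\lesssim \sqrt{nB_\bG\log N}+B_\bG\log N/r_{\min}.
\end{equation*}
Substituting $g_J$ and writing $R:=\bE_\z[\what g]$, the gap is bounded by a quantity of the form $c_1\sqrt{(R+\delta)B_\bG\log N/n}+c_2B_\bG\log N/n$. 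Applying the inequality $ab\le \frac{a^2}{4}\cdot\frac12+2b^2$, half of $R$ is absorbed into the left-hand side. Rearranging produces the factor $2$ in front of $\inf_g\bE_\z g$ and the term $\frac{148}{9}B_\bG\log N/n$. The lower-order terms $64B_\bG/n$ and $64B_\bG/(n\log N)$ come from the tail integration of the maximal inequality; this is where the assumption $N>e$, i.e.\ $\log N>1$, is used. Finally I take the infimum over $g^\circ$.

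I expect the main obstacle to be the bookkeeping of constants in the self-normalized maximal inequality: choosing the truncation level $r_i$ and the tail integration so that exactly $148/9$, $64$, and $5\delta$ come out. I would follow the computation of \citet[Theorem C.4]{suzuki023diffusion} (itself adapted from Schmidt-Hieber's regression oracle inequality) line by line. The only change is that the loss here is a generic bounded nonnegative function rather than a squared residual, which only simplifies the variance bound $\Var\le B_\bG\bE g$.
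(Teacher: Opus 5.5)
Your proposal follows essentially the same route as the paper's proof: a ghost sample, a $\delta$-cover with a random index, normalization of each cover element by $r_k=\max\{\Theta,\sqrt{\bE g_k}\}$ (the paper takes $\Theta^2=4B_\bG\log(\calN^{(\delta)}_\bG)/(9n)$), Bernstein plus a union bound, and an AM--GM step that absorbs half of the population risk to produce the factor $2$. The one step to tighten is that $r_J$ depends on the data, so a first-moment maximal inequality is not enough. To land on the stated constants you instead need the second moment $\bE[\max_k (\cdot/r_k)^2]$, obtained by integrating the Bernstein tail (this is exactly where $64B_\bG/n$ and $64B_\bG/(n\log\calN^{(\delta)}_\bG)$ come from), together with the equal-weight bound $ab\le\tfrac12a^2+\tfrac12b^2$ rather than $ab\le a^2/8+2b^2$.
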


\
\\
\begin{proof}
    Let $\calD^\prime = \{Z_j^\prime\}_{j=1}^n $ be a ghost sample (identical and independent). With slight abuse of notation, for any function $g\in \bG$, denote $g^{(n)}({\bf{Z}}) = n^{-1} \sum_{j=1}^n g(Z_j)$, $g^{(n)}({\bf{Z^\prime}}) = n^{-1} \sum_{j=1}^n g(Z_j^\prime)$, and $g^{(n)}({\bf{Z}}, {\bf{Z^\prime}}) = g^{(n)}({\bf{Z}}) - g^{(n)}({\bf{Z^\prime}})$.

    Observe that we may write
    \begin{equation}\label{eq: T0}
                \bE_\calD \lft[ \bE_\z \lft[ \what{g}(\z) \rgt] \rgt] =  \bE_\calD \lft[ \bE_{\calD^\prime} \lft[  \what{g}^{(n)}({\bf{Z^\prime}}) \rgt] \rgt] =   \bE_{\calD, \calD^\prime} \lft[  \what{g}^{(n)}({\bf{Z^\prime}}) \rgt]        
    \end{equation}
    Denote 
    $$
    \Lambda = \Big|  \bE_\calD \lft[ \bE_{\calD^\prime} \lft[ \what{g}^{(n)}({\bf{Z}}) - \what{g}^{(n)}({\bf{Z^\prime}}) \rgt] \rgt] \Big| = \Big|  \bE_{\calD, \calD^\prime}   \lft[ \widehat g^{(n)}({\bf{Z}}, {\bf{Z^\prime}}) \rgt] \Big|
    $$

\
\\
{\bf Step~$1$:}
\\
Let $\{g_k\}_{k=1}^{\calN^{(\delta)}_{\bG}}$ be the $\delta-$cover of $\bG$. Fix a positive number $\Theta$ to be specified later and denote $r_k^2 = \max \lft\{\Theta^2, \lft|\bE_{\calD, \calD^\prime} \lft[  \gnkp \rgt] \rgt| \rgt\}$. Observe that
\begin{equation}\label{eq: tt21}
    \begin{aligned}
          \textnormal{for all } k = 1,\ldots, \calN^{(\delta)}_{\bG}, \quad  &\max\left\{ \frac{g_k(Z_j)}{r_k}, \frac{g_k(Z_j^\prime)}{r_k} \right\} \le \frac{\,B_\bG}{\Theta} 
         \\
         \implies  &\lft| \frac{\gnkd}{r_k} \rgt| \le \frac{2\,B_\bG}{\Theta}, \textnormal{ and } \lft| \frac{\gnkp}{r_k} \rgt| \le \frac{\,B_\bG}{\Theta};
    \end{aligned}
\end{equation}
and 
\begin{equation}\label{eq: tt22}
    \begin{aligned}
        \frac{1}{n} \var\lft( \sum_{j=1}^n\frac{g_k(Z_j) - g_k(Z_j^\prime)}{r_k} \rgt) &= \frac{1}{n} \sum_{j=1}^n \var\lft( \frac{g_k(Z_j) - g_k(Z_j^\prime)}{r_k} \rgt) = \frac{1}{n} \sum_{j=1}^n \bE\left[\left|\frac{g_k(Z_j) - g_k(Z_j^\prime)}{r_k} \right|^2 \right] 
        \\
        &\le \frac{4}{n} \bE\left[\sum_{j=1}^n  \left|\frac{g_k(Z_j^\prime)}{r_k} \right|^2 \right] \le 4 {B_\bG} \bE\left[\frac{\gnkp}{r_k^2} \right] \le 4\,B_\bG.
    \end{aligned}
\end{equation}
Using Bernstein inequality in \Cref{lemma: Bernstein} and the observation in \eqref{eq: tt21} and \eqref{eq: tt22}, we may write for any $t \ge 36\, \Theta^2$
\begin{equation}\label{eq: T1}
    \bbP\lft[ \lft|\frac{\gnkd}{r_k} \rgt| \ge \sqrt{t} \rgt] \le 2\, e^{ -\frac{nt}{2B_\bG\lft( 4 + \frac{2 \sqrt{t}}{3\Theta} \rgt)}}  \le 2\, e^{-\frac{3\,\Theta\,n\sqrt{t}}{8B_\bG}},
\end{equation}
where the last inequality follows from $(a+b) \le 2 \max\{a,b\}$.

\
\\
{\bf Step~$2$:}

    Let $1 \le \K \le \calN^{(\delta)}_{\bG}$ be random such that $g_\K$ is $\delta-$close to $\what{g}$. We have
    \begin{align}\nonumber
        \Lambda =& \bE_{\calD, \calD^\prime} \lft[ \lft| \wgnd \rgt| \rgt] 
        \\\nonumber
        \le& \bE_{\calD, \calD^\prime} \lft[ r_\K  \lft|\frac{{g}^{(n)}_\K  ({\bf{Z}}, {\bf{Z^\prime}})}{r_\K}\rgt| \rgt] + 2\delta 
        \\\label{eq: T2}
        \le& \frac{1}{2} \underbrace{\bE_{\calD, \calD^\prime} \lft[ r_\K^2 \rgt]}_{\mathrm{I}} + \frac{1}{2} \underbrace{ \bE_{\calD, \calD^\prime} \lft[ \lft|\frac{ {g}^{(n)}_\K  ({\bf{Z}}, {\bf{Z^\prime}}) }{r_\K}\rgt|^2 \rgt]}_{\mathrm{II}} + 2\delta
    \end{align}
Now we are going to bound $\mathrm{I}$ and $\mathrm{II}$ from the right side of \eqref{eq: T2}. For $\mathrm{I}$, observe from the definition of $r_\K$

\begin{align}\label{eq: T3}
    \bE_{\calD, \calD^\prime} \lft[ r_\K^2 \rgt] \le \Theta^2 + \lft|\bE_\calD \lft[ {g}^{(n)}_\K({\bf{Z}}) \rgt] \rgt| \le \Theta^2 + \lft|\bE_{\calD, \calD^\prime} \lft[ \wgn\rgt] \rgt| + \delta.
\end{align}
For $\mathrm{II}$, with $\alpha \ge 36\, \Theta^2$ to be specified later, observe that
\begin{align}\nonumber
     \bE_{\calD, \calD^\prime} \lft[ \lft|\frac{ {g}^{(n)}_\K  ({\bf{Z}}, {\bf{Z^\prime}}) }{r_\K}\rgt|^2 \rgt] \le& \bE_{\calD, \calD^\prime} \lft[ \max_{1\le k\le  \calN^{(\delta)}_{\bG}}\lft|\frac{ \gnkd}{r_k}\rgt|^2 \rgt]
     \\\nonumber
     \le & \int_{0}^\infty \bbP\lft[ \max_{1\le k\le \calN^{(\delta)}_{\bG}}\lft|\frac{ \gnkd}{r_k}\rgt| > \sqrt{t} \rgt] \; dt
     \\\nonumber
     \le& \alpha + \int_{\alpha}^\infty \bbP\lft[ \max_{1\le k\le  \calN^{(\delta)}_{\bG} }\lft|\frac{ \gnkd}{r_k}\rgt| > \sqrt{t} \rgt] \; dt
     \\\label{eq: T4}
     \le& \alpha + 2\,\left( \calN^{(\delta)}_{\bG}\right) \, \int_{\alpha}^\infty  e^{-\frac{3\,\Theta\,n\sqrt{t}}{8B_\bG}} \; dt
     \\\nonumber
     =& \alpha + 4\,\left(\calN^{(\delta)}_{\bG}\right) \; \lft[ e^{-\frac{3\,\Theta\,n\sqrt{\alpha}}{8B_\bG}}  \lft\{\frac{1}{ \lft( 3\Theta n/8 B_\bG \rgt)^2} + \frac{\sqrt{\alpha}}{ \lft( 3\Theta n/8 B_\bG) \rgt)}  \rgt\} \rgt] \tag{choosing $\alpha = 36\,\Theta^2 = \frac{16 B_\bG \log \left(\calN^{(\delta)}_\bG\right)}{n} $}
     \\\label{eq: T5}
     = & \frac{16 B_\bG \log \left(\calN^{(\delta)}_{\bG}\right)}{n} + \frac{64 \, B_\bG}{n \, \log\left(\calN^{(\delta)}_{\bG}\right)} + \frac{64 B_\bG}{n}, 
\end{align}
    where \eqref{eq: T4} follows from \eqref{eq: T1}.

\
\\
    Bringing together \eqref{eq: T3} and \eqref{eq: T5} to \eqref{eq: T2}, and from the definition of $\Lambda$, we get
    \begin{align}\nonumber
        &\bE_{\calD,\calD^\prime}\lft[ \wgnp \rgt] - \bE_{\calD}\lft[ \wgn \rgt] 
        \\\nonumber
        \le \; \Lambda \; \le & \;  \frac{1}{2} \left(\mathrm{I} + \mathrm{II}\right) + 2\delta
        \\\nonumber
        \le & \frac{1}{2} \left(\Theta^2 + \lft|\bE_{\calD, \calD^\prime} \lft[ \wgn\rgt] \rgt| + \delta \right) + \frac{1}{2}\left( \frac{16 B_\bG \log \left(\calN^{(\delta)}_{\bG}\right)}{n} + \frac{64 \, B_\bG}{n \, \log\left(\lft|\calN^{(\delta)}_{\bG} \rgt|\right)} + \frac{64 B_\bG}{n}\right) + 2\delta
        \\\nonumber
        = &\frac{1}{2} \left(\frac{4 B_\bG \log \left(\calN^{(\delta)}_{\bG}\right)}{9\,n} + \lft|\bE_{\calD, \calD^\prime} \lft[ \wgn\rgt] \rgt| + \delta \right) + \frac{1}{2}\left( \frac{16 B_\bG \log \left(\calN^{(\delta)}_{\bG}\right)}{n} + \frac{64 \, B_\bG}{n \, \log\left(\lft|\calN^{(\delta)}_{\bG} \rgt|\right)} + \frac{64 B_\bG}{n}\right) + 2\delta
        \\\nonumber
        \le & \frac{74 B_\bG \log \left(\calN^{(\delta)}_{\bG}\right)}{9\,n} + \frac{1}{2} \bE_{\calD,\calD^\prime}\lft[ \wgnp \rgt] + \frac{32 \, B_\bG}{n \, \log\lft(\calN^{(\delta)}_{\bG} \rgt)} + \frac{32 B_\bG}{n} + \frac{5\delta}{2}
        \\\label{eq: T6}
        \implies &\bE_{\calD,\calD^\prime}\lft[ \wgnp \rgt] \le 2 \bE_{\calD}\lft[ \wgn \rgt] + \frac{148 B_\bG \log \left(\calN^{(\delta)}_{\bG}\right)}{9\,n} + \frac{64 B_\bG}{n} + \frac{64 \, B_\bG}{n \, \log\lft(\calN^{(\delta)}_{\bG} \rgt)} + 5\delta,
    \end{align}
    where the second line follows from \eqref{eq: T2}, third line follows from \eqref{eq: T3} and \eqref{eq: T5}, fourth line follows by substituting the expression for $\bTheta$.

    \medskip
    The result now follows from \eqref{eq: T0}, \eqref{eq: T6} and the realization
    \begin{align*}
        \bE_{\calD}\lft[ \wgn \rgt] &= \bE_{\calD}\lft[ \frac{1}{n} \sum_{j=1}^n \what{g}(Z_j) \rgt] = \bE_{\calD}\lft[ \inf_{g\in G} \frac{1}{n} \sum_{j=1}^n g(Z_j) \rgt] 
        \\
        &\le \inf_{g\in G} \bE_{\calD}\lft[ \frac{1}{n} \sum_{j=1}^n g(Z_j) \rgt] = \inf_{g \in \bG} \bE_{\z} [g(\z)].        
    \end{align*}
    
\end{proof}

\subsection{Risk bound (on a high probability event)}
The following \Cref{lemma: emp process bound2} outlines an empirical process technique in M-estimation. It extends \Cref{lemma: emp process bound} for the case when the loss function is bounded on a high probability set $\cA$. 
\begin{lemma}\label{lemma: emp process bound2}
    Let $\cA \subset \cZ$ with $\bP(\cA) > 0$ and let $\bG$ be class of functions $g: \cZ \subset \bR^d \to \bR_{\ge 0}$. Assume
    $$
        \sup_{g \in \bG} \lrnorm{g}_{\infty, \cA} = \sup_{g \in \bG} \sup_{\z \in \cA} g(\z) \le B_\bG^\cA < \infty.
    $$
    Let $\bG_\cA := \lrm{g\one_{\cA} : g \in \bG}$, and for some $\delta > 0$ assume $e < \cN^{(\delta)}_{\bG_\cA} < \infty$, where the cover is in $\lrnorm{\cdot}_{\infty, \cA} $ over $\cA$. Suppose we have i.i.d data $\calD = \{Z_j\}_{j=1}^n$ (with $Z_j \in \cZ$) and
    $$
    \what{g}  = \argmin_{g\in \bG} \frac{1}{n} \sum_{j=1}^n g(Z_j).
    $$
    Then we have
    \begin{equation*}
        \begin{aligned}
            \bE_{\cD}\lrl{\bE_\z\lrl{\what{g}(\z)}} \le \sup_{g \in \bG} \bE_\z\lrl{{g}(\z) \one_{\cA^c}(\z)} &+ 2 \inf_{g\in \bG} \bE_{z}[g(\z)] + \frac{148 B_\bG^\cA \log \left(\calN^{(\delta)}_{\bG_\cA}\right)}{9\,n} + \frac{64 B_\bG^\cA}{n} 
            \\
            &+ \frac{64 \, B_\bG^\cA}{n \, \log\lft(\calN^{(\delta)}_{\bG_\cA} \rgt)} + 5\delta + n\, B^\cA_\bG\, \bP\lrs{\cA^c}.
        \end{aligned}
    \end{equation*}
\end{lemma}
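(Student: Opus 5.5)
The plan is to reduce to \Cref{lemma: emp process bound} by splitting every loss according to the event $\cA$ and its complement. For $g\in\bG$ and a point $\z$, write $g(\z)=g(\z)\one_{\cA}(\z)+g(\z)\one_{\cA^c}(\z)$. Then
\[
\bE_\z[\what g(\z)]\le \bE_\z[\what g(\z)\one_{\cA}(\z)]+\sup_{g\in\bG}\bE_\z[g(\z)\one_{\cA^c}(\z)].
\]
This produces the first term of the claimed bound, so it remains to control $\bE_\cD\bE_\z[\what g\one_{\cA}]$.

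The truncated class $\bG_\cA$ consists of nonnegative functions bounded by $B^\cA_\bG$, and its $\delta$-covering number in $\|\cdot\|_{\infty,\cA}$ is $\cN^{(\delta)}_{\bG_\cA}$. The issue is that $\what g$ minimizes the empirical risk of $g$, not of $g\one_\cA$.

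\textbf{Step 1 (reduce to the good event).} Define the good event $E=\{Z_j\in\cA\ \forall j\}$. By a union bound, $\bP(E^c)\le n\bP(\cA^c)$. On $E$, $\what g$ also minimizes $\frac1n\sum_j g(Z_j)\one_\cA(Z_j)$ over $\bG$, since the two empirical risks coincide there.

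\textbf{Step 2 (rerun the proof of \Cref{lemma: emp process bound}).} I would rerun that argument for the class $\bG_\cA$ with the minimizer $\what g\one_\cA$. The ghost-sample symmetrization, the Bernstein step \eqref{eq: T1} for the cover elements, and the maximal inequality \eqref{eq: T5} use only the boundedness of the cover elements and their number. These steps go through verbatim with $B^\cA_\bG$ and $\cN^{(\delta)}_{\bG_\cA}$. The only place minimality enters is the last display of that proof, namely the bound $\bE_\cD[\what g^{(n)}_\cA(\mathbf Z)]\le\inf_g\bE[g\one_\cA]\le\inf_g\bE[g]$. Here I split on $E$. On $E$ the empirical truncated risk of $\what g$ is at most that of any fixed $g$, hence at most the full empirical risk of $g$. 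On $E^c$ it is crudely at most $B^\cA_\bG$, contributing at most $B^\cA_\bG\,n\bP(\cA^c)$. This gives
\[
\bE_\cD\big[\what g^{(n)}_\cA(\mathbf Z)\big]\le \inf_{g\in\bG}\bE_\z[g(\z)]+nB^\cA_\bG\bP(\cA^c).
\]

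\textbf{Step 3 (combine).} Plugging this into the analogue of \eqref{eq: T6} gives $2\inf_g\bE[g]$ plus the entropy terms $148B^\cA_\bG\log\cN/(9n)$, $64B^\cA_\bG/n$, $64B^\cA_\bG/(n\log\cN)$ and $5\delta$. It also carries an extra factor $2nB^\cA_\bG\bP(\cA^c)$.

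\textbf{Main obstacle.} The constants are the delicate point. Naively, the outlier term gets doubled by the factor $2$ in \eqref{eq: T6}, whereas the statement has it with coefficient one. To match the statement, I would avoid doubling by handling $E^c$ before the symmetrization step, not inside it. Concretely, I would work with the random minimizer $\tilde g$ of the truncated empirical risk over $\bG_\cA$, which agrees with $\what g\one_\cA$ on $E$. I would apply \Cref{lemma: emp process bound} directly to $\tilde g$, obtaining $2\inf_g\bE[g\one_\cA]\le 2\inf_g\bE[g]$ plus the entropy terms. Then I would add $\bE[(\what g\one_\cA-\tilde g\one_\cA)\one_{E^c}]\le B^\cA_\bG\,\bP(E^c)\le nB^\cA_\bG\bP(\cA^c)$. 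This recovers exactly the stated bound. The remaining care point is measurability of the random cover index, which is handled as in the original lemma.
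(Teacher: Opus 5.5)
Your final argument (the one in your ``Main obstacle'' paragraph) is correct and gives exactly the stated constants. It invokes \Cref{lemma: emp process bound} differently from the paper.

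The paper conditions on the good event. Under $\bP(\cdot\mid\cE)=Q^{\otimes n}$ with $Q=\bP(\cdot\mid\cA)$, the sample is i.i.d.\ and lies in $\cA$. So the lemma is applied to $\bG_\cA$ under $Q$, where on $\cE$ the truncated minimizer equals $\what g\one_\cA$. The bound is then transported back via $\bE_Q[f]=\bE[f\one_\cA]/\bP(\cA)$: the paper multiplies through by $\bP(\cA)$ and uses $\bP(\cA),\bP(\cE)\le 1$. The remaining piece on $\cE^c$ is bounded by $B^\cA_\bG\,\bP(\cE^c)\le nB^\cA_\bG\bP(\cA^c)$, just as you do.

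You instead apply the lemma unconditionally to $\bG_\cA$ under the original law. This is legitimate because each $g\one_\cA$ is bounded by $B^\cA_\bG$ on all of $\cZ$. Also, a $\|\cdot\|_{\infty,\cA}$-cover extended by zero off $\cA$ is a global sup-norm cover of the same cardinality. The event $E$ then enters only when you compare the two minimizers. Your route needs no change of measure and no factors of $1/\bP(\cA)$ to track. The paper's route needs no zero-extension remark, since under $Q$ the data already live in $\cA$.

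Two details you should state explicitly:
\begin{itemize}
\item Argmins need not be unique, so define $\tilde g:=\what g\one_\cA$ on $E$. This choice is admissible because there the empirical risks of $g$ and $g\one_\cA$ coincide. Moreover, \Cref{lemma: emp process bound} holds for any empirical minimizer, since its proof only uses $\what g^{(n)}(\mathbf Z)=\inf_g g^{(n)}(\mathbf Z)$.
\item The discrepancy bound on $E^c$ uses $\tilde g\ge 0$ together with $\what g\one_\cA\le B^\cA_\bG$.
\end{itemize}
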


\
\\
\begin{proof}
    Define the event $\cE = \lrm{Z_j \in \cA:\; j=1,\ldots, n}$, and let $Q = \bP(\cdot|\cA)$. Note that $Q^n  = Q^{\otimes n}= \bP(\cdot|\cE)$. Using the definition of the restricted function class $\bG_\cA$, we can write
    $$
        \what{g}_\cA = \argmin_{g\in \bG} \frac{1}{n} \sum_{j=1}^n g(Z_j)\, \one_{\cA}(Z_j) = \argmin_{g\in \bG_\cA} \frac{1}{n} \sum_{j=1}^n g(Z_j).
    $$
    Moreover, on the event $\cE$. $\what{g}_\cA = \what{g}\,\one_{\cA}$ where
    \begin{equation}\label{eq: 12ab}
        \what{g}_\cA = \argmin_{g\in \bG} \frac{1}{n} \sum_{j=1}^n g(Z_j)\, \one_{\cA}(Z_j) \qquad \text{and} \qquad \what{g} = \argmin_{g\in \bG} \frac{1}{n} \sum_{j=1}^n g(Z_j).
    \end{equation}
    
    \
    \\
    Since $ \lrnorm{g}_{\infty, \cA} =  \sup_{\z\in \cA} g(\z) \le B_{\bG}^\cA$, for all $g \in \bG$. Using \Cref{lemma: emp process bound}, with the the i.i.d.\ sample $\{Z_j|\cA\}_{j=1}^n$ drawn from the conditional distribution $Q^n$, we obtain
    \begin{equation*}
        \bE_{Q^n}\lrl{\bE_{\z \sim Q}\lrl{\what{g}_\cA(\z)}} \le 2 \inf_{g_\cA \in \bG_\cA} \bE_{\z\sim Q}\lrl{g_\cA(\z)} + \frac{148 B_\bG^\cA \log \left(\calN^{(\delta)}_{\bG_\cA}\right)}{9\,n} + \frac{64 B_\bG^\cA}{n} + \frac{64 \, B_\bG^\cA}{n \, \log\lft(\calN^{(\delta)}_{\bG_\cA} \rgt)} + 5\delta
    \end{equation*}
    This bound can be rewritten as
    \begin{equation*}
        \bE_{\cD}\lrl{\bE_{\z}\lrl{\what{g}(\z) \one_{\cA}(\z)}\,\big|\,\cE} \le 2 \inf_{g\in \bG} \bE_{z}[g(\z)] + \frac{148 B_\bG^\cA \log \left(\calN^{(\delta)}_{\bG_\cA}\right)}{9\,n} + \frac{64 B_\bG^\cA}{n} + \frac{64 \, B_\bG^\cA}{n \, \log\lft(\calN^{(\delta)}_{\bG_\cA} \rgt)} + 5\delta
    \end{equation*}
    where we used the identities
    \[
    \mathbb{E}_{Q^{n}}[\cdot]
    =
    \mathbb{E}_{\mathcal{D}}[\cdot \mid \mathcal{E}],
    \qquad
    \mathbb{E}_{\z \sim Q}[f(\z)]
    =
    \mathbb{E}_{\z}[f(\z) \mid \mathcal{A}]
    =
    \frac{\mathbb{E}_{\z}\big[f(\z)\mathbf{1}_{\mathcal{A}}(\z)\big]}{\mathbb{P}(\mathcal{A})},
    \]
    together with the definition ${g}_{\mathcal{A}} = {g}\,\one_{\mathcal{A}}$ for $g \in {\bG}_{\mathcal{A}}$, the observation at \eqref{eq: 12ab}, and the fact that $0 < {\bP}(\mathcal{A}) \le 1$. This bound can be further reduced to
    \begin{equation}\label{eq: 12ac}
        \bE_{\cD}\lrl{ \one_{\cE} \, \bE_{\z}\lrl{\what{g}(\z) \one_{\cA}(\z)}} \le 2 \inf_{g\in \bG} \bE_{z}[g(\z)] + \frac{148 B_\bG^\cA \log \left(\calN^{(\delta)}_{\bG_\cA}\right)}{9\,n} + \frac{64 B_\bG^\cA}{n} + \frac{64 \, B_\bG^\cA}{n \, \log\lft(\calN^{(\delta)}_{\bG_\cA} \rgt)} + 5\delta,
    \end{equation}
    using similar identity as the last bound and  $0 < {\bP}(\mathcal{\cE}) \le 1$.
    
    \
    \\
    For any random $\what{g} \in \bG$ that depends only on the data $\cD$ ( i.e., is $\sigma(\cD)-$measurable), we can decompose 
    \begin{equation}\label{eq: 12aa}
        \bE_{\cD}\lrl{ \bE_\z\lrl{\what{g}(\z)} } = \bE_{\cD}\lrl{ \bE_\z\lrl{\what{g}(\z) \one_{\cA^c}(\z)} } + \bE_{\cD}\lrl{ \one_{\cE}\, \bE_\z\lrl{\what{g}(\z) \one_{\cA}(\z)} } + \bE_{\cD}\lrl{ \one_{\cE^c}\, \bE_\z\lrl{\what{g}(\z) \, \one_{\cA}(\z)} }    
    \end{equation}
    Observe that 
    \begin{equation}\label{eq: 12ad}
        \bE_{\cD}\lrl{ \one_{\cE^c}\, \bE_\z\lrl{\what{g}(\z) \, \one_{\cA}(\z)} } \le B^\cA_\bG\, \bP\lrs{\cE^c} \le n\, B^\cA_\bG\, \bP\lrs{\cA^c},
    \end{equation}
    and 
    \begin{equation}\label{eq: 12ae}
        \bE_{\cD}\lrl{ \bE_\z\lrl{\what{g}(\z) \one_{\cA^c}(\z)}} \le \sup_{g \in \bG} \bE_\z\lrl{{g}(\z) \one_{\cA^c}(\z)}.
    \end{equation}
    Finally bringing together \eqref{eq: 12ac}, \eqref{eq: 12aa}, \eqref{eq: 12ad}, \eqref{eq: 12ae}, we obtain
    \begin{equation*}
        \begin{aligned}
            \bE_{\cD}\lrl{\bE_\z\lrl{\what{g}(\z)}} \le \sup_{g \in \bG} \bE_\z\lrl{{g}(\z) \one_{\cA^c}(\z)} &+ 2 \inf_{g\in \bG} \bE_{z}[g(\z)] + \frac{148 B_\bG^\cA \log \left(\calN^{(\delta)}_{\bG_\cA}\right)}{9\,n} + \frac{64 B_\bG^\cA}{n} 
            \\
            &+ \frac{64 \, B_\bG^\cA}{n \, \log\lft(\calN^{(\delta)}_{\bG_\cA} \rgt)} + 5\delta + n\, B^\cA_\bG\, \bP\lrs{\cA^c}.
        \end{aligned}
    \end{equation*}
\end{proof}

\section{Simple network approximation}
\begin{lemma}[Lemma F.7 of \cite{suzuki023diffusion}; Approximation of $1/x$]\label{lemma: NN 1/x} 
    Let $0 < \epsilon < 1$. Then there exists a network parameter $\btheta_{\mathrm{rec}} \in \bTheta_{1,1}(\sL,\sW,\sS,\sB)$ with
    $$
    \sL \equiv \log^2\left( \frac{1}{\epsilon} \right), \qquad \sW \equiv \log^3\left( \frac{1}{\epsilon} \right), \qquad \sS \equiv \log^4\left( \frac{1}{\epsilon} \right), \qquad B \equiv \left( \frac{1}{\epsilon^2} \right),  
    $$
    such that
    $$
    \left| \sN_\sigma( x^\prime | \btheta_{\mathrm{rec}}) - \frac{1}{x} \right| \le \epsilon +  \frac{\left|x^\prime - x \right|}{\epsilon^2}, \qquad \textnormal{for all } x\in[\epsilon, \epsilon^{-1}].
    $$
\end{lemma}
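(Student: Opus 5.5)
The statement to prove is the last one displayed: Lemma F.7 of \cite{suzuki023diffusion}, the ReLU approximation of $x\mapsto 1/x$ on $[\epsilon,\epsilon^{-1}]$. The plan is to exploit the algebraic identity behind Newton's iteration for the reciprocal. Any ReLU network is globally Lipschitz, so we cannot use division directly. Instead we use that $1/x$ is the limit of a truncated geometric series after rescaling.

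\textbf{Step 1 (rescaling).} Cover $[\epsilon,\epsilon^{-1}]$ dyadically by $O(\log(1/\epsilon))$ intervals $I_j=[2^{j},2^{j+1}]$. On each interval write $x=2^{j+1}(1-y)$ with $y\in[0,1/2]$, so that
\[
\frac{1}{x}=2^{-(j+1)}\sum_{m\ge0}y^m .
\]
Truncating at $M=O(\log(1/\epsilon))$ terms gives error at most $2^{-(j+1)}2^{-M}\le\epsilon/3$.

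\textbf{Step 2 (the series as a network).} Approximate the truncated series using Horner's scheme $s\mapsto 1+ys$, repeated $M$ times. Each step needs one approximate multiplication. A ReLU product network (Yarotsky's square construction, as in \Cref{lem:basic ReLU A}) achieves accuracy $\eta$ with depth $O(\log(1/\eta))$ and width $O(1)$. Taking $\eta=\epsilon^{3}$ keeps the accumulated Horner error, which is amplified by at most a factor $2$ per step since $|y|\le1/2$, below $\epsilon/3$ after $M$ steps. Together these choices give depth $O(\log^2(1/\epsilon))$.

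\textbf{Step 3 (gluing the pieces).} Combine the local approximants by a partition of unity made of piecewise-linear ReLU bumps subordinate to the dyadic cover, multiplying each bump against its local network through another product network. There are $O(\log(1/\epsilon))$ parallel branches, each of width $O(\log^2)$ once the multiplications are parallelized. This yields total width $\lesssim\log^3(1/\epsilon)$ and sparsity $\lesssim\log^4(1/\epsilon)$. Weights stay bounded by $\epsilon^{-2}$ because the bumps have slopes at most $\epsilon^{-1}$ and the rescalings $2^{-(j+1)}\ge\epsilon$ only enter as constants.

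\textbf{Step 4 (perturbation of the input).} To handle an input $x'\neq x$, bound the global Lipschitz constant of the constructed network. It is controlled by $\sup|d(1/x)/dx|=\epsilon^{-2}$ on the domain, plus lower-order contributions from the bumps. To keep this clean, clip $x'$ to $[\epsilon,\epsilon^{-1}]$ with two ReLUs, which is $1$-Lipschitz and costs nothing in the estimate. This gives
\[
\bigl|\sN(x')-\sN(x)\bigr|\le\epsilon^{-2}|x'-x| ,
\]
and the triangle inequality then yields the stated bound $\epsilon+|x'-x|/\epsilon^{2}$.

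The main obstacle is the bookkeeping in Step 4. The Lipschitz constant of the composed product networks must not blow up beyond $\epsilon^{-2}$. I would handle this by ensuring each product gadget is exactly $C$-Lipschitz on the bounded range of its inputs, with $C$ bounded, and that the Horner recursion contracts by $|y|\le1/2$.
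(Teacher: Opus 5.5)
The paper does not prove this lemma; it is imported verbatim from Lemma F.7 of \citet{suzuki023diffusion}, so your argument has to stand on its own. Steps 1--3 are sound in substance. Step 4, which you rightly flag as the crux, has a genuine gap. A uniform approximation bound says nothing about the slope of a ReLU network, so ``the Lipschitz constant is controlled by $\sup|(1/x)'|=\epsilon^{-2}$'' needs its own derivative-level argument, and the ingredients you list do not supply it.

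(a) Along the Horner chain, with $'=d/dx$, a.e.\ $\hat s_{k+1}'=\partial_y\mathrm{mult}(y,\hat s_k)\,y'+\partial_s\mathrm{mult}(y,\hat s_k)\,\hat s_k'$. A uniform Lipschitz constant $C$ of a product gadget on $[-A,A]^2$ is at least about $A\ge 2$, because the exact partial $\partial_s(ys)=y$ ranges over $[-A,A]$. So if you only know that $C$ is bounded, this recursion does not contract. The $M=\Theta(\log(1/\epsilon))$ sequential steps then give only a slope bound of order $C^M 2^{-2j}$, a power of $1/\epsilon$ with an uncontrolled exponent. What you need is pointwise derivative accuracy: $\partial_s\mathrm{mult}(y,s)=y+O(2^{-K})$ and $\partial_y\mathrm{mult}(y,s)=s+O(2^{-K})$ a.e.\ in the interior of the domain. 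This does hold for the Yarotsky/Schmidt-Hieber gadget, which is built from secant-slope interpolants of a quadratic. With it, the recursion contracts by $|y|$ and each branch has slope $-1/x^2$ up to a small additive error.

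(b) In the gluing, the terms $|\varphi_j'|\,|F_j|$, with $F_j\approx 1/x$ the $j$-th branch output, are $\asymp 2^{-2j}/\delta$ for overlaps of relative size $\delta$. That is the same order as $1/x^2$, not lower order. They become negligible only after you use $\sum_j\varphi_j'\equiv 0$. This rewrites their sum as $\sum_j\varphi_j'(F_j-1/x)$ plus gadget-derivative errors, both controlled by the local accuracy.

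(c) Even with (a) and (b), the network $N$ satisfies $|N'(x)|\le x^{-2}$ plus a small error, which can exceed $\epsilon^{-2}$ near $x=\epsilon$. To get the constant exactly $1$, clip at $\epsilon(1+\gamma)$ with $\gamma\asymp\epsilon^2$ instead of at $\epsilon$. This costs at most $\gamma/\epsilon\lesssim\epsilon$ in accuracy and buys an $O(1)$ slack in the slope.

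Without (a), your argument does not even yield $O(\epsilon^{-2})$; without (b) and (c), it yields $C\epsilon^{-2}$ for some $C>1$. Within this paper the distinction is immaterial, since the lemma is only invoked with the exact network input $t$, i.e.\ with $x'=x$.

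There are also two smaller slips in Steps 1--3.

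First, the Horner error is contracted, not amplified: $e_{k+1}\le|y|\,e_k+\eta\le e_k/2+\eta$, so $e_k\le 2\eta$ for all $k$. Read literally, your ``factor $2$ per step'' would break the count. You actually need $M\approx 2\log_2(1/\epsilon)$ steps, because the tail is $2^{-(j+1)}y^M/(1-y)\le 2^{-j-M}$ and $2^{-j}$ can be $\approx 1/\epsilon$. Doubling per step would then give $2^M\eta\approx\epsilon$, hence an $O(1)$ error after the prefactor $2^{-(j+1)}\approx 1/\epsilon$.

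Second, no continuous partition of unity is subordinate to abutting intervals $[2^j,2^{j+1}]$. Enlarge them so that they overlap while $|y|\le c<1$ on each support. Also clip each $y_j$ to $[-c,c]$, so that inactive branches stay in the product gadget's domain; otherwise $\mathrm{mult}(0,\cdot)$ need not be small.

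With these fixes your architecture is in fact smaller than required. Horner is sequential, so each branch has width $O(1)$, giving total width $O(\log(1/\epsilon))$ and sparsity $O(\log^3(1/\epsilon))$.
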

\begin{lemma}\label{lem:basic ReLU}
     For any positive constant $K$ the following hold.
    \begin{enumerate}[label=(\alph*)]
        \item \label{lem: ReLU prod p} \textnormal{(Lemma A.2 of \cite{schmidt2017nonparametric})} There is a network parameter $\btheta_{\times} \in \bTheta_{2,1} \del{K+4,6}$ with $\abs{\btheta_{\times}}_\infty \le 1$ such that
        \begin{equation*}
            \sup_{\bx\in[0,1]^2} \abs{\sN_\sigma(\x|\btheta_{\times}) - x_1 x_2} \le \frac{1}{2^K}, \quad\text{with}\quad \sN_\sigma(\x|\btheta_{\times}) \in [0,1].
        \end{equation*}
        Moreover, $\sN_\sigma((x_1, 0)|\btheta_{\times}) = \sN_\sigma((0, x_2)|\btheta_{\times}) = 0$.
        

    \end{enumerate}
\end{lemma}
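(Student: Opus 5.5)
The statement to prove is Lemma~\ref{lem:basic ReLU}(a), the ReLU approximation of the product $x_1x_2$ on $[0,1]^2$. I would follow the Yarotsky/Schmidt-Hieber construction: approximate the square function by compositions of tent maps, then recover the product by polarization. The polarization identity I would use is
\[
x_1x_2=\tfrac12\bigl(h(x_1+x_2)-h(x_1)-h(x_2)\bigr),\qquad h(u)=u^2 .
\]
Since the inputs lie in $[0,1]$, $x_1+x_2\in[0,2]$, so I would rescale and write $(x_1+x_2)^2=4\,((x_1+x_2)/2)^2$. This keeps all arguments of the square-approximant inside $[0,1]$.

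The first step is the square network. Let $g(u)=2\rho(u)-4\rho(u-1/2)$ be the tent map on $[0,1]$, and let $g_s=g\circ\cdots\circ g$ ($s$ times). The classical identity is that
\[
R_m(u)=u-\sum_{s=1}^m g_s(u)/2^{2s}
\]
equals the piecewise-linear interpolant of $u^2$ on the grid $\{j2^{-m}\}$. Hence $|R_m(u)-u^2|\le 2^{-2m-2}$, and $R_m(u)\in[0,1]$. I would realize $R_m$ with depth $m+O(1)$ and constant width. The network carries along three channels: the current tent iterate $g_s(u)$, a copy of $u$ (through $\rho(u)=u$ for $u\ge0$), and the running partial sum. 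Keeping every weight bounded by $1$ is the delicate point. I would factor the constants $2$ and $4$ across layers and absorb the factors $2^{-2s}$ by halving the accumulator at each layer; this is exactly the rearrangement Schmidt-Hieber uses, writing $x_1x_2$ in terms of $(x_1+x_2)/2$ and $(x_1-x_2)/2$ via $\rho$ of these.

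The second step is the product network. I would use the variant
\[
x_1x_2=h\!\left(\tfrac{x_1+x_2}{2}\right)-h\!\left(\tfrac{x_1-x_2}{2}\right),
\]
with $|x_1-x_2|/2=\rho((x_1-x_2)/2)+\rho((x_2-x_1)/2)$ computed in the first layer. Running the two square approximants in parallel gives width $6$: two parallel copies of the three-channel square network. Taking $m$ with $2^{-2m-1}\le 2^{-K}$, that is $m\approx (K+1)/2\le K$, yields total depth at most $K+4$ and error at most $2^{-K}$.

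The last step fixes range and boundary behaviour. Clipping the output to $[0,1]$ through $\rho(\cdot)-\rho(\cdot-1)$ adds one layer and can only reduce the error, since $x_1x_2\in[0,1]$. For the boundary property, if $x_2=0$ then the two arguments coincide, $(x_1+x_2)/2=|x_1-x_2|/2$. The two parallel branches are identical networks, so they produce identical outputs and the difference is exactly $0$; the same holds when $x_1=0$.

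The main obstacle is bookkeeping rather than ideas: fitting the construction into depth $K+4$ and width $6$ with all parameters bounded by $1$. For this I would simply invoke the explicit construction in Lemma~A.2 of \citet{schmidt2017nonparametric}, which the statement cites.
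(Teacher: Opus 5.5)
Your argument ultimately rests on invoking Lemma~A.2 of Schmidt-Hieber, and that is exactly what the paper does. The paper gives no proof of its own for this statement and simply quotes that lemma, so on that basis your proposal is complete and follows the paper's route.

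The self-contained sketch you wrap around the citation, however, misdescribes the one part of the lemma that goes beyond Yarotsky's classical bound: fitting everything into width $6$ with $|\btheta_{\times}|_\infty\le 1$.

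First, the tent map $2\rho(u)-4\rho(u-\tfrac12)$ has weights $2$ and $4$, and neither remedy you propose works. With weights bounded by $1$, a unit can amplify a signal only by summing duplicated channels, which costs width you do not have. Halving the accumulator at every layer damps the early terms most, which is the opposite of the required factors $4^{-s}$. The cited construction never forms $g_s$ at all. It iterates the rescaled tents $T^k(v)=\rho(v/2)-\rho(v-2^{1-2k})$, whose composition $T^k\circ\cdots\circ T^1$ equals $g_k/4^k$ on $[0,1]$, so every weight and bias stays at most $1$ in absolute value.

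Second, each tent step needs two ReLU units. Your channels (iterate, copy of $u$, partial sum) therefore amount to four per branch, i.e.\ width $8$. You get three by initializing the accumulator at $u$ and subtracting the tent terms: $u-\sum_{k\le s}g_k(u)/4^k\ge u^2\ge 0$, so it passes through $\rho$ unchanged.

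Third, Schmidt-Hieber does not polarize with $(x_1\pm x_2)/2$. He approximates $\phi(u)=u(1-u)$ and uses $x_1x_2=\phi\bigl(\tfrac{x_1-x_2+1}{2}\bigr)-\phi\bigl(\tfrac{x_1+x_2}{2}\bigr)+\tfrac{x_1+x_2}{2}-\tfrac14$, which avoids the absolute value. His bookkeeping therefore does not transfer verbatim to your variant.

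With the first two fixes, though, your variant stands on its own. One absolute-value layer, $m$ tent layers and one clipping layer fit in width $6$ and depth well below $K+4$. Your identical-branch argument gives the boundary zeros exactly. As a bonus, its error decays like $4^{-m}$ rather than the $2^{-m}$ of the cited lemma (stated for integers $m$ at depth $m+4$). It therefore delivers the statement for non-integer $K$ as written, which a verbatim citation does not.
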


\
\\
\begin{lemma}\label{lem:basic ReLU A}
     Let $\sA>1$. For any positive constant $K$ the following hold.
    \begin{enumerate}[label=(\alph*)]
        \item \label{lem: ReLU prod A} There is a neural network  parameter $\btheta_{\times,\sA}\in\Theta_{2,1}\del{9+2\log_2(\sA)+K, 7}$ with $\abs{\btheta_{\times,\sA}}_\infty \le 4 \sA^2$ such that
        \begin{equation*}
            \sup_{\x\in[-\sA,\sA]^2}\abs{\sN_\sigma(\x|\btheta_{\times,\sA})-x_1x_2}\le \frac{1}{2^K}. 
        \end{equation*}

    \end{enumerate}
\end{lemma}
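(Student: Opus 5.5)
The plan is to reduce to the unit-square product network of \Cref{lem:basic ReLU}\ref{lem: ReLU prod p} by an affine rescaling. For $\x\in[-\sA,\sA]^2$ set $u_i:=(x_i+\sA)/(2\sA)\in[0,1]$, so that $x_i=2\sA u_i-\sA$. Expanding the product gives
\begin{equation*}
x_1x_2 \;=\; 4\sA^2\,u_1u_2 \;-\; 2\sA^2\,(u_1+u_2) \;+\; \sA^2 .
\end{equation*}
Only the term $u_1u_2$ is nonlinear. The linear term $u_1+u_2$ and the constant can be reproduced exactly by a ReLU network.

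\textbf{Step 1 (input layer).} The first affine map sends $\x\mapsto (u_1,u_2,u_1+u_2)$. Its weights are $1/(2\sA)\le 1$ and its biases are $1/2$ or $1$. Since all three outputs are nonnegative, the ReLU acts as the identity on them.

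\textbf{Step 2 (product on $[0,1]^2$).} Apply $\btheta_{\times}$ from \Cref{lem:basic ReLU}\ref{lem: ReLU prod p} to $(u_1,u_2)$ with accuracy parameter $K':=K+2+\lceil 2\log_2 \sA\rceil$. This gives a network of depth $K'+4$, width $6$ and weights bounded by $1$, whose output $\tilde p$ satisfies $|\tilde p-u_1u_2|\le 2^{-K'}$.

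\textbf{Step 3 (carry the linear term).} In parallel with Step~2, pass the scalar $u_1+u_2\ge 0$ through every layer by identity weights. This is exact because $\rho(z)=z$ for $z\ge 0$, and it costs one extra neuron per layer, so the total width is $7$.

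\textbf{Step 4 (output layer).} The final affine map is
\begin{equation*}
(\tilde p,\,s)\;\mapsto\; 4\sA^2\,\tilde p \;-\; 2\sA^2\, s \;+\; \sA^2 .
\end{equation*}
All of its weights and its bias are bounded by $4\sA^2$.

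\textbf{Error.} The only approximation error comes from Step~2, so the total error is
\begin{equation*}
\bigl|\sN_\sigma(\x|\btheta_{\times,\sA})-x_1x_2\bigr| \;\le\; 4\sA^2\,2^{-K'} \;\le\; 2^{-K}
\end{equation*}
by the choice of $K'$.

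\textbf{Depth count.} The depth is $1+(K'+4)+1\le K+2\log_2\sA+9$, which accounts for the rounding in $K'$. The weight bound $|\btheta_{\times,\sA}|_\infty\le \max\{1,4\sA^2\}=4\sA^2$ holds because $\sA>1$.

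The main point requiring care is this bookkeeping. The identity channel must stay nonnegative at every layer, which it does since $u_1+u_2\in[0,2]$. The two sub-networks must also be synchronized in depth; I would do this by padding the identity channel to exactly the depth of $\btheta_{\times}$ rather than padding $\btheta_{\times}$. Finally, the large constant $4\sA^2$ should appear only in the last layer, so that it does not amplify the approximation error beyond the factor already absorbed into $K'$.
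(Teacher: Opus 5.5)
Your proposal is correct and follows essentially the same route as the paper. Both arguments affinely rescale the inputs, apply the unit-square multiplier of \Cref{lem:basic ReLU}\ref{lem: ReLU prod p} with $K$ shifted by about $2+2\log_2\sA$, carry the linear term in a seventh identity channel, and undo the scaling in a final layer with weights of size $4\sA^2$. Two details in your version are more careful than the paper's proof: your rescaling $u_i=(x_i+\sA)/(2\sA)$ lands in $[0,1]$, whereas the paper's shift $x_i/(2\sA)+1$ lands in $[1/2,3/2]$, outside the unit square where that multiplier is guaranteed accurate; and your ceiling in $K'$ keeps the depth an integer.
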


\begin{proof}[Proof of Lemma \ref{lem:basic ReLU A}\ref{lem: ReLU prod A}]
    Observe that 
    \begin{equation}\label{eq: t1}
        x_1 x_2 = 4 A^2 \del{\frac{x_1}{2A} + 1} \del{\frac{x_2}{2A} + 1} - 4A^2 - 2A(x_1 + x_2).
    \end{equation}
    Denote $\btheta^{(1)} \in \bTheta_{2,2}(0,2)$ as a network where \(|\btheta^{(1)}|_\infty \leq \max\left\{0.5 A^{-1}, 1\right\}\), and there are no deep layers, computing the transformation \((x_1, x_2) \mapsto \left(\frac{x_1}{2A} + 1, \frac{x_2}{2A} + 1\right)\). 
    
    Following from Lemma \ref{lem:basic ReLU}\ref{lem: ReLU prod p}, the network 
    $$
    \sN_\sigma \del{\sN_\sigma (\x| \btheta^{(1)})|\btheta_{\times}}
    $$
    with $\btheta_{\times} \in \bTheta_{2,1} \del{K + 4,6}$ and $\abs{\btheta_{\times}}_\infty \le 1$, approximates $\del{\frac{x_1}{2A} + 1} \del{\frac{x_2}{2A} +1}$ up to a uniform error of $1/2^K$. 
    
    We increase the width by one unit to have the affine computation $(x_1, x_2) \mapsto 1 + \tfrac{(x_1 + x_2)}{2A}$ which is positive.  Finally, to perform the remaining linear transform as specified in \eqref{eq: t1}, we add one deep layer at end (right most side) for affine computation $(a, b) \mapsto 4A^2 (a -b)$. Let $\btheta_{\times,A}$ denote this constructed network. We can verify that
    $$
        \sup_{\x\in[-A,A]^2}\abs{\sN_\sigma(\x|\btheta_{\times,A})-x_1x_2}\le \frac{4 A^2}{2^K}
    $$
    with $\btheta_{\times,A}\in\Theta_{2,1}\del{K + 7, 7}$ and $\abs{\btheta_{\times,A}}_\infty \le 4 A^2$. The result follows by redefining the constant $K = K + 2 + 2\log_2(\sA)$.
\end{proof}

\section{Auxiliary results}

\begin{lemma}[Bernstein Inequality]\label{lemma: Bernstein}
Let $\{X_j\}_{j\ge1}$ be sequence of centered independent random variables. Suppose $|X_j| \le a$, for all $j \ge 1$, and $n^{-1}\var\lft( \sum_{j=1}^n X_j \rgt) \le \sigma^2$. Then
$$
\bbP\lft[ |\overline{X}_n| \ge t \rgt] \le 2 \, e^{\frac{-n\,t^2}{2\lft( \sigma^2 + \frac{at}{3}\rgt)}}
$$
\end{lemma}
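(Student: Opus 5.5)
We prove the Bernstein inequality (\Cref{lemma: Bernstein}) by the Chernoff exponential-moment method. The plan is to treat the two tails separately and add them, which gives the factor $2$.

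For the upper tail, fix $\lambda>0$. Markov's inequality applied to $e^{\lambda\sum_j X_j}$, together with independence, gives
\begin{equation*}
\bbP\lft[\overline{X}_n\ge t\rgt]\le e^{-\lambda n t}\prod_{j=1}^n \bE\lft[e^{\lambda X_j}\rgt].
\end{equation*}
Each factor is controlled by a moment bound. Because $\bE X_j=0$ and $|X_j|\le a$, we have $\bE|X_j|^k\le a^{k-2}\bE X_j^2$ for $k\ge2$, so
\begin{equation*}
\bE e^{\lambda X_j}\le 1+\frac{\lambda^2\bE X_j^2}{2}\sum_{k\ge2}\frac{2(\lambda a)^{k-2}}{k!}.
\end{equation*}
Since $k!\ge 2\cdot 3^{k-2}$, the series is at most $\sum_{m\ge0}(\lambda a/3)^m=(1-\lambda a/3)^{-1}$ whenever $\lambda a<3$. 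Using $1+x\le e^x$ then gives
\begin{equation*}
\bE e^{\lambda X_j}\le \exp\lft(\frac{\lambda^2\bE X_j^2}{2(1-\lambda a/3)}\rgt).
\end{equation*}

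Next we aggregate over $j$. Independence gives $\sum_j\bE X_j^2=\var(\sum_j X_j)\le n\sigma^2$, so the upper tail is bounded by
\begin{equation*}
\exp\lft(-\lambda n t+\frac{n\sigma^2\lambda^2}{2(1-\lambda a/3)}\rgt).
\end{equation*}
Choose $\lambda=t/(\sigma^2+at/3)$; this satisfies $\lambda a<3$. With this choice $1-\lambda a/3=\sigma^2/(\sigma^2+at/3)$, and the exponent simplifies to
\begin{equation*}
-\lambda n t+\frac{n\lambda t}{2}=-\frac{nt^2}{2(\sigma^2+at/3)}.
\end{equation*}

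The same argument applied to $-X_j$, which satisfies identical hypotheses, bounds the lower tail by the same quantity. Adding the two bounds yields the stated inequality.

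The only step needing care is the series bound, namely the factorial comparison $k!\ge2\cdot3^{k-2}$. Everything else is routine algebra.
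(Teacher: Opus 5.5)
Your proof is correct. The paper gives no proof of this lemma: it lists it among the auxiliary results as the standard Bernstein inequality. So there is nothing to compare against, and your Chernoff/moment-generating-function derivation is the textbook argument that fills in that citation.

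One small point needs a line. Your claim that $\lambda = t/(\sigma^2+at/3)$ satisfies $\lambda a<3$ requires $\sigma^2>0$ (and $t>0$, so that $\lambda>0$). If $\sigma^2=0$, then $\lambda a=3$ exactly and the geometric series bound breaks down. In that case, however, $\sum_j \mathbb{E}X_j^2=0$ forces every $X_j=0$ almost surely. The left-hand side then vanishes for every $t>0$, and for $t=0$ the right-hand side equals $2$, so the inequality holds trivially. With this degenerate case set aside, every step checks out:
\begin{itemize}
\item the factorial comparison $k!\ge 2\cdot 3^{k-2}$;
\item the identity $\sum_j \mathbb{E}X_j^2$ equals the variance of $\sum_j X_j$, which uses independence and centering;
\item the simplification of the exponent to $-nt^2/\bigl(2(\sigma^2+at/3)\bigr)$;
\item the symmetric treatment of the lower tail.
\end{itemize}
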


\begin{lemma}[Gaussian $\ell_2^2$ moment on an $\ell_\infty$ tail event]\label{lemma: tail G}
Let $Z=(Z_1,\ldots,Z_d)\sim \mathtt N(0,\bI_d)$. For any $t>0$,
\[
\mathbb E\!\left[\|Z\|_2^2\,\one_{\{\|Z\|_\infty\ge t\}}\right]
\;\le\;
2\varphi(t)\left(dt+\frac{d^2}{t}\right),
\]
where $\varphi(t)=(2\pi)^{-1/2}e^{-t^2/2}$ is the standard normal density.
\end{lemma}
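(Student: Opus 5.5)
The plan is to reduce the $\ell_\infty$ tail event to coordinatewise events with a union bound, then compute each resulting one-dimensional Gaussian expectation exactly.

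First, since $\{\|Z\|_\infty\ge t\}=\bigcup_{i=1}^d\{|Z_i|\ge t\}$, we have $\one_{\{\|Z\|_\infty\ge t\}}\le\sum_{i=1}^d\one_{\{|Z_i|\ge t\}}$. Writing $\|Z\|_2^2=\sum_{j=1}^d Z_j^2$ and using nonnegativity gives
\begin{equation*}
\mathbb E\!\left[\|Z\|_2^2\,\one_{\{\|Z\|_\infty\ge t\}}\right]\le\sum_{i=1}^d\sum_{j=1}^d \mathbb E\!\left[Z_j^2\,\one_{\{|Z_i|\ge t\}}\right].
\end{equation*}
We then split this double sum into diagonal terms ($j=i$) and off-diagonal terms ($j\neq i$).

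For the diagonal terms, symmetry and integration by parts, using $\varphi'(z)=-z\varphi(z)$, give
\begin{equation*}
\mathbb E[Z_i^2\one_{\{|Z_i|\ge t\}}]=2\int_t^\infty z^2\varphi(z)\,dz=2\bigl(t\varphi(t)+\bar\Phi(t)\bigr),
\end{equation*}
where $\bar\Phi$ is the standard normal upper tail. For the off-diagonal terms, independence of $Z_i$ and $Z_j$ gives
\begin{equation*}
\mathbb E[Z_j^2\one_{\{|Z_i|\ge t\}}]=\mathbb E[Z_j^2]\,\bP(|Z_i|\ge t)=2\bar\Phi(t).
\end{equation*}

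The only nontrivial input is the Mills-ratio bound $\bar\Phi(t)\le\varphi(t)/t$ for $t>0$. It follows from $\int_t^\infty\varphi(z)\,dz\le\int_t^\infty (z/t)\varphi(z)\,dz=\varphi(t)/t$. Substituting this, each diagonal term is at most $2\varphi(t)(t+1/t)$, and each of the $d(d-1)$ off-diagonal terms is at most $2\varphi(t)/t$. Summing over all pairs yields
\begin{equation*}
2\varphi(t)\Bigl(dt+\tfrac{d}{t}+\tfrac{d(d-1)}{t}\Bigr)=2\varphi(t)\Bigl(dt+\tfrac{d^2}{t}\Bigr),
\end{equation*}
which is exactly the claimed bound. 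No step is a real obstacle; the only care needed is in the bookkeeping of the $d$ diagonal versus $d(d-1)$ off-diagonal contributions, so that the constants combine to $d^2$.
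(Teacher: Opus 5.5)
Your proposal is correct and follows the paper's proof essentially step for step: union bound on the indicator, split into the $d$ diagonal terms and the $d(d-1)$ off-diagonal terms (handled by independence), integration by parts for $\mathbb E[W^2\one_{\{|W|\ge t\}}]$, and the Mills-ratio bound $\bar\Phi(t)\le\varphi(t)/t$. Your bookkeeping also correctly shows that the final combination to $d^2/t$ is an equality, where the paper writes $\le$.
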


\begin{proof}
Let $A:=\{\|Z\|_\infty\ge t\}=\bigcup_{i=1}^d\{|Z_i|\ge t\}$. Then
\[
\one_A \le \sum_{i=1}^d \one_{\{|Z_i|\ge t\}},
\]
and hence, by linearity and nonnegativity,
\begin{align*}
\mathbb E\!\left[\|Z\|_2^2\,\one_A\right]
&\le \sum_{i=1}^d \mathbb E\!\left[\|Z\|_2^2\,\one_{\{|Z_i|\ge t\}}\right].
\end{align*}
Fix $i\in\{1,\ldots,d\}$. Using $\|Z\|_2^2=\sum_{j=1}^d Z_j^2$ and independence,
\begin{align*}
\mathbb E\!\left[\|Z\|_2^2\,\one_{\{|Z_i|\ge t\}}\right]
&=
\mathbb E\!\left[Z_i^2\,\one_{\{|Z_i|\ge t\}}\right]
+\sum_{j\neq i}\mathbb E\!\left[Z_j^2\,\one_{\{|Z_i|\ge t\}}\right] \\
&=
\mathbb E\!\left[Z_i^2\,\one_{\{|Z_i|\ge t\}}\right]
+\sum_{j\neq i}\mathbb E[Z_j^2]\;\mathbb P(|Z_i|\ge t) \\
&=
\mathbb E\!\left[Z_i^2\,\one_{\{|Z_i|\ge t\}}\right]
+(d-1)\,\mathbb P(|Z_i|\ge t).
\end{align*}
Summing over $i$ yields
\begin{align}
\mathbb E\!\left[\|Z\|_2^2\,\one_A\right]
&\le
\sum_{i=1}^d \mathbb E\!\left[Z_i^2\,\one_{\{|Z_i|\ge t\}}\right]
+\sum_{i=1}^d (d-1)\,\mathbb P(|Z_i|\ge t) \notag\\
&=
d\,\mathbb E\!\left[W^2\,\one_{\{|W|\ge t\}}\right]
+d(d-1)\,\mathbb P(|W|\ge t),
\label{eq:split}
\end{align}
where $W\sim\mathtt N(0,1)$. We now bound the one-dimensional terms. Using symmetry and integration by parts,
\[
\mathbb E\!\left[W^2\,\one_{\{|W|\ge t\}}\right]
=2\int_t^\infty x^2\varphi(x)\,dx
=2\big(t\varphi(t)+(1-\Phi(t))\big),
\]
where $\Phi$ is the standard normal cdf. Moreover, the standard tail bound
$1-\Phi(t)\le \varphi(t)/t$ implies
\[
\mathbb E\!\left[W^2\,\one_{\{|W|\ge t\}}\right]
\le 2\varphi(t)\left(t+\frac{1}{t}\right),
\qquad
\mathbb P(|W|\ge t)=2(1-\Phi(t))\le \frac{2\varphi(t)}{t}.
\]
Plugging these bounds into \eqref{eq:split} gives
\begin{align*}
\mathbb E\!\left[\|Z\|_2^2\,\one_A\right]
&\le
d\cdot 2\varphi(t)\left(t+\frac{1}{t}\right)
+d(d-1)\cdot \frac{2\varphi(t)}{t}
\;\le\;
2\varphi(t)\left(dt+\frac{d^2}{t}\right),
\end{align*}
which proves the claim.
\end{proof}

\end{document}